\documentclass[twoside,11pt]{article}
\usepackage[nohyperref,preprint]{jmlr2e}

\usepackage{lastpage}
\jmlrheading{23}{2026}{1-\pageref{LastPage}}{1/21; Revised 5/22}{9/22}{21-0000}{Chen and Thi\'{e}ry}

\ShortHeadings{Flow Ensemble Filter}{Chen and Thi\'{e}ry}
\firstpageno{1}

\usepackage[hypertexnames=false]{hyperref}

\graphicspath{{./}}

\usepackage{amsfonts, amsmath, amssymb}
\usepackage[
        nameinlink,
    ]{cleveref}
\usepackage{natbib}
\usepackage[dvipsnames]{xcolor}
\usepackage{colortbl}
\usepackage{mathtools,empheq}
\usepackage{algorithm,algorithmic}
\usepackage{bbm,dsfont}

\makeatletter
\@ifundefined{theorem}{\newtheorem{theorem}{Theorem}}{}
\@ifundefined{lemma}{\newtheorem{lemma}[theorem]{Lemma}}{}
\@ifundefined{proposition}{\newtheorem{proposition}[theorem]{Proposition}}{}
\@ifundefined{corollary}{\newtheorem{corollary}[theorem]{Corollary}}{}
\@ifundefined{remark}{\newtheorem{remark}[theorem]{Remark}}{}
\@ifundefined{assumption}{\newtheorem{assumption}[theorem]{Assumption}}{}
\@ifundefined{proof}{%
  \providecommand{\BlackBox}{\rule{1.5ex}{1.5ex}}
  \newenvironment{proof}{\par\noindent{\bf Proof\ }}{\hfill\BlackBox\\[2mm]}
}{}

\newcommand{\enflow@retype}[1]{%
  \expandafter\let\csname enflow@orig@#1\expandafter\endcsname\csname #1\endcsname
  \expandafter\let\csname enflow@origend@#1\expandafter\endcsname\csname end#1\endcsname
  \renewenvironment{#1}[1][]{%
    \crefalias{theorem}{#1}%
    \def\enflow@thmtitle{##1}%
    \ifx\enflow@thmtitle\@empty
      \csname enflow@orig@#1\endcsname
    \else
      \csname enflow@orig@#1\endcsname[##1]%
    \fi
  }{\csname enflow@origend@#1\endcsname}%
}
\enflow@retype{lemma}
\enflow@retype{proposition}
\enflow@retype{corollary}
\enflow@retype{remark}
\enflow@retype{assumption}
\makeatother

\crefname{theorem}{Theorem}{Theorems}
\Crefname{theorem}{Theorem}{Theorems}
\crefname{lemma}{Lemma}{Lemmas}
\Crefname{lemma}{Lemma}{Lemmas}
\crefname{proposition}{Proposition}{Propositions}
\Crefname{proposition}{Proposition}{Propositions}
\crefname{corollary}{Corollary}{Corollaries}
\Crefname{corollary}{Corollary}{Corollaries}
\crefname{assumption}{Assumption}{Assumptions}
\Crefname{assumption}{Assumption}{Assumptions}
\crefname{remark}{Remark}{Remarks}
\Crefname{remark}{Remark}{Remarks}
\crefname{remark}{Remark}{Remarks}
\Crefname{remark}{Remark}{Remarks}
\usepackage{graphicx,array,tabularray,multirow,geometry,booktabs}
\usepackage{longtable}
\usepackage{placeins}

\newcommand{\papertitle}{Improving Ensemble Filters with Flow Matching}
\newcommand{\paperauthorone}{Haoyuan Chen}
\newcommand{\paperauthoroneemail}{hchen@nus.edu.sg}
\newcommand{\paperauthortwo}{Alexandre Thi\'{e}ry}
\newcommand{\paperauthortwoemail}{a.h.thiery@nus.edu.sg}
\newcommand{\paperaffiliation}{Department of Statistics and Data Science\\National University of Singapore\\6 Science Drive 2, Singapore 117546}

\newcommand{\algref}[1]{\hfill\textcolor{blue}{\(\triangleright\) #1}}

\definecolor{figgray}{HTML}{808080}
\definecolor{figblue}{HTML}{2E75B6}
\definecolor{figorange}{HTML}{ED7D31}
\definecolor{figtruth}{HTML}{A02334}
\newcommand{\mkfcst}{\textcolor{figgray}{$\bm{\times}$}}
\newcommand{\mksrc}{\textcolor{figblue}{$\bullet$}}
\newcommand{\mkend}{\textcolor{figorange}{$\bullet$}}
\newcommand{\mktruth}{\textcolor{figtruth}{$\star$}}

\newcommand{\bm}[1]{\boldsymbol{#1}}

\DeclareMathOperator*{\argmin}{arg\,min} 

\newcommand{\makebfcommands}[1]{%
  \forcsvlist{\makebfcommand}{#1}%
}

\newcommand{\makebfcommand}[1]{%
  \expandafter\newcommand\csname bf#1\endcsname{\mathbf{#1}}%
}

\makebfcommands{a,b,c,d,e,f,g,h,i,j,k,l,m,n,o,p,q,r,s,t,u,v,w,x,y,z,A,B,C,D,E,F,G,H,I,J,K,L,M,N,O,P,Q,R,S,T,U,V,W,X,Y,Z}

\newcommand{\makecalcommands}[1]{%
  \forcsvlist{\makecalcommand}{#1}%
}

\newcommand{\makecalcommand}[1]{%
  \expandafter\newcommand\csname cal#1\endcsname{\mathcal{#1}}%
}

\makecalcommands{A,B,C,D,E,F,G,H,I,J,K,L,M,N,O,P,Q,R,S,T,U,V,W,X,Y,Z}

\newcommand{\makebfgreek}[1]{%
  \forcsvlist{\makebfgreekletter}{#1}%
}

\newcommand{\makebfgreekletter}[1]{%
  \expandafter\newcommand\csname bf#1\endcsname{\boldsymbol{\csname #1\endcsname}}%
}

\makebfgreek{alpha,beta,gamma,delta,epsilon,varepsilon,zeta,eta,theta,vartheta,iota,kappa,lambda,mu,nu,xi,pi,varpi,rho,varrho,sigma,varsigma,tau,upsilon,phi,varphi,chi,psi,omega,Gamma,Delta,Theta,Lambda,Xi,Pi,Sigma,Upsilon,Phi,Psi,Omega}

\usepackage{enumitem}
\setlistdepth{9} 

\setlist[itemize,1]{label=\textbullet}
\setlist[itemize,2]{label=\textendash}
\setlist[itemize,3]{label=\textasteriskcentered}
\setlist[itemize,4]{label=\textperiodcentered}
\setlist[itemize,5]{label=$\diamond$}
\setlist[itemize,6]{label=$\star$}
\setlist[itemize,7]{label=$\triangleright$}
\setlist[itemize,8]{label=$\circ$}
\setlist[itemize,9]{label=$\cdot$}

\setlist[enumerate,1]{label=\arabic*.}
\setlist[enumerate,2]{label=(\alph*)}
\setlist[enumerate,3]{label=\roman*.}
\setlist[enumerate,4]{label=\Alph*.}
\setlist[enumerate,5]{label=\arabic*)}
\setlist[enumerate,6]{label=\alph*)}
\setlist[enumerate,7]{label=(\roman*)}
\setlist[enumerate,8]{label=(\Alph*)}
\setlist[enumerate,9]{label=\arabic*:}

\setlist[description]{font=\normalfont\bfseries}

\begin{document}
\title{\papertitle}
\author{\name \paperauthorone \email \paperauthoroneemail \\       
\addr \paperaffiliation       
\AND       
\name \paperauthortwo \email \paperauthortwoemail \\       
\addr \paperaffiliation}
\editor{My editor}
\maketitle
\begin{abstract}%
Data assimilation estimates a dynamical state from partial and noisy
observations. Classical ensemble filters are efficient but 
restrict analysis updates through finite sample covariance 
and affine Gaussian distribution. 
We introduce the Flow Ensemble Filter (FlowEF), 
which uses conditional flow matching to transport 
the forecast ensemble from a classical baseline filter 
to an analysis ensemble. 
FlowEF uses a localized Gaussian source during training, 
transports forecast ensemble members from a baseline filter at deployment, 
and conditions its velocity field on ensembles from that baseline filter and the observation. 
The proposed model therefore learns a nonlinear update while 
mapping each baseline ensemble independently. 
For sparsely observed dynamical systems, 
FlowEF improves both deterministic and probabilistic metrics 
over all four classical ensemble filters. 
It also achieves the best performance 
among the state-of-the-art generative data assimilation models.
\end{abstract}

\begin{keywords}
  Data assimilation, Flow matching, Ensemble filtering
\end{keywords}

\section{Introduction}
\label{sec:introduction}

Data assimilation (DA) estimates the evolving state of a dynamical system from a numerical model with partial and noisy observations. At each assimilation time,
the forecast distribution is updated by the new observation to form an analysis
distribution. In geophysical systems, the state dimension is
large, observations cover only part of the state, and exact filtering updates
are not available in closed form \citep{carrassi2018review}.
Practical methods therefore work with finite representations of uncertainty,
and the output of interest is often an analysis ensemble rather than a single
state estimate.

Two families of methods are widely used in practical applications. First, Ensemble DA methods use a finite collection of forecast particles to represent predictive uncertainty and update this collection after each new observation. For example, Ensemble Kalman filters (EnKFs) \citep{evensen1994sequential} and
their variants \citep{bishop2001adaptive,sakov2008deterministic,hunt2007efficient} provide computationally tractable analysis
updates in high-dimensional systems.
Secondly, variational methods instead define the
analysis through prior-plus-observation objectives and covariance models \citep{lorenc1986analysis,ledimet1986variational}. However, the majority of these algorithms are fundamentally based on locally linear or Gaussian assumptions, rendering them inaccurate when these conditions are not met.

Recent learned DA methods relax these restrictions by fitting
parts of the analysis rule to empirical or simulated data. Some keep the classical update and learn a
correction, gain, localization rule, or deterministic analysis map
\citep{revach2022kalmannet}.
Others define a stochastic analysis rule through a conditional score,
transport map, flow, or related sampler \citep{rozet2023sda, bao2024ensf,aljarrah2024otf,wang2025fbf, chen2025flowdas}.

For stochastic learned DA, a key design choice is how the current forecast ensemble enters the analysis update. It can be supplied as conditioning information to a learned sampler, or it can also be used to construct the distribution from which the sampler is initialized. In the latter case, the initial particles already encode selected information about the current forecast uncertainty before the learned transformation is applied. This provides a natural way to anchor the learned analysis update to the forecast ensemble available at each assimilation time.

We propose the Flow Ensemble Filter (FlowEF), a learned nonlinear analysis update that augments a classical ensemble filter. At each assimilation time, FlowEF constructs a source distribution from the current forecast ensemble and transports particles from this source to an analysis ensemble through a velocity field learned by conditional flow matching \citep{lipman2023flow}. The velocity field is conditioned on the forecast ensemble, the analysis ensemble obtained from the classical filter, and the current observation. The classical filter therefore provides a forecast-informed reference update, while the learned transport allows the resulting analysis update to depart from the affine and Gaussian structure underlying classical ensemble methods.

We make four contributions. First, we formulate the ensemble analysis step as a conditional transport problem with a source distribution constructed from the current forecast ensemble and with the classical-filter analysis incorporated as additional conditioning information. Second, we introduce a forecast-informed Gaussian source with localized covariance and an observation residual that is updated along the transport. Third, we establish a Wasserstein error bound that separates the contributions of the principal approximation errors in the proposed analysis update. Fourth, across sparsely observed Lorenz--96, Kuramoto--Sivashinsky, and Kolmogorov flow experiments, FlowEF yields lower RMSE and marginal CRPS than tuned classical filters and, where direct comparisons are available, the learned DA baselines considered here.

\section{Preliminaries}
\label{sec:preliminary}

\subsection{State-space model}
\label{sec:state-space-model}

We consider a discrete-time state-space model
\begin{align}
    \bm{x}_t &= f(\bm{x}_{t-1}) + \bm{\eta}_t,
    \qquad
    \bm{y}_t = h(\bm{x}_t) + \bm{\epsilon}_t.
    \label{eq:ssm}
\end{align}
where $\bm{x}_t\in\mathbb{R}^{d_x}$ is the latent state, $\bm{y}_t\in\mathbb{R}^{d_y}$ is the observation, $f$ is the dynamics, and $h$ is the observation operator. Unless otherwise stated, the noise terms $\bm{\eta}_t$ and $\bm{\epsilon}_t$ are additive Gaussian and we write $p_t(\bm{y}_t\mid \bm{x})$ for the observation likelihood.

\subsection{Bayesian forecast and analysis laws}
\label{sec:bayesian-filtering}

We denote $\mathbb{P}$ as the underlying probability measure
and $\mathbb{E}$ as expectation with respect to it. At time $t$, the forecast and analysis probability distributions are:
\begin{align}
    \pi_t^f(d\bm{x})
    =
    \mathbb{P}(\bm{x}_t\in d\bm{x}\mid \bm{y}_{1:t-1}),
    \qquad
    \pi_t^a(d\bm{x})
    =
    \mathbb{P}(\bm{x}_t\in d\bm{x}\mid \bm{y}_{1:t}).
\end{align}
When densities exist, we use the same notation for their densities. Bayes' rule then gives
\begin{align}
    \pi_t^a(\bm{x})
    =
    \frac{
        p_t(\bm{y}_t\mid \bm{x})\pi_t^f(\bm{x})
    }{
        \int p_t(\bm{y}_t\mid \bm{x}')\pi_t^f(\bm{x}')\,d\bm{x}'
    }.
    \label{eq:bayes-analysis}
\end{align}
In the linear-Gaussian setting, the forecast law is characterized by its mean and covariance; outside this setting, these moments generally do not determine the predictive distribution.

\subsection{Ensembles and analysis rules}
\label{sec:ensemble-da}

Ensemble filters approximate forecast and analysis laws by finite particle
collections \citep{evensen2003ensemble}. Let the analysis ensemble from
the previous time $t-1$ be
\begin{align}
    \bm{X}_{t-1}^a
    =
    \big[
        \bm{x}_{t-1}^{a,(1)},\ldots,\bm{x}_{t-1}^{a,(N)}
    \big]
    \in\mathbb{R}^{d_x\times N},
\end{align}
The forecast step propagates each member through the dynamics,
\begin{align}
    \bm{x}_t^{f,(i)}
    =
    f(\bm{x}_{t-1}^{a,(i)})+\bm{\eta}_t^{(i)},
    \quad i=1,\ldots,N,
    \quad
    \bm{X}_t^f
    =
    \big[
        \bm{x}_t^{f,(1)},\ldots,\bm{x}_t^{f,(N)}
    \big]
    \in\mathbb{R}^{d_x\times N},
    \label{eq:forecast-step}
\end{align}
The corresponding empirical forecast law of the filter is
$\widehat{\pi}_t^{f}
=
\frac{1}{N}\sum_{i=1}^N \delta_{\bm{x}_t^{f,(i)}}$.
The empirical mean and covariance of the forecast ensemble are
\begin{align}
    \bar{\bm{x}}_t^f
    =
    \frac{1}{N}\sum_{i=1}^N \bm{x}_t^{f,(i)},
    \qquad
    \bm{P}_t^f
    =
    \frac{1}{N-1}
    \sum_{i=1}^N
    (\bm{x}_t^{f,(i)}-\bar{\bm{x}}_t^f)
    (\bm{x}_t^{f,(i)}-\bar{\bm{x}}_t^f)^\top,
    \label{eq:forecast-moments}
\end{align}
with $\bar{\bm{x}}_t^a$ and $\bm{P}_t^a$ defined analogously. Ensemble Kalman methods base their analysis updates on these moments, typically with localization and inflation in high-dimensional settings. The full ensemble, however, retains information not captured by its first two empirical moments.

After observing $\bm{y}_t$, the analysis step returns an analysis ensemble
\begin{align}
    \bm{X}_t^a
    =
    \big[
        \bm{x}_t^{a,(1)},\ldots,\bm{x}_t^{a,(N)}
    \big]
    \in\mathbb{R}^{d_x\times N},
    \qquad
    \widehat{\pi}_t^{a}
    =
    \frac{1}{N}\sum_{i=1}^N \delta_{\bm{x}_t^{a,(i)}}.
\end{align}
Sequential ensemble filtering alternates the forecast and analysis steps,
\begin{align}
    \bm{X}_{t-1}^a
    \xrightarrow{\;\mathrm{forecast}\;}
    \bm{X}_t^f
    \xrightarrow{\;\mathrm{analysis}\;}
    \bm{X}_t^a .
\end{align}
Different ensemble methods share this recursion but differ in the analysis rule. We write the finite ensemble analysis step as
\begin{align}
    \bm{X}_t^a
    \sim
    \mathcal{A}(\cdot\mid \bm{X}_t^f,\bm{y}_t),
    \label{eq:analysis-operator}
\end{align}
where $\mathcal{A}$ denotes a possibly stochastic analysis rule; deterministic updates are included as a degenerate case. Fixed ingredients such as the dynamics, observation model, localization scheme, and filter hyperparameters are left implicit.
\section{Related work}
\label{sec:analysis-rules-related-methods}
A DA method specifies how the forecast ensemble and the new
observation are turned into an analysis ensemble.

\paragraph{Classical ensemble filters}
Classical ensemble filters construct the analysis rule $\mathcal{A}(\cdot\mid \bm{X}_t^f,\bm{y}_t)$ from forecast moments and anomalies.
Under the local linear-Gaussian analysis model
\begin{align}
    \bm{y}_t
    =
    \bm{H}_t\bm{x}_t+\bm{\epsilon}_t,
    \qquad
    \bm{\epsilon}_t\sim\mathcal{N}(\bm{0},\bm{R}),
    \label{eq:linear-gaussian-analysis-model}
\end{align}
a Gaussian forecast approximation is updated by an affine Kalman map
\citep{kalman1960new}. Ensemble Kalman filters (EnKFs) replace the forecast mean and
covariance in this map by ensemble estimates
\citep{evensen1994sequential}. For nonlinear observation operators, ensemble implementations typically use sample covariances between the state and predicted observations, or local linearizations of $h$.

The stochastic EnKF \citep{evensen2003ensemble} uses perturbed observations, whereas deterministic variants update the ensemble mean and transform the forecast anomalies. These include the ensemble transform Kalman filter (ETKF) \citep{bishop2001adaptive}, deterministic ensemble Kalman filter (DEnKF) \citep{sakov2008deterministic}, and local ensemble transform Kalman filter (LETKF) \citep{hunt2007efficient}. A representative update is
\begin{align}
    \bm{X}_t^a
    =
    \bar{\bm{x}}_t^a\bm{1}_N^\top
    +
    \bm{A}_t^f\bm{T}_t,
    \qquad
    \bm{A}_t^f
    =
    \bm{X}_t^f-\bar{\bm{x}}_t^f\bm{1}_N^\top ,
    \label{eq:square-root-update-v2}
\end{align}
where $\bm{T}_t$ is an ensemble-space transform whose construction depends on the filter.
In spatially extended systems, localization and inflation are commonly used to stabilize finite-ensemble covariance estimates: localization suppresses poorly estimated long-range dependencies or performs local analyses \citep{gaspari1999construction}, while inflation mitigates ensemble underdispersion \citep{anderson2009spatially}.

\paragraph{Variational and hybrid methods}
Variational methods define the analysis through an optimization problem rather than a direct ensemble transform. 3DVar \citep{lorenc1986analysis} and 4DVar
\citep{ledimet1986variational}
minimize prior-plus-observation mismatch over states or trajectories.
Ensemble and hybrid variants use ensemble subspaces or ensemble covariance estimates, often in combination with static background covariances.

\paragraph{Deterministic learned analysis rules}
Many ML-enhanced DA methods retain the conventional forecast--analysis structure while learning part of the analysis update. Learned components include model-error corrections, gain estimators
\citep{revach2022kalmannet,chen2022autodifferentiable}, localization rules, and residual corrections. Other approaches learn the analysis map more directly from forecast and observational information
\citep{fablet2021learning,revach2022kalmannet,boudier2023data,bach2025learning}.
Conditional on their inputs, these updates are deterministic; uncertainty is represented through the ensemble or distributional quantities they produce rather than through sampling within the analysis map.

\paragraph{Stochastic learned analysis rules}
Stochastic learned analysis rules instead generate an analysis particle from an auxiliary random variable. Let $\bm{z}_{t,0} \sim q_{t,0}$ denote the starting variable and let
$\bm{c}_t=C(\bm{X}_t^f,\bm{y}_t)$ collect the forecast and observational information supplied to the learned update. A generic sampled analysis update can be written as
\begin{align}
    \bm{z}_{t,0}\sim q_{t,0},
    \qquad
    \hat{\bm{x}}_t^a
    =
    T_\theta(\bm{z}_{t,0};\bm{c}_t).
    \label{eq:generic-learned-stochastic-analysis-v2}
\end{align}
Score and diffusion DA methods use this form through conditional scores or guided sampling dynamics. Score-based DA (SDA) \citep{rozet2023sda} and ScoreFilter \citep{bao2024scorefilter} use score-based conditional
sampling, while the ensemble score filter
(EnSF) \citep{bao2024ensf} and iterative EnSF (IEnSF) \citep{zhang2025iensf} estimate the required score information from ensembles without training.
Normalizing-flow filters instead learn invertible conditional transformations \citep{wang2025fbf}.

\paragraph{Transport and flow analysis rules}
Transport and flow methods represent the stochastic update as a map from a source law to an analysis law.
Flow-based methods parameterize this map through a continuous-time dynamics. Introducing pseudo-time $\tau\in[0,1]$, distinct from the assimilation time $t$, let $\bm{z}_{t,\tau}$ satisfy
\begin{align}
    \frac{d\bm{z}_{t,\tau}}{d\tau}
    =
    \bm{v}_\theta(\bm{z}_{t,\tau},\tau;\bm{c}_t),
    \qquad
    \tau\in[0,1],
    \label{eq:generic-transport-ode-v2}
\end{align}
with $\bm{z}_{t,0}\sim q_{t,0}$.
Integrating this dynamics defines
$T_\theta(\bm{z}_{t,0};\bm{c}_t)=\bm{z}_{t,1}$.
Flow matching, stochastic interpolants, and related samplers learn such continuous or amortized transports
\citep{lipman2023flow,transue2025enff,chen2025flowdas,andrae2026daisi}.
Coupling and optimal-transport filters \citep{aljarrah2024otf,aljarrah2025aotf} instead construct an analysis map directly from a coupling or optimality criterion.

An important design choice in stochastic transport methods is the source law. It may be fixed across assimilation times or constructed from the current forecast ensemble:
\begin{align}
    q_{t,0}=q_0
    \qquad\text{versus}\qquad
    q_{t,0}=q_{t,0}(\cdot\mid \bm{X}_t^f).
    \label{eq:starting-law-comparison-v2}
\end{align}
With a fixed source, dependence on the current forecast ensemble enters through the learned transformation, for example through conditioning or forecast-dependent score estimates. A forecast-derived source instead incorporates selected forecast information before the transport begins. The distinction therefore concerns where forecast information enters the stochastic analysis rule, rather than whether the forecast is used.
FlowEF takes the latter approach. It constructs a source distribution from the current forecast ensemble and transports particles through a learned continuous-time flow conditioned on the forecast ensemble, the corresponding baseline analysis ensemble, and the observation. The source families, conditioning variables, and supervised training construction are specified in \Cref{sec:flow-da}.

\section{FlowEF}
\label{sec:flow-da}

FlowEF represents each analysis update as a conditional transport from
forecast-informed starting particles to analysis particles. At assimilation time $t$, a classical ensemble filter baseline $B$ provides the forecast and analysis ensembles $\{ \bm{X}_t^f, \bm{X}_t^a \}$ together with the current observation $\bm{y}_t$. FlowEF defines a starting law from the forecast ensemble,
\begin{align}
    \bm{z}_{t,0}^{(i)}
    \sim
    q_{t,0}(\cdot\mid \bm{X}_t^f),
    \qquad i=1,\ldots,N,
    \label{eq:flowda-starting-particles}
\end{align}
and transports them through a learned pseudo-time dynamics
\begin{align}
    \frac{d\bm{z}_{t,\tau}^{(i)}}{d\tau}
    =
    \bm{v}_\theta(\bm{z}_{t,\tau}^{(i)},\tau;\bm{c}_{t,\tau}),
    \qquad
    \tau\in[0,1],
    \label{eq:flowda-analysis-ode}
\end{align}
where the conditioning variables $\bm{c}_{t,\tau}$ comprises static features $\bm{c}_t^{\mathrm{static}}=C(\bm{X}_t^f,\bm{X}_t^a,\bm{y}_t)$, fixed throughout the transport, and a pseudo-time-dependent dynamic field $\bm{c}_{t,\tau}^{\mathrm{dyn}}$; both are defined in \Cref{sec:cond}.
The transport endpoints $\bm{z}_{t,1}^{(i)}$ form the raw analysis ensemble and are subsequently calibrated by inflation (\Cref{sec:infl}) to obtain the FlowEF analysis particles
$\hat{\bm{x}}_t^{a,(i)}$.
Together, the starting law, conditioning variables, and velocity field define the FlowEF analysis rule
$\mathcal{A}(\cdot\mid\bm{X}_t^f,\bm{X}_t^a,\bm{y}_t)$.
We present the diagram of FlowEF in \Cref{fig:flowef-diagram}
and summarize the algorithm in \Cref{alg:flowda}.
Implementation details are given in \Cref{sec:flowda-implementation-details}.

\begin{figure}[!htb]
\centering
\includegraphics[width=\linewidth]{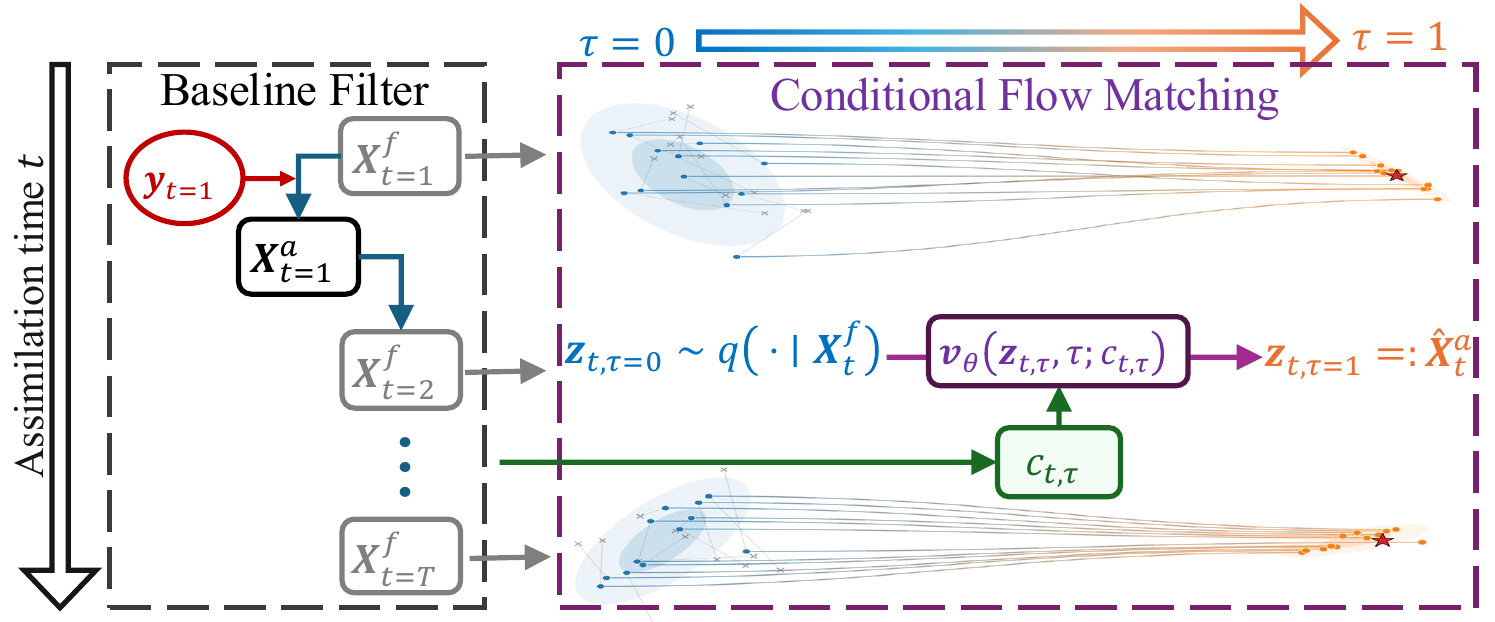}
\caption{\small Schematic diagram of FlowEF. Gray crosses (\mkfcst): forecast members $\bm{x}_t^{f,(i)}$,
who are the columns of $\bm{X}_t^f$. Blue dots (\mksrc): starting particles
$\bm{z}_{t,\tau=0}^{(i)}$ drawn from
the localized Gaussian
$q_{t,0}^{\mathrm{train}}(\cdot\mid\bm{X}_t^f)=\mathcal{N}(\bar{\bm{x}}_t^f,
\bm{C}_{\mathrm{GC}}(r_{\mathrm{loc}})\circ\bm{P}_t^f)$
(\cref{eq:locgauss-dist}), whose covariance contours are the shaded blue
ellipses. Orange dots (\mkend): endpoint particles
$\bm{z}_{t,\tau=1}^{(i)}$, forming the raw analysis ensemble.
Red star (\mktruth): ground truth state $\bm{x}_t^\star$.}
\label{fig:flowef-diagram}
\end{figure}

\subsection{Conditional flow matching}
\label{sec:cfm}

The velocity field is trained by conditional flow matching
\citep[CFM]{lipman2023flow}. Training data consist of simulation tuples
$\{\bm{X}_t^f,\bm{X}_t^a,\bm{y}_t,\bm{x}_t^\star\}$: the baseline filter $B$
provides the baseline forecast and analysis ensembles
$\{ \bm{X}_t^f, \bm{X}_t^a \}$, and the simulator provides
the ground truth state $\bm{x}_t^\star$.
The details are given in \Cref{app:data-splits}.

For a starting particle $\bm{z}_{t,0}\sim q_{t,0}(\cdot\mid\bm{X}_t^f)$ and
$\tau\sim\mathrm{Uniform}(0,1)$, we use the straight-line interpolant:
\citep{mccann1997convexity}:
\begin{align}
    \bm{z}_{t,\tau}^{\mathrm{train}}
    :=
    (1-\tau)\bm{z}_{t,0}
    +
    \tau\bm{x}_t^\star,
    \qquad
    \tau\in[0,1],
    \label{eq:flowda-interpolant}
\end{align}
whose target velocity is
$d\bm{z}_{t,\tau}^{\mathrm{train}}/d\tau
=\bm{x}_t^\star-\bm{z}_{t,0}$.
The CFM objective is
\begin{align}
    \mathcal{L}_{\mathrm{CFM}}(\theta)
    =
    \mathbb{E}
    \left[
    \left\|
    \bm{v}_\theta(\bm{z}_{t,\tau}^{\mathrm{train}},\tau;\bm{c}_{t,\tau})
    -
    (\bm{x}_t^\star-\bm{z}_{t,0})
    \right\|_2^2
    \right],
    \label{eq:cfm-loss}
\end{align}
where the expectation is over training cycles $t \in \mathcal{T}_{\mathrm{train}}$, source samples $\bm{z}_{t,0}\sim q_{t,0}(\cdot\mid\bm{X}_t^f)$ and interpolation
times $\tau \sim \operatorname{Uniform}(0,1)$. In this loss, the dynamic
conditioning $\bm{c}_{t,\tau}^{\mathrm{dyn}}$ is evaluated at
$\bm{z}_{t,\tau}^{\mathrm{train}}$.

At inference, an analysis ensemble is obtained by integrating
\cref{eq:flowda-analysis-ode} from $\tau=0$ to $\tau=1$. We approximate this
ODE with $K$ forward Euler steps,
\begin{align}
    \bm{z}_{t,(k+1)/K}
    =
    \bm{z}_{t,k/K}
    +
    \frac{1}{K}
    \bm{v}_\theta
    \left(
    \bm{z}_{t,k/K},\frac{k}{K};\bm{c}_{t,k/K}
    \right),
    \qquad k = 0, \ldots, K - 1.
    \label{eq:ode-euler}
\end{align}
Applying this transport to the $N$ starting particles $\bm{z}_{t,0}^{(i)} \sim q_{t,0}(\cdot \mid \bm{X}_t^f)$ gives the raw analysis
ensemble $[\bm{z}_{t,1}^{(1)},\ldots,\bm{z}_{t,1}^{(N)}]$. This raw ensemble can then be
calibrated with inflation \cref{eq:flowda-inflation} to produce the analysis
ensemble $\hat{\bm{X}}_t^a=[\hat{\bm{x}}_t^{a,(1)},\ldots,\hat{\bm{x}}_t^{a,(N)}]$.

\subsection{Forecast ensemble starting laws}
\label{sec:sources}

The starting law determines which information from the baseline forecast is
present before the learned transport begins.
Let $\bm{X}_t^f$, $\bar{\bm{x}}_t^f$, and $\bm{P}_t^f$ be the forecast ensemble, mean, and covariance from the baseline filter $B$.
During training, we use the localized Gaussian training source
\begin{align}
    q_{t,0}^{\mathrm{train}}(\cdot\mid\bm{X}_t^f)
    =
    \mathcal{N}\!\left(
        \bar{\bm{x}}_t^f,
        \bm{C}_{\mathrm{GC}}(r_{\mathrm{loc}})\circ\bm{P}_t^f
    \right),
    \label{eq:locgauss-dist}
\end{align}
where $\bm{C}_{\mathrm{GC}}(r_{\mathrm{loc}}) \in \mathbb{R}^{d_x \times d_x}$ is the compact-support
Gaspari--Cohn correlation matrix \citep{gaspari1999construction} with localization radius $r_{\mathrm{loc}}$, and $\circ$
denotes the Hadamard product.
The localization suppresses poorly estimated long-range correlations, as in
standard ensemble filtering \citep{houtekamer2001sequential}.
In our experiments, we reused the localization radius $r_{\mathrm{loc}}$ already tuned for the baseline filter $B$.

At inference, we initialize the $i$-th trajectory with the
corresponding forecast member,
\begin{align}
    \bm{z}_{t,0}^{(i)}=\bm{x}_t^{f,(i)},
    \qquad i=1,\ldots,N.
    \label{eq:ordered-deploy-source}
\end{align}
Thus every forecast member is transported exactly once without resampling.

\subsection{Conditioning variables}
\label{sec:cond}

\paragraph{Static conditioning}
The static conditioning variables are constructed from the baseline forecast
and analysis ensembles and the current observation:
\begin{align}
    \bm{c}_t^{\mathrm{static}}
    =C(\bm{X}_t^f,\bm{X}_t^a,\bm{y}_t)
    =[\bm{c}_t^{f,y},\bm{c}_t^{a,f}]
    \in\mathbb{R}^{d_x\times d_{\mathrm{static}}},
    \label{eq:condition-tensor}
\end{align}
where $\bm{c}_t^{f,y}$ contains forecast--observation features and
$\bm{c}_t^{a,f}$ contains analysis--forecast features.

Under the linear observation model of \cref{eq:linear-gaussian-analysis-model},
the forecast--observation features are constructed from the innovation field
$\tilde{\bm{d}}_t\in\mathbb{R}^{d_x}$
(\cref{eq:static-innovation}) that measures \emph{how much the observations
currently disagree with the forecast}, the
forecast mean $\bar{\bm{x}}_t^f$ and standard deviation $\bm{\sigma}_t^f=\sqrt{\operatorname{diag}(\bm{P}_t^f)}\in\mathbb{R}^{d_x}$
from a baseline filter $B$ (\cref{eq:forecast-moments}),
a normalized observation weight map
$\bm{m}_t\in[0,1]^{d_x}$ (\cref{eq:observation-weight-map}) that marks
\emph{which state coordinates are observed} and \emph{how strongly}.
The analysis--forecast features use the baseline analysis increment
$\bm{\Delta}_t=\bar{\bm{x}}_t^a-\bar{\bm{x}}_t^f$, the corresponding change in
ensemble spread, and the increment relative to the innovation.
Thus, $\bm{c}_t^{f,y}$ describes the observational discrepancy relative to the
forecast, while $\bm{c}_t^{a,f}$ describes the response of the baseline filter.
Conditioning on these channels, the model can learn an update relative to a known, competent update instead of rediscovering classical gain behavior from scratch. This facilitates training and data efficiency.

\paragraph{Dynamic observation conditioning}
\label{sec:obsres}
The static conditioning variables $\bm{c}_t^{\mathrm{static}}$
remain fixed throughout the
transport. To provide information about the current transported state, we
compute the back-projected, whitened observation residual
\begin{align}
    \bm{r}_{t,\tau}
    =
    \bm{H}_t^\top
    \bm{R}^{-1/2}
    \left(\bm{y}_t-\bm{H}_t\bm{z}_{t,\tau}\right)
    \in\mathbb{R}^{d_x}.
    \label{eq:obs-residual-field}
\end{align}
During training, $\bm{z}_{t,\tau}$ is the interpolant
$\bm{z}_{t,\tau}^{\mathrm{train}}$ of \cref{eq:flowda-interpolant}; at
inference, it is the current numerical ODE state $\bm{z}_{t,\tau_k}^{(i)}$
from (\cref{eq:flowda-analysis-ode,eq:ode-euler}).

We use the pseudo-time embedding
\begin{align}
    \bm{\phi}(\tau)
    = [\tau,\sin(2\pi\tau),\cos(2\pi\tau),\sin(4\pi\tau)]
    \in\mathbb{R}^{d_{\phi}},
    \label{eq:time-embedding}
\end{align}
and broadcast it over the state coordinates as
$\bm{\Phi}(\tau)=\bm{1}_{d_x}\bm{\phi}(\tau)^\top\in\mathbb{R}^{d_x\times d_{\phi}}$.
The input to the observation CNN is therefore
$[\bm{r}_{t,\tau},\bm{m}_t,\bm{\Phi}(\tau)]
\in\mathbb{R}^{d_x\times(d_{\phi}+2)}$.
A periodic convolutional neural network maps the residual, observation weights,
and pseudo-time embedding to the dynamic conditioning field:
\begin{align}
    \bm{c}_{t,\tau}^{\mathrm{dyn}}
    =\operatorname{CNN}^{\mathrm{period}}_{\theta_{\mathrm{obs}}}
    \big( [\bm{r}_{t,\tau},\bm{m}_t,\bm{\Phi}(\tau)] \big)
    \in\mathbb{R}^{d_x\times d_{\mathrm{dyn}}}.
    \label{eq:dynamic-conditioning}
\end{align}
Consequently, the conditioning variable $\bm{c}_{t,\tau}^{\mathrm{dyn}}$ encodes the current observation residual together with the stage of the transport. Architectural details are given in
\cref{eq:implementation-observation-input,eq:implementation-observation-cnn}.

\paragraph{Full conditioning}
The full conditioning field is
\begin{align}
    \bm{c}_{t,\tau}
    =\left[\bm{c}_t^{\mathrm{static}},\bm{c}_{t,\tau}^{\mathrm{dyn}}\right]
    \in\mathbb{R}^{d_x\times d_c},
    \qquad d_c=d_{\mathrm{static}}+d_{\mathrm{dyn}}.
    \label{eq:full-conditioning}
\end{align}
The transported particle, pseudo-time embedding, and conditioning field are
then concatenated channelwise and passed to the velocity network:
\begin{align}
    \bm{v}_\theta(\bm{z}_{t,\tau},\tau;\bm{c}_{t,\tau})
    =\operatorname{CNN}^{\mathrm{period}}_{\theta_{\mathrm{field}}}
    \big(
        [\bm{z}_{t,\tau},\bm{\Phi}(\tau),\bm{c}_{t,\tau}]
    \big).
    \label{eq:velocity-cnn}
\end{align}
The full parameter set is $\theta={\theta_{\mathrm{obs}},\theta_{\mathrm{field}}}$ and the network architecture details are given in \cref{eq:implementation-trunk-input,eq:implementation-velocity-cnn}.

\subsection{Training}
\label{sec:train}

Let $\mathcal{T}_{\mathrm{train}}$ denote the set of assimilation
times used for training. Given a classical ensemble filter baseline $B$, the training data are
\begin{align}
    \mathcal{D}_{\mathrm{train}}
    =\left\{
      \{ \bm{X}_t^f,\bm{X}_t^a,\bm{y}_t,\bm{x}_t^\star \}
      :t\in\mathcal{T}_{\mathrm{train}}
    \right\}.
    \label{eq:training-data}
\end{align}
These quantities are generated before training FlowEF. In particular, constructing
the training tuple at time $t$ does not require a FlowEF analysis at time
$t-1$.

For each $t\in\mathcal{T}_{\mathrm{train}}$, we first draw
$\bm{z}_{t,0}\sim q_{t,0}^{\mathrm{train}}(\cdot\mid\bm{X}_t^f)$ and
$\tau\sim\operatorname{Uniform}(0,1)$ , and construct the interpolant in \cref{eq:flowda-interpolant}, whose straight line target velocity is simply
$d\bm{z}_{t,\tau}^{\mathrm{train}}/d\tau
=\bm{x}_t^\star-\bm{z}_{t,0}$.
We train the velocity field by minimizing
\begin{align}
    \mathcal{L}_{\mathrm{train}}(\theta)
    =\frac{1}{|\mathcal{T}_{\mathrm{train}}|}
      \sum_{t\in\mathcal{T}_{\mathrm{train}}}
      \mathbb{E}_{\bm{z}_{t,0},\tau}
      \left[
        \left\lVert
            \bm{v}_\theta(
                \bm{z}_{t,\tau}^{\mathrm{train}},\tau;
                \bm{c}_{t,\tau})
            -(\bm{x}_{t}^\star-\bm{z}_{t,0})
        \right\rVert_2^2
      \right]
    \label{eq:finite-training-loss}
\end{align}
where the expectation is over the training source and interpolation time. We
denote the fitted parameters by $\hat{\theta}=\argmin_\theta \mathcal{L}_{\mathrm{train}}(\theta)$

\subsection{Inference and calibration}
\label{sec:infl}
Let $\mathcal{T}_{\mathrm{test}}$ denote the test assimilation times. Conditional
on the baseline forecast and analysis ensembles, FlowEF applies the analysis rule $\mathcal{A}(\cdot\mid\bm{X}_t^f,\bm{X}_t^a,\bm{y}_t)$ independently to every
$t\in\mathcal{T}_{\mathrm{test}}$, so these
transport calculations can be performed in parallel.

For each $t$, we construct the source ensemble via
$\bm{z}_{t,0}^{(i)}=\bm{x}_t^{f,(i)}$
(\cref{eq:ordered-deploy-source}) and compute the static conditioning tensor
$\bm{c}_t^{\mathrm{static}}$ from \cref{eq:condition-tensor}.
We perform $K$ Euler steps
with $\tau_k:=k/K$. At step $k$, for each $i=1,\ldots,N$,
the current transported particle determines the observation
residual and dynamic conditioning field,
\begin{align}
    \bm{r}_{t,\tau_k}^{(i)}
    =\bm{H}_t^\top\bm{R}^{-1/2}
    \big(\bm{y}_t-\bm{H}_t\bm{z}_{t,\tau_k}^{(i)}\big),
    \quad
    \bm{c}_{t,\tau_k}^{\mathrm{dyn},(i)}
    =\operatorname{CNN}^{\mathrm{period}}_{\hat{\theta}_{\mathrm{obs}}}
    \big([\bm{r}_{t,\tau_k}^{(i)},\bm{m}_t,\bm{\Phi}(\tau_k)]\big).
\end{align}
Combining the dynamic field with the static conditioning gives
$\bm{c}_{t,\tau_k}^{(i)}=[\bm{c}_t^{\mathrm{static}},
\bm{c}_{t,\tau_k}^{\mathrm{dyn},(i)}]$, and the velocity and Euler update in
\cref{eq:ode-euler} are then
\begin{align}
    \bm{v}_{t,\tau_k}^{(i)}
    := \bm{v}_{\hat{\theta}}
      \big(\bm{z}_{t,\tau_k}^{(i)},\tau_k;
      \bm{c}_{t,\tau_k}^{(i)}\big)
    =\operatorname{CNN}^{\mathrm{period}}_{\hat{\theta}_{\mathrm{field}}}
      \big([\bm{z}_{t,\tau_k}^{(i)},\bm{\Phi}(\tau_k),
      \bm{c}_{t,\tau_k}^{(i)}]\big),
    \quad
    \bm{z}_{t,\tau_{k+1}}^{(i)}
    =\bm{z}_{t,\tau_k}^{(i)}+\frac{1}{K}\bm{v}_{t,\tau_k}^{(i)}.
\end{align}
Thus, the dynamic conditioning is recomputed from the current transported
particle before each velocity evaluation.

The ODE transport endpoints $\bm{z}_{t,1}^{(i)}$ form the raw FlowEF analysis
ensemble.
To calibrate its spread, we apply multiplicative inflation about the ensemble
mean:
\begin{align}
    \hat{\bm{x}}_t^{a,(i)}
    =
    \hat{\bar{\bm{x}}}_t^a
    +
    \alpha_{\mathrm{flow}}
    \big(\bm{z}_{t,1}^{(i)}-\hat{\bar{\bm{x}}}_t^a\big),
    \qquad
    \hat{\bar{\bm{x}}}_t^a
    =
    \frac{1}{N}\sum_{i=1}^{N}\bm{z}_{t,1}^{(i)}.
    \label{eq:flowda-inflation}
\end{align}
The inflation factor $\alpha_{\mathrm{flow}}$ for FlowEF is selected on
held-out validation data by minimizing the marginal ensemble CRPS
\citep{gneiting2007strictly}. For the analysis ensemble
$\hat{\bm{X}}_t^{a} = [\hat{\bm{x}}_t^{a,(1)}, \ldots, \hat{\bm{x}}_t^{a,(N)}] \in\mathbb{R}^{d_x \times N}$ and target
$\bm{x}_t^\star\in\mathbb{R}^{d_x}$, we average scalar CRPS over
state coordinates first, then average this score over validation times. The
finite-ensemble CRPS at time $t$ is given by
\begin{align}
    \operatorname{CRPS}_t(\hat{\bm{X}}_t^a,\bm{x}_t^\star)
    =\frac{1}{d_x}\sum_{\ell=1}^{d_x}
    \left[
      \frac{1}{N}\sum_{i=1}^{N}
      \left|\hat{x}_{t,\ell}^{a,(i)}-x_{t,\ell}^\star\right|
      -\frac{1}{2N^2}\sum_{i=1}^{N}\sum_{j=1}^{N}
      \left|\hat{x}_{t,\ell}^{a,(i)}-\hat{x}_{t,\ell}^{a,(j)}\right|
    \right].
    \label{eq:ensemble-crps}
\end{align}
This is the marginal CRPS averaged over state coordinates, rather than a
multivariate energy score.

The baseline inflation factor $\alpha_B$ and localization radius
$r_{\mathrm{loc}}$ are jointly tuned on the validation data before FlowEF
training. During training, we reuse the localization radius $r_{\mathrm{loc}}$
from the baseline filter for FlowEF source distribution.
After training, $\alpha_{\mathrm{flow}}$ is tuned
separately on the validation data.

\subsection{Convergence analysis}
\label{sec:convergence}

The assumptions and proofs for the following results are deferred to \Cref{app:theory}.

\begin{theorem}[FlowEF error bound]
\label{thm:convergence-informal}
For any $t$, let $\mathcal{C}_t^N=[\bm c_t^{\mathrm{static}},\bm o_t]$ and
$\mathcal{C}_t^\infty=[\bm c_t^{\mathrm{static},\infty},\bm o_t]$ be
finite-ensemble and infinite-population conditioning variables, respectively.
Let $\pi_t^a(d\bm{x})=\mathbb{P}(\bm{x}_t^a\in d\bm{x}\mid\bm{y}_{1:t})$
be the exact filtering law and $\widehat{\pi}_t^a=N^{-1}\sum_{i=1}^N
\delta_{\hat{\bm{x}}_t^{a,(i)}}$ be the FlowEF empirical analysis
law. Here $N$ is the ensemble size, $K$ is the number of Euler
steps, and $W_2$ is the $2$-Wasserstein distance. Under the assumptions in
\Cref{app:theory-assumptions},
\begin{align}
    \mathbb{E}\, W_2\big(\widehat{\pi}_t^a,\pi_t^a\big)
    \le
    &\underbrace{B_{c,t}}_{\varepsilon_{\mathrm{bias}}}
    + \underbrace{L_{K,t}\big(\sigma_{c,t}\sqrt{D_{c,t}/N}+b_{c,N,t}\big)}_{\varepsilon_{\mathrm{feat}}}
    + \underbrace{C_{\mathrm{FM}}\sqrt{\varepsilon_{\mathrm{vel}}}}_{\varepsilon_{\mathrm{fm}}}
    + \underbrace{C_{\mathrm{Euler}}K^{-1}}_{\varepsilon_{\mathrm{ode}}} \notag\\
    &+ \underbrace{e^{\widetilde{L}_{v,z,t}}\big(O(r_{f,N})+\delta_{\mathrm{gauss},t}\big)}_{\varepsilon_{\mathrm{src}}}
    + \underbrace{|\alpha_{\mathrm{flow}}-1|\,\mathbb{E}S_{t,N}}_{\varepsilon_{\mathrm{cal},N}}.
    \label{eq:convergence-overview}
\end{align}
\end{theorem}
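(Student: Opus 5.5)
The plan is a telescoping triangle-inequality argument in $W_2$. I pass through a chain of intermediate laws, each differing from the previous one in exactly one approximation. The intermediate objects are:
\begin{itemize}
\item $\pi^\star_t$, the law obtained by transporting the ideal source with the exact (population) CFM velocity $\bm v^\star$ conditioned on $\mathcal C_t^\infty$;
\item $\pi^{\infty}_{t}$, the law obtained by transporting with $\bm v^\star$ but conditioned on the finite-ensemble features $\mathcal C_t^N$;
\item $\pi^{\theta}_t$, the law obtained with the learned field $\bm v_{\hat\theta}$ integrated exactly;
\item $\pi^{K}_t$, the law obtained with $K$ Euler steps;
\item the raw empirical law of the endpoints $\bm z_{t,1}^{(i)}$ started from the forecast members;
\item the inflated empirical law $\widehat\pi_t^a$.
\end{itemize}
The triangle inequality then yields six terms, matching $\varepsilon_{\mathrm{bias}},\dots,\varepsilon_{\mathrm{cal},N}$. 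I then take expectations over the ensemble and the source draws.

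The key steps, in order, are as follows.
\begin{enumerate}
\item \emph{Bias term.} $W_2(\pi^\star_t,\pi^a_t)\le B_{c,t}$ is essentially assumed. It measures how far the population conditional transport, given only the summary features $\mathcal C_t^\infty$, is from the exact posterior; I would invoke the corresponding assumption of \Cref{app:theory-assumptions} directly.
\item \emph{Feature error.} I use synchronous coupling: the same source particle is driven by two flows that differ only in their conditioning. If $\bm v^\star$ is Lipschitz in $c$ with constant $L_c$ and in $z$ with constant $L_z$, Grönwall gives $\|\bm z_1-\bm z_1'\|\le L_{K,t}\|\mathcal C_t^N-\mathcal C_t^\infty\|$, where $L_{K,t}$ absorbs $e^{L_z}L_c$; the dynamic channel $\bm o_t$ is shared and so does not contribute. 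It then remains to bound $\mathbb E\|\bm c_t^{\mathrm{static}}-\bm c_t^{\mathrm{static},\infty}\|$. The features are smooth functions of ensemble means, variances and the baseline analysis, so a variance/bias decomposition gives $\sigma_{c,t}\sqrt{D_{c,t}/N}+b_{c,N,t}$, with the $\sqrt{1/N}$ coming from Monte Carlo fluctuation of the $D_{c,t}$ feature coordinates and $b_{c,N,t}$ from finite-$N$ bias such as the $1/(N-1)$ normalization and localization.
\item \emph{Flow-matching error.} I apply the standard CFM stability estimate: under the same coupling, Grönwall applied to $\frac{d}{d\tau}\|\bm z_\tau-\bm z_\tau'\|^2$ gives $W_2\le C_{\mathrm{FM}}\big(\int_0^1\mathbb E\|\bm v_{\hat\theta}-\bm v^\star\|^2d\tau\big)^{1/2}=C_{\mathrm{FM}}\sqrt{\varepsilon_{\mathrm{vel}}}$. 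Here the fact that the CFM loss equals the squared velocity error up to a $\theta$-independent constant identifies $\varepsilon_{\mathrm{vel}}$ as the excess loss.
\item \emph{Discretization error.} The first-order global error of forward Euler for a field that is Lipschitz in $z$ and $C^1$ in $\tau$ gives $C_{\mathrm{Euler}}K^{-1}$.
\end{enumerate}

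The source step requires more care, because at deployment the particles start at the forecast members rather than at draws from $q^{\mathrm{train}}_{t,0}$. The learned flow map $T$ is $e^{\widetilde L_{v,z,t}}$-Lipschitz by Grönwall, so $W_2(T_\#\mu,T_\#\nu)\le e^{\widetilde L_{v,z,t}}W_2(\mu,\nu)$. I would apply this with $\mu$ the empirical forecast law and $\nu$ the training source, and split $W_2(\widehat\pi_t^f,q_{t,0}^{\mathrm{train}})$ into two pieces:
\begin{itemize}
\item the empirical-measure convergence $\mathbb E W_2(\widehat\pi^f_t,\pi^f_t)=O(r_{f,N})$, using Fournier--Guillin-type rates absorbed into $r_{f,N}$;
\item $W_2(\pi^f_t,\mathcal N(\bar{\bm x}^f,\bm C_{\mathrm{GC}}\circ\bm P^f))\le\delta_{\mathrm{gauss},t}$, which also covers the moment-estimation error in the mean and covariance.
\end{itemize}
Because the map pushes forward an empirical measure to an empirical measure, this step simultaneously handles replacing the population law by the $N$-particle one.

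Finally, for calibration I couple each $\hat{\bm x}^{a,(i)}_t$ with $\bm z^{(i)}_{t,1}$. This gives
\[
W_2^2\le N^{-1}\sum_i|\alpha_{\mathrm{flow}}-1|^2\|\bm z^{(i)}_{t,1}-\hat{\bar{\bm x}}^a_t\|^2,
\]
so the error is $|\alpha_{\mathrm{flow}}-1|S_{t,N}$ with $S_{t,N}$ the empirical spread.

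I expect the main obstacle to be the feature and source steps, where Lipschitz constants must be controlled uniformly along trajectories even though the conditioning is itself random and the dynamic residual channel depends on $\bm z$. The latter is handled by folding the residual CNN's Lipschitz constant into $\widetilde L_{v,z,t}$, since $\bm r_{t,\tau}$ is affine in $\bm z$. A second subtlety is that in these steps expectations and $W_2$ must be taken with respect to the conditional law given the ensemble. I would condition on $\bm X^f_t$ first and apply Jensen afterwards.
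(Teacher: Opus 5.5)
The gap is in your feature and bias steps: you drive the same source particle through both ideal flows, but the training source depends on the static condition. Everything else follows the paper's route. The paper telescopes through $\widetilde\pi_t^a$, $\widehat\Phi^{\mathcal C_t^N}_{0,1\#}q_{t,0}^{\mathrm{inf};N}$, $\widehat\Phi^{\mathcal C_t^N}_{0,1\#}q_{t,0}^{\mathrm{tr};\mathcal C_t^N}$, $\pi_t^N$ and $\pi_t^\infty$ (the conditional laws of the truth given $\mathcal C_t^N$ and $\mathcal C_t^\infty$). Your calibration, flow-matching, Euler and source steps match its lemmas. Your ordering, with Euler on the training source and the source link between Euler pushforwards, is harmless, because the $K$-step Euler map is also $(1+\widetilde L_{v,z,t}/K)^K\le e^{\widetilde L_{v,z,t}}$-Lipschitz.

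\textbf{Why the common source breaks the chain.} An ideal-flow endpoint is identified with a conditional law of the truth only by two facts. One is \Cref{asm:ideal-stability}: the $u^{\bm c,\bm o}$-flow carries $q_{t,0}^{\mathrm{tr};\bm c,\bm o}$ to $K_t(\bm c,\bm o)$. The other is the continuity-equation argument of \Cref{lem:optimal}. Both apply only when the source matches the record.
\begin{itemize}
\item If both flows start from $q_{t,0}^{\mathrm{tr};\bm c_t^{\mathrm{static}},\bm o_t}$, your $\pi_t^\star$ is not $\pi_t^\infty=K_t(\bm c_t^{\mathrm{static},\infty},\bm o_t)$. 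Then $W_2(\pi_t^\star,\pi_t^a)\le B_{c,t}$ can fail. The triangle inequality gives it only up to an extra $e^{L_{u,z,t}}W_2\big(q_{t,0}^{\mathrm{tr};\bm c_t^{\mathrm{static}},\bm o_t},q_{t,0}^{\mathrm{tr};\bm c_t^{\mathrm{static},\infty},\bm o_t}\big)$.
\item If instead each flow starts from its own source, the bias step reduces to the definition of $B_{c,t}$. But then the coupling must begin from an optimal coupling of the two sources. That adds $e^{L_{u,z,t}}L_{q,c,t}\|\bm c-\bm c'\|_c$, the first summand of $L_{K,t}$ in \Cref{lem:lipschitz}, which your constant omits.
\end{itemize}

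\textbf{The fix.} The paper skips Gr\"onwall for this link. Since $\pi_t^N=K_t(\bm c_t^{\mathrm{static}},\bm o_t)$ and $\pi_t^\infty=K_t(\bm c_t^{\mathrm{static},\infty},\bm o_t)$, \Cref{asm:lipschitz-kernel} with $\beta_t=1$ and \Cref{asm:feature-concentration} give the feature term in one line. The concentration bound is itself assumed, so you need not derive it from smoothness of the channels. That derivation would fail anyway for channels such as $\log(\bm\sigma_t^a/\bm\sigma_t^f)$ or $|\bm\Delta_t|/|\tilde{\bm d}_t|$. Also, $b_{c,N,t}$ carries the baseline's systematic bias, not just finite-$N$ effects, because $\bm c_t^{\mathrm{static},\infty}$ is evaluated at the exact Bayesian moments.

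\textbf{Conditioning on $\bm X_t^f$.} The same point undermines this part of your plan. The ideal velocity is $\mathbb E[\bm U\mid\bm Z_\tau,\mathcal C_t^N]$. So the source in the flow-matching and source steps must be the $\mathcal C_t^N$-conditional mixture $q_{t,0}^{\mathrm{tr};\mathcal C_t^N}=\mathbb E\big[\mathcal N(\bar{\bm x}_t^f,\bm C_{\mathrm{GC}}\circ\bm P_t^f)\mid\mathcal C_t^N\big]$, not the Gaussian given the full ensemble. The ideal flow carries the mixture to $\pi_t^N$. It carries the per-ensemble Gaussian only to a law whose $\mathcal C_t^N$-conditional average is $\pi_t^N$. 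Your Gr\"onwall flow-matching estimate survives that averaging. The identification of the endpoint, which you need to chain into the feature and bias links, does not. This is why the paper conditions every link on $\mathcal C_t^N$. It also defines $\delta_{\mathrm{gauss},t}=\mathbb E\,W_2\big(\pi_t^f,q_{t,0}^{\mathrm{tr};\mathcal C_t^N}\big)$ with the mixture rather than with $\mathcal N(\bar{\bm x}_t^f,\bm C_{\mathrm{GC}}\circ\bm P_t^f)$.
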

The first two terms quantify, respectively, information lost by the static conditioning
summary and its finite-ensemble estimation. The next two
capture velocity-field approximation and numerical integration error.
The source term accounts for the discrepancy between the forecast-member
initialization used at inference and the localized Gaussian source used for
training, while the final term quantifies the perturbation introduced by
calibration inflation. The quantities appearing in the bound are summarized
in
\Cref{tab:theory-dictionary}.

\section{Experiments}
\label{sec:experiments}
We evaluate FlowEF with four classical ensemble filters as baselines:
EnKF \citep{evensen2003ensemble}, ETKF \citep{bishop2001adaptive},
DEnKF \citep{sakov2008deterministic}, and LETKF
\citep{hunt2007efficient}. Experiments are conducted on three chaotic
dynamical systems: the Lorenz--96 ODE system, the one-dimensional
Kuramoto--Sivashinsky PDE, and two-dimensional Kolmogorov flow.
For each baseline filter, the inflation factor and localization radius
${\alpha_B,r_{\mathrm{loc}}}$ are jointly tuned on the validation set;
FlowEF is then trained and evaluated using that tuned baseline.

All results are averaged over 10 different seeds. Each seed determines
the train/validation/test split, the ground truth trajectory, the
observation noise, the baseline ensemble's initialization and
analysis draws, FlowEF initialization, minibatch order,
and stochastic inference sampling.
These quantities vary together across seeds.

\subsection{Experimental setup}
\label{sec:setup}
The three systems all use the linear observation model
$\bm{y}_t = \bm{H}_t\bm{x}_t + \bm{\epsilon}_t$,
where $\bm{H}_t \in \mathbb{R}^{d_y \times d_x}$
is a sparse and binary matrix
that selects an evenly spaced
subset of state coordinates $\bm{x}_t$,
which means we partially observe the system.
The state dimension $d_x$, observation dimension $d_y$ and other settings
for each system are given in \Cref{tab:setup}.
The three systems also share the same training size $n_{\mathrm{train}}=10{,}000$,
same test set size $n_{\mathrm{test}}=2{,}000$,
and identical FlowEF architecture, training, and inference hyperparameters,
see \Cref{tab:implementation-settings} for complete details.

For comparisons with learned DA methods, we select, for each system, the
classical baseline with the lowest validation CRPS among EnKF, ETKF, DEnKF,
and LETKF. This gives DEnKF for Lorenz--96 and Kuramoto--Sivashinsky and LETKF
for Kolmogorov flow. We report the results for all three systems below; the
full learned-method benchmark comparisons are collected in \Cref{app:benchmarks}.

\begin{table}[t]
\centering
\caption{Experimental setup. $d_x$ denotes the state dimension, $d_y$ the
observation dimension, $\sigma_{\mathrm{obs}}$ the observation noise standard deviation,
$\sigma_{\mathrm{proc}}$ the process noise standard deviation, $\Delta t$ the assimilation
interval, and $N$ the ensemble size.}
\label{tab:setup}
\small
\begin{tabular}{lcccccc}
\toprule
System & $d_x$ & $d_y$ & $\sigma_{\mathrm{obs}}$ & $\sigma_{\mathrm{proc}}$ & $\Delta t$ & $N$ \\
\midrule
Lorenz--96 (low-dim) & $40$       & $10$   & $0.1$ & $0.2$ & $0.05$ & $30$ \\
Lorenz--96 (high-dim) & $100$      & $20$   & $0.1$ & $0.2$ & $0.05$ & $30$ \\
Kuramoto--Sivashinsky & $128$      & $16$ & $0.7$ & $0.0$ & $1.0$  & $20$ \\
Kolmogorov flow& $64{\times}64$ & $8{\times}8$  & $0.1$ & $0.2$ & $0.2$  & $30$ \\
\bottomrule
\end{tabular}
\end{table}

\paragraph{Metrics}
We report root mean squared error (RMSE, \cref{eq:test-rmse})
of the ensemble mean,
the continuous ranked probability score (CRPS, \cref{eq:ensemble-crps}),
the spread-skill ratio (SSR, \cref{eq:test-ssr}),
and the empirical coverage probability
($\operatorname{Cov}_{95}$, \cref{eq:test-cov95}) of the $95\%$ ensemble interval.
We compute each metric per seed and aggregate it as mean
$\pm$ standard deviation over seeds.
RMSE measures point accuracy only, whereas
CRPS is a proper probabilistic score that measures both accuracy and how
tightly the predictive distribution concentrates around the truth.
SSR is the square root of the ratio between averaged ensemble variance and
averaged squared error, values below or above $1$ indicate under- or over-dispersion.
$\operatorname{Cov}_{95}$ checks each coordinate against its own marginal
interval and averages over assimilation times and coordinates,
values below or above $0.95$ indicate under- or over-coverage.

\subsection{Lorenz--96 model}
\label{sec:lorenz96}
The Lorenz--96 model \citep{lorenz1996predictability} is a
standard chaotic model of an atmospheric variable on a latitude circle, and is
widely used to evaluate data assimilation methods.
Its state evolves according to
\begin{align}
    \frac{\mathrm{d}x_i(t)}{\mathrm{d}t}
    &= \bigl(x_{i+1}(t)-x_{i-2}(t)\bigr)x_{i-1}(t)
       - x_i(t) + F,
    \qquad i=1,\ldots,d_x,
    \qquad F=8,
    \label{eq:lorenz96-dynamics}
\end{align}
where $x_i(t)$ is the $i$-th state coordinates
at time $t$ with periodic indexing $x_{i+d_x}(t)=x_i(t)$,
and $F$ is the constant forcing.
We integrate the system with
the fourth-order Runge--Kutta method at the
internal time step $\Delta t_{\mathrm{RK}}=\Delta t/4=0.0125$. We generate
the ground truth trajectory through four such substeps
over one assimilation interval, with $\Delta t=0.05$ as specified in \Cref{tab:setup}.
We evaluate two sizes of the Lorenz--96 system at the same noise levels
$\sigma_{\mathrm{obs}}=0.1$ and $\sigma_{\mathrm{proc}}=0.2$,
assimilation interval $\Delta t=0.05$, and ensemble size $N=30$,
as shown in \Cref{tab:setup}.

\paragraph[]{Low-dimensional system}
For the low-dimensional Lorenz--96 system, we set $d_x=40$, $d_y=10$,
and choose $N=30$, which observes every fourth state coordinate.
\Cref{tab:headline-lorenz96} reports the results for
the proposed method FlowEF on top of each of four baselines
averaged over 10 seeds.
FlowEF improves both the deterministic and probabilistic
metrics for every baseline: the improvement is modest for the already
strong DEnKF, but substantially larger for EnKF, ETKF, and LETKF. FlowEF
also brings SSR and Cov$_{95}$ closer to their reference values for all
four baselines.

\begin{table}[t]
\centering
\caption{Comparison of FlowEF against each baseline on test set
for low-dimensional Lorenz--96 system ($d_x{=}40$, $d_y{=}10$, $N{=}30$).
Bold marks the best value per metric.}
\label{tab:headline-lorenz96}
\small
\setlength{\tabcolsep}{3pt}
\resizebox{\linewidth}{!}{%
\begin{tabular}{llcccc}
\toprule
Baseline & Method & RMSE $\downarrow$ & CRPS $\downarrow$ & SSR ($\to1$) & Cov$_{95}$ ($\to0.95$) \\
\midrule
\multirow{2}{*}{EnKF}
& Baseline           & $1.42 \pm 0.208$ & $0.701 \pm 0.102$ & $0.365 \pm 0.0926$ & $0.741 \pm 0.0527$ \\
& \cellcolor{gray!15}FlowEF (ours)      & \cellcolor{gray!15}$\mathbf{1.25 \pm 0.130}$ & \cellcolor{gray!15}$\mathbf{0.581 \pm 0.0566}$ & \cellcolor{gray!15}$\mathbf{0.861 \pm 0.0747}$ & \cellcolor{gray!15}$\mathbf{0.901 \pm 0.0216}$ \\
\midrule
\multirow{2}{*}{ETKF}
& Baseline           & $1.41 \pm 0.211$ & $0.692 \pm 0.106$ & $0.457 \pm 0.137$ & $0.813 \pm 0.0479$ \\
& \cellcolor{gray!15}FlowEF (ours)      & \cellcolor{gray!15}$\mathbf{1.24 \pm 0.157}$ & \cellcolor{gray!15}$\mathbf{0.577 \pm 0.0726}$ & \cellcolor{gray!15}$\mathbf{0.910 \pm 0.0920}$ & \cellcolor{gray!15}$\mathbf{0.905 \pm 0.0209}$ \\
\midrule
\multirow{2}{*}{DEnKF}
& Baseline           & $0.894 \pm 0.154$ & $0.438 \pm 0.0693$ & $0.753 \pm 0.166$ & $0.893 \pm 0.0262$ \\
& \cellcolor{gray!15}FlowEF (ours)      & \cellcolor{gray!15}$\mathbf{0.873 \pm 0.127}$ & \cellcolor{gray!15}$\mathbf{0.420 \pm 0.0534}$ & \cellcolor{gray!15}$\mathbf{0.948 \pm 0.145}$ & \cellcolor{gray!15}$\mathbf{0.905 \pm 0.0202}$ \\
\midrule
\multirow{2}{*}{LETKF}
& Baseline           & $1.32 \pm 0.112$ & $0.661 \pm 0.0581$ & $0.315 \pm 0.0545$ & $0.722 \pm 0.0429$ \\
& \cellcolor{gray!15}FlowEF (ours)      & \cellcolor{gray!15}$\mathbf{1.18 \pm 0.0680}$ & \cellcolor{gray!15}$\mathbf{0.551 \pm 0.0299}$ & \cellcolor{gray!15}$\mathbf{0.835 \pm 0.0928}$ & \cellcolor{gray!15}$\mathbf{0.900 \pm 0.0219}$ \\
\bottomrule
\end{tabular}}
\end{table}

\Cref{tab:bench-lorenz96} in \Cref{app:benchmarks} compares FlowEF
against representative ML-based DA benchmarks
on top of the DEnKF baseline.
FlowEF achieves the best RMSE, CRPS, SSR, and Cov$_{95}$
among all benchmarks, which demonstrates its
outperformance in both predictive accuracy
and ensemble calibration.

\paragraph[]{High-dimensional system}
For the high-dimensional Lorenz--96 system, we increase the state dimension to
$d_x=100$ while keeping $N=30$, and observe $d_y=20$ coordinates. The sample
covariance therefore remains of rank at most $N-1$ while the state dimension
increases substantially, making finite-ensemble covariance estimation more
challenging.
As shown in \Cref{tab:headline-lorenz96-large}, FlowEF again improves all four
reported metrics relative to each baseline. The reductions in RMSE and CRPS
are also larger than in the $d_x=40$ experiment for each of the four
baselines.

\begin{table}[t]
\centering
\caption{Comparison of FlowEF against each baseline on test set
for high-dimensional Lorenz--96 system ($d_x{=}100$, $d_y{=}20$, $N{=}30$).
Bold marks the best value per metric.}
\label{tab:headline-lorenz96-large}
\small
\setlength{\tabcolsep}{3pt}
\resizebox{\linewidth}{!}{%
\begin{tabular}{llcccc}
\toprule
Baseline & Method & RMSE $\downarrow$ & CRPS $\downarrow$ & SSR ($\to1$) & Cov$_{95}$ ($\to0.95$) \\
\midrule
\multirow{2}{*}{EnKF}
& Baseline           & $2.94 \pm 0.160$ & $1.51 \pm 0.121$ & $0.370 \pm 0.0745$ & $0.631 \pm 0.0672$ \\
& \cellcolor{gray!15}FlowEF (ours)      & \cellcolor{gray!15}$\mathbf{2.23 \pm 0.0880}$ & \cellcolor{gray!15}$\mathbf{1.06 \pm 0.0519}$ & \cellcolor{gray!15}$\mathbf{0.889 \pm 0.0218}$ & \cellcolor{gray!15}$\mathbf{0.883 \pm 0.00890}$ \\
\midrule
\multirow{2}{*}{ETKF}
& Baseline           & $2.86 \pm 0.102$ & $1.44 \pm 0.0575$ & $0.584 \pm 0.0955$ & $0.763 \pm 0.0272$ \\
& \cellcolor{gray!15}FlowEF (ours)      & \cellcolor{gray!15}$\mathbf{2.26 \pm 0.0726}$ & \cellcolor{gray!15}$\mathbf{1.08 \pm 0.0422}$ & \cellcolor{gray!15}$\mathbf{0.933 \pm 0.0434}$ & \cellcolor{gray!15}$\mathbf{0.877 \pm 0.0126}$ \\
\midrule
\multirow{2}{*}{DEnKF}
& Baseline           & $2.15 \pm 0.102$ & $1.03 \pm 0.0564$ & $0.758 \pm 0.234$ & $0.848 \pm 0.0590$ \\
& \cellcolor{gray!15}FlowEF (ours)      & \cellcolor{gray!15}$\mathbf{1.89 \pm 0.0809}$ & \cellcolor{gray!15}$\mathbf{0.874 \pm 0.0439}$ & \cellcolor{gray!15}$\mathbf{0.969 \pm 0.0830}$ & \cellcolor{gray!15}$\mathbf{0.886 \pm 0.0197}$ \\
\midrule
\multirow{2}{*}{LETKF}
& Baseline           & $2.84 \pm 0.0945$ & $1.45 \pm 0.0646$ & $0.279 \pm 0.0927$ & $0.608 \pm 0.0613$ \\
& \cellcolor{gray!15}FlowEF (ours)      & \cellcolor{gray!15}$\mathbf{2.13 \pm 0.0695}$ & \cellcolor{gray!15}$\mathbf{0.998 \pm 0.0420}$ & \cellcolor{gray!15}$\mathbf{0.895 \pm 0.0244}$ & \cellcolor{gray!15}$\mathbf{0.893 \pm 0.0124}$ \\
\bottomrule
\end{tabular}}
\end{table}

The learned-method comparison for the high-dimensional system is reported in
\Cref{tab:bench-lorenz96-large} of \Cref{app:benchmarks}, again using DEnKF as
the common baseline. FlowEF has the lowest RMSE and CRPS and the closest SSR
and Cov$_{95}$ to their reference values among the methods reported there.
The limited improvements in  the low-dimensional case does not apply:
increasing $d_x$ from $40$ to $100$ at fixed $N=30$
inflates the rank deficiency $d_x/N$, which is exactly
the regime where the classical filter is most limited by sampling error in
the forecast covariance, so there is far more space for a learned
map to exploit.

\begin{figure}[t]
\centering
\begin{minipage}[t]{0.49\linewidth}\centering
\includegraphics[width=\linewidth]{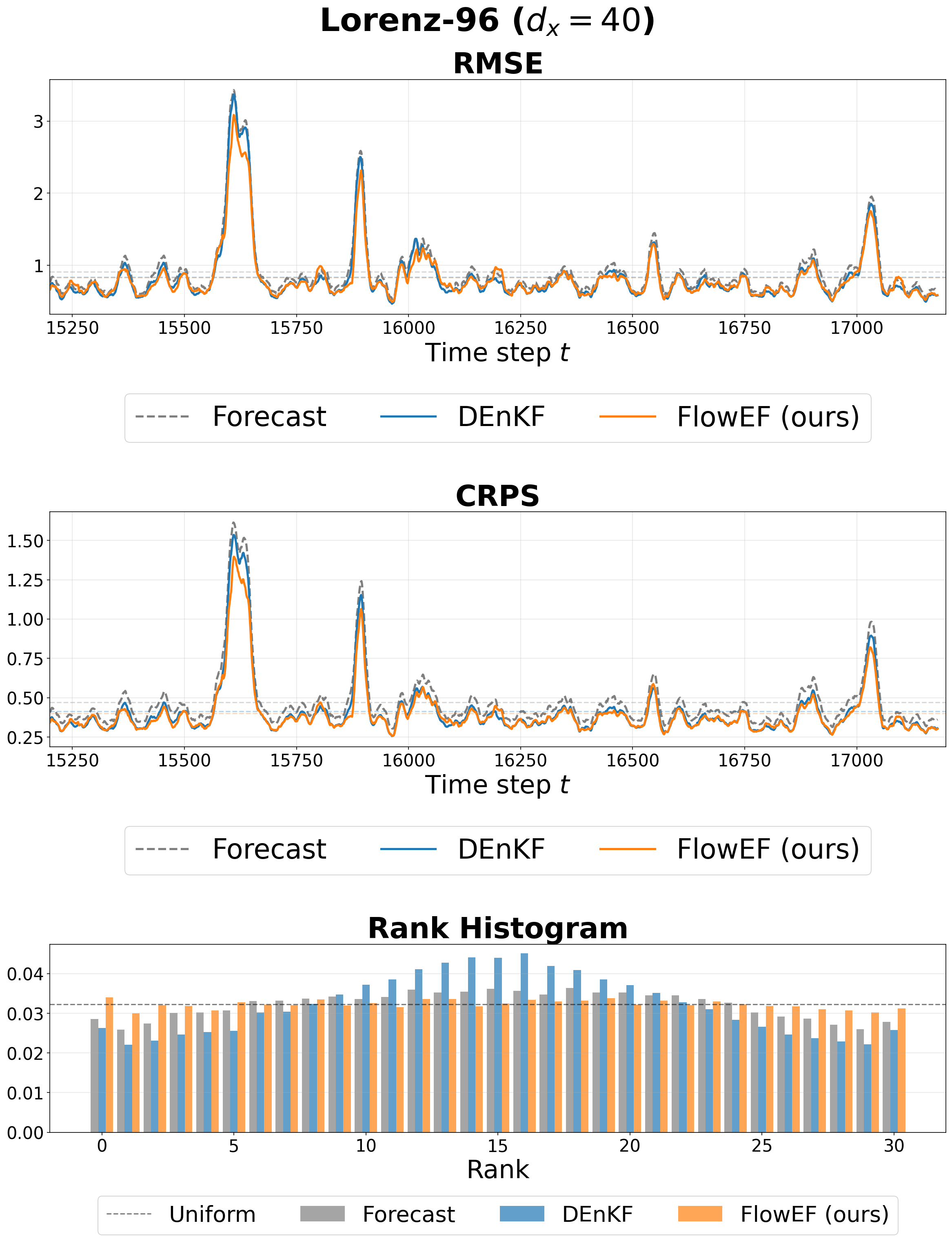}\end{minipage}\hfill
\begin{minipage}[t]{0.49\linewidth}\centering
\includegraphics[width=\linewidth]{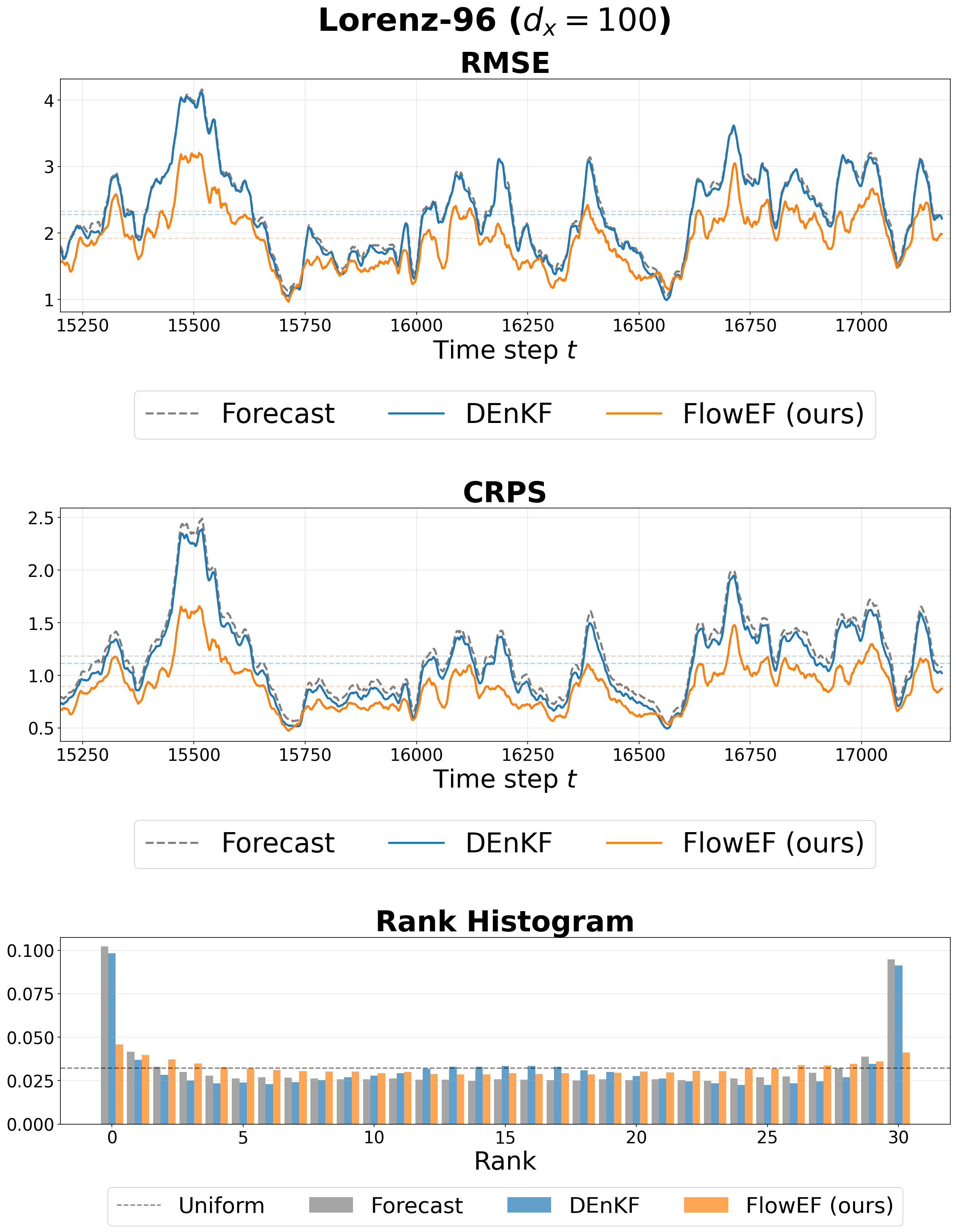}\end{minipage}
\caption{Metrics comparison for Lorenz--96 using DEnKF as the baseline filter.
\textit{Left}: low-dimensional Lorenz--96 ($d_x{=}40$).
\textit{Right}: high-dimensional Lorenz--96 ($d_x{=}100$).
Each subfigure shows RMSE, CRPS, and rank histogram over test set.
Gray dashed, blue, and orange curves denote DEnKF forecast ensemble,
DEnKF analysis ensemble, and FlowEF analysis ensemble, respectively.
The horizontal dashed line in rank histogram denotes the uniform rank reference $1/(N{+}1)$.}
\label{fig:best-seed-metrics-lorenz96}
\end{figure}

\Cref{fig:best-seed-metrics-lorenz96}
shows the same qualitative pattern
for both low-dimensional and high-dimensional Lorenz--96 systems.
FlowEF produces lower RMSE and CRPS than DEnKF across
the entire trajectory and a flatter rank histogram.
The figure suggests that the advantage of FlowEF is stronger
when the forecast ensemble is more rank-deficient,
which is consistent with the results in
\Cref{tab:headline-lorenz96,tab:headline-lorenz96-large}.
\subsection{Kuramoto--Sivashinsky equation}
\label{sec:ks}
The Kuramoto--Sivashinsky (KS) equation
\citep{kuramoto1978diffusion, sivashinsky1977nonlinear} describes phase
turbulence in reaction--diffusion systems, 
and is a classical model of spatiotemporal chaos.
The KS field $u(x,t)$ on $x \in [0,L]$ with
$L=32\pi$ and periodic boundary conditions $u(x+L,t)=u(x,t)$ satisfies
\begin{align}
    \partial_t u(x,t) + u(x,t)\,\partial_x u(x,t) + \partial_{xx}u(x,t) + \partial_{xxxx}u(x,t)
    &= 0,
    \label{eq:ks-dynamics}
\end{align}
where $u(x,t)\in\mathbb{R}$
is the field at position $x\in[0,L]$ and time $t$,
discretized on $d_x = 128$ equispaced grid points.
The high-order derivative terms create rapidly changing 
fine-scale features, so simple time-stepping methods require
very small stable steps. 
We therefore integrate the ground truth
trajectory with the standard ETDRK4 method \citep{kassam2005fourth} 
at an internal time step of $0.25$. 
Each assimilation interval consists of four such steps, with
$\Delta t=1.0$ as specified in \Cref{tab:setup}. 
We select observations at every eighth point ($d_y = 16$) 
with $\sigma_{\mathrm{obs}} = 0.7$ and no process noise $\sigma_{\mathrm{proc}} = 0$.
We follow the KS equation
setup of \citet{bach2025learning} using the same grid resolution, domain
extent, observation dimension and noise, and ensemble size $N=20$.

\begin{table}[t]
\centering
\caption{Comparison of FlowEF against each baseline on the test set
for Kuramoto--Sivashinsky ($d_x{=}128$, $d_y{=}16$, $N{=}20$). 
Bold marks the best value per metric.}
\label{tab:headline-ks}
\small
\setlength{\tabcolsep}{3pt}
\resizebox{\linewidth}{!}{%
\begin{tabular}{llcccc}
\toprule
Baseline & Method & RMSE $\downarrow$ & CRPS $\downarrow$ & SSR ($\to1$) & Cov$_{95}$ ($\to0.95$) \\
\midrule
\multirow{2}{*}{EnKF}
& Baseline           & $0.607 \pm 0.0331$ & $0.309 \pm 0.0187$ & $0.777 \pm 0.0610$ & $0.835 \pm 0.0196$ \\
& \cellcolor{gray!15}FlowEF (ours)      & \cellcolor{gray!15}$\mathbf{0.538 \pm 0.0193}$ & \cellcolor{gray!15}$\mathbf{0.273 \pm 0.0111}$ & \cellcolor{gray!15}$\mathbf{0.915 \pm 0.0416}$ & \cellcolor{gray!15}$\mathbf{0.856 \pm 0.0132}$ \\
\midrule
\multirow{2}{*}{ETKF}
& Baseline           & $0.608 \pm 0.114$ & $0.314 \pm 0.0704$ & $0.853 \pm 0.152$ & $0.853 \pm 0.0680$ \\
& \cellcolor{gray!15}FlowEF (ours)      & \cellcolor{gray!15}$\mathbf{0.538 \pm 0.0572}$ & \cellcolor{gray!15}$\mathbf{0.274 \pm 0.0336}$ & \cellcolor{gray!15}$\mathbf{0.938 \pm 0.0520}$ & \cellcolor{gray!15}$\mathbf{0.863 \pm 0.0184}$ \\
\midrule
\multirow{2}{*}{DEnKF}
& Baseline           & $0.502 \pm 0.0322$ & $0.257 \pm 0.0183$ & $0.826 \pm 0.0931$ & $0.840 \pm 0.0233$ \\
& \cellcolor{gray!15}FlowEF (ours)      & \cellcolor{gray!15}$\mathbf{0.483 \pm 0.0235}$ & \cellcolor{gray!15}$\mathbf{0.248 \pm 0.0132}$ & \cellcolor{gray!15}$\mathbf{0.939 \pm 0.0731}$ & \cellcolor{gray!15}$\mathbf{0.854 \pm 0.0214}$ \\
\midrule
\multirow{2}{*}{LETKF}
& Baseline           & $0.589 \pm 0.0403$ & $0.297 \pm 0.0213$ & $0.804 \pm 0.0877$ & $0.854 \pm 0.0232$ \\
& \cellcolor{gray!15}FlowEF (ours)      & \cellcolor{gray!15}$\mathbf{0.526 \pm 0.0321}$ & \cellcolor{gray!15}$\mathbf{0.269 \pm 0.0237}$ & \cellcolor{gray!15}$\mathbf{0.968 \pm 0.120}$ & \cellcolor{gray!15}$\mathbf{0.872 \pm 0.0226}$ \\
\bottomrule
\end{tabular}}
\end{table}

\Cref{tab:headline-ks} shows the performance of FlowEF against each baseline
on the test set for the Kuramoto--Sivashinsky equation. 
As on Lorenz--96, DEnKF is the strongest baseline, and
the margin between FlowEF and the DEnKF baseline is narrow,
since the DEnKF baseline is already close to the linear--Gaussian
optimum under the KS equation setting,
which leaves little room for any correction on top of it.
Accordingly, FlowEF improves least over DEnKF
and more over EnKF, ETKF, and LETKF.
Across all four baselines, FlowEF improves RMSE, CRPS, SSR, and
Cov$_{95}$, and reduces their variability across runs.
The left subfigure in \Cref{fig:best-seed-metrics-ks-kol}
visually demonstrates this on a representative seed.
RMSE and CRPS of FlowEF drop below the DEnKF baseline
across the entire trajectory,
and the rank histogram of FlowEF is flatter than the baseline,
which is consistent with the SSR improvement in \Cref{tab:headline-ks}.

\Cref{tab:bench-ks} in \Cref{app:benchmarks} compares FlowEF against
representative ML-based DA benchmarks on top of the DEnKF baseline.
FlowEF achieves the best RMSE, CRPS, and SSR among all benchmark methods,
while only slightly improving Cov$_{95}$ over the DEnKF baseline.
ADEnKF \citep{chen2022autodifferentiable}
instead attains the best Cov$_{95}$,
but yields higher RMSE and CRPS and worse SSR,
which indicates that its coverage gain
comes at the cost of worse predictive accuracy
and poorer spread--skill consistency.
Overall, FlowEF provides the best
predictive performance and ensemble uncertainty.

\begin{figure}[t]
\centering
\begin{minipage}[t]{0.49\linewidth}\centering
\includegraphics[width=\linewidth]{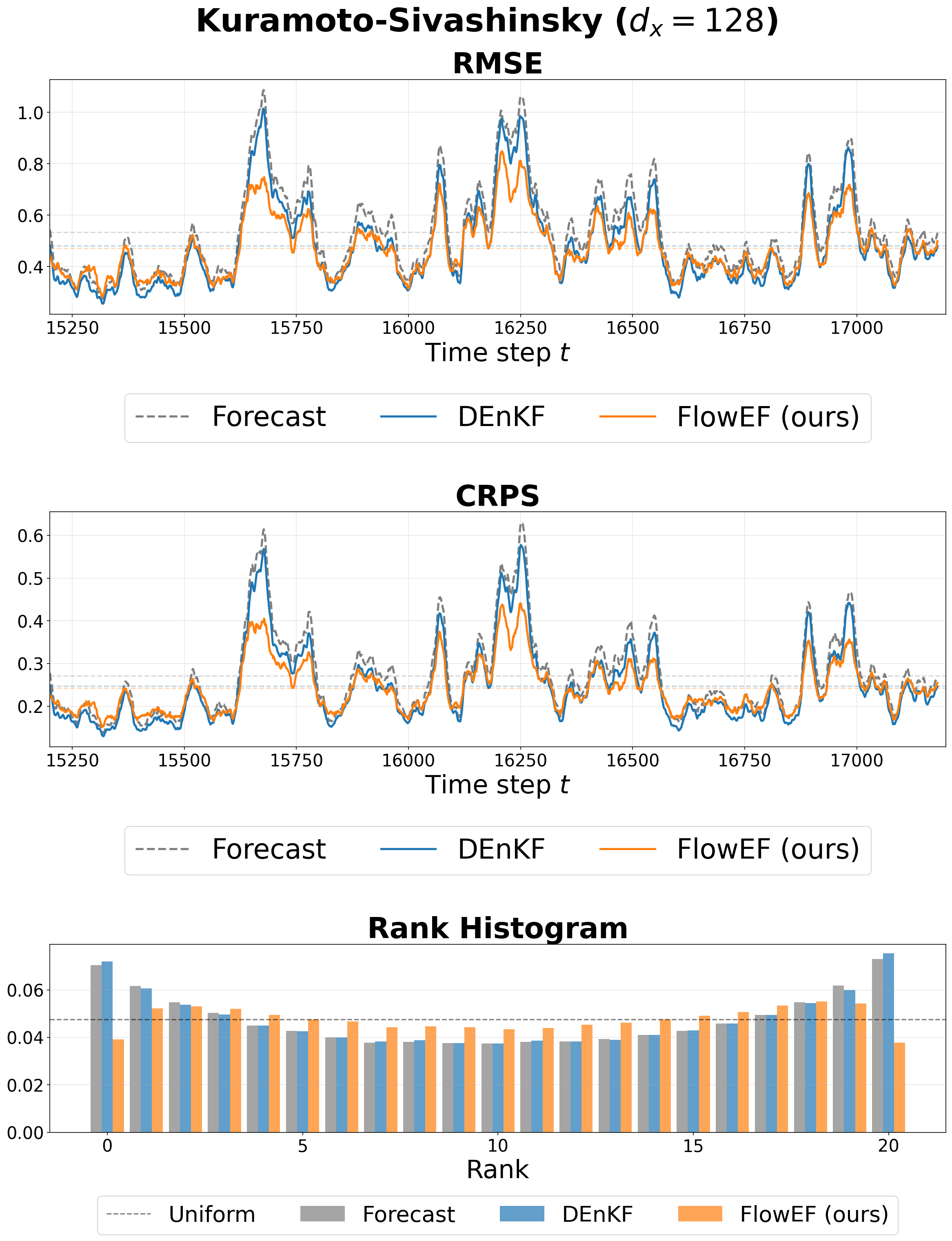}\end{minipage}\hfill
\begin{minipage}[t]{0.49\linewidth}\centering
\includegraphics[width=\linewidth]{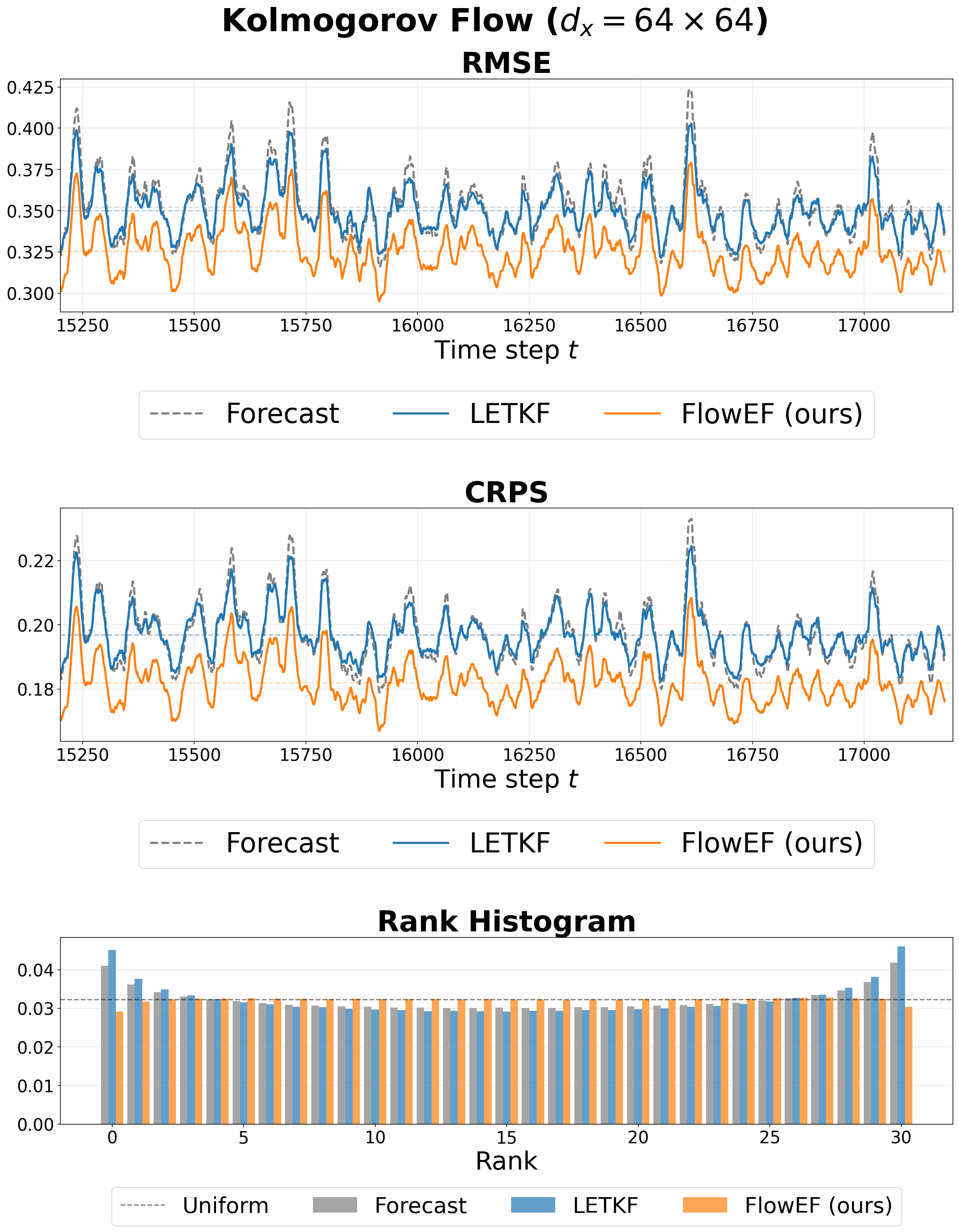}\end{minipage}
\caption{Metrics comparison for Kuramoto--Sivashinsky and Kolmogorov flow
on a representative seed. \emph{Left}: Kuramoto--Sivashinsky using
DEnKF as the baseline filter. \emph{Right}: Kolmogorov flow using LETKF as
the baseline filter. Each subfigure shows RMSE, CRPS, and the rank
histogram over test set. 
}
\label{fig:best-seed-metrics-ks-kol}
\end{figure}

\subsection{Kolmogorov flow}
\label{sec:kol}
Kolmogorov flow \citep{obukhov1983kolmogorov} 
is two-dimensional incompressible Navier--Stokes flow on a periodic domain
driven by a steady sinusoidal forcing.
It is a classical model of two-dimensional turbulence
\citep{boffetta2012two, chandler2013invariant} widely used in  
fluid dynamics and DA. In vorticity form, it is governed by
\begin{align}
    \partial_t \omega(x,y,t) + (\bm{u}(x,y,t) \cdot \nabla)\omega(x,y,t) &= \nu \Delta \omega(x,y,t) - \mu\omega(x,y,t) + g(x,y),\\
    \bm{u}(x,y,t) &= \nabla^\perp \Delta^{-1}\omega(x,y,t),
    \label{eq:kolmogorov-dynamics}
\end{align}
on the domain $(x,y)\in[0, 2\pi]^2$ with periodic boundary conditions
in both directions, where $\omega(x,y,t)\in\mathbb{R}$ is the 
vorticity field and $\bm{u}(x,y,t)\in\mathbb{R}^2$ is the divergence-free
velocity field induced by $\omega$. The stream function $\psi$ solves the
Poisson equation $\Delta\psi=\omega$, written in $\psi=\Delta^{-1}\omega$, and
$\bm{u}=\nabla^\perp\psi=(-\partial_y\psi,\partial_x\psi)$ is its
skew gradient, which is divergence-free by construction.
The system has viscosity $\nu = 0.01$, 
linear drag $\mu = 0.1$, and Kolmogorov forcing 
$g(x,y) = -k_f\cos(k_f y)$ at $k_f = 4$. 
The state is the vorticity discretized on a $64\times 64$ grid, 
so $\omega(t)\in\mathbb{R}^{64\times 64}$ is 
flattened into a state vector of dimension $d_x = 64^2 = 4096$. 
We integrate the ground truth trajectory with a
pseudo-spectral ETDRK4 scheme \citep{kassam2005fourth} 
at an internal time step of $0.02$. 
Each assimilation interval consists of ten such steps, with 
$\Delta t=0.2$ as specified in \Cref{tab:setup}. 
We take observations at every eighth grid point along each axis, 
a regular $8\times 8$ subsampling of the $64\times 64$ grid 
that gives $d_y = 64$ observations covering only $1.6\%$ of the state. 
With $N = 30$, the sample forecast covariance has rank at most $N-1 = 29$ 
while $d_x = 4096$. With $d_x/N \approx 137$, the forecast covariance is
severely rank-deficient; therefore, sampling error dominates its estimation.

\Cref{tab:headline-kol} reports results averaged over 10 seeds for FlowEF
on top of each of four baselines. FlowEF improves both RMSE and CRPS for
all baselines, with the largest gains for ETKF and the smallest for EnKF.
Although SSR and Cov$_{95}$ do not improve uniformly across all baselines,
FlowEF improves SSR for ETKF and LETKF and Cov$_{95}$ for DEnKF and
LETKF, while delivering consistently better overall predictive accuracy. This pattern is
consistent with FlowEF method that learns the analysis update conditioned on 
the ensembles from the baseline filter. 
The baselines with larger sampling error and inflation bias 
leave more room for improvement. 

\Cref{fig:kol-reconstruction} 
illustrates the state reconstruction over time for Kolmogorov flow using LETKF as the baseline filter 
against the ground truth. 
It's observed that FlowEF tracks the large-scale vortex structure 
of the ground truth throughout the trajectory and stays 
close to the truth even at later assimilation times. 
The right subfigure in \Cref{fig:best-seed-metrics-ks-kol} 
demonstrates the metrics comparison for Kolmogorov flow 
on a representative seed. 
FlowEF consistently outperforms the LETKF baseline filter 
in RMSE and CRPS across the entire trajectory,
and the rank histogram of FlowEF is flatter than the LETKF baseline, 
which is consistent with the SSR improvement in \Cref{tab:headline-kol}. 

\begin{table}[t]
\centering
\caption{Comparison of FlowEF against each baseline on the test set
for Kolmogorov flow ($d_x{=}64{\times}64{=}4096$, $d_y{=}8{\times}8{=}64$, $N{=}30$).
Bold marks the best value per metric.}
\label{tab:headline-kol}
\small
\setlength{\tabcolsep}{3pt}
\resizebox{\linewidth}{!}{%
\begin{tabular}{llcccc}
\toprule
Baseline & Method & RMSE $\downarrow$ & CRPS $\downarrow$ & SSR ($\to1$) & Cov$_{95}$ ($\to0.95$) \\
\midrule
\multirow{2}{*}{EnKF}
& Baseline           & $0.841 \pm 1.14$ & $0.565 \pm 0.855$ & $\mathbf{0.591 \pm 0.399}$ & $\mathbf{0.819 \pm 0.252}$ \\
& \cellcolor{gray!15}FlowEF (ours)      & \cellcolor{gray!15}$\mathbf{0.771 \pm 1.03}$ & \cellcolor{gray!15}$\mathbf{0.493 \pm 0.737}$ & \cellcolor{gray!15}$0.576 \pm 0.302$ & \cellcolor{gray!15}$0.803 \pm 0.211$ \\
\midrule
\multirow{2}{*}{ETKF}
& Baseline           & $1.48 \pm 0.0765$ & $0.784 \pm 0.0439$ & $1.28 \pm 0.0247$ & $\mathbf{0.965 \pm 0.00215}$ \\
& \cellcolor{gray!15}FlowEF (ours)      & \cellcolor{gray!15}$\mathbf{1.08 \pm 0.0565}$ & \cellcolor{gray!15}$\mathbf{0.557 \pm 0.0297}$ & \cellcolor{gray!15}$\mathbf{1.03 \pm 0.0244}$ & \cellcolor{gray!15}$0.885 \pm 0.00966$ \\
\midrule
\multirow{2}{*}{DEnKF}
& Baseline           & $0.773 \pm 1.26$ & $0.522 \pm 0.945$ & $\mathbf{0.868 \pm 0.467}$ & $0.844 \pm 0.261$ \\
& \cellcolor{gray!15}FlowEF (ours)      & \cellcolor{gray!15}$\mathbf{0.584 \pm 0.718}$ & \cellcolor{gray!15}$\mathbf{0.339 \pm 0.422}$ & \cellcolor{gray!15}$0.713 \pm 0.260$ & \cellcolor{gray!15}$\mathbf{0.864 \pm 0.0769}$ \\
\midrule
\multirow{2}{*}{LETKF}
& Baseline           & $0.547 \pm 0.209$ & $0.331 \pm 0.145$ & $0.548 \pm 0.331$ & $0.835 \pm 0.0539$ \\
& \cellcolor{gray!15}FlowEF (ours)      & \cellcolor{gray!15}$\mathbf{0.476 \pm 0.165}$ & \cellcolor{gray!15}$\mathbf{0.273 \pm 0.104}$ & \cellcolor{gray!15}$\mathbf{0.699 \pm 0.259}$ & \cellcolor{gray!15}$\mathbf{0.877 \pm 0.0288}$ \\
\bottomrule
\end{tabular}}
\end{table}

\begin{figure}[t]
\centering
\includegraphics[width=\linewidth]{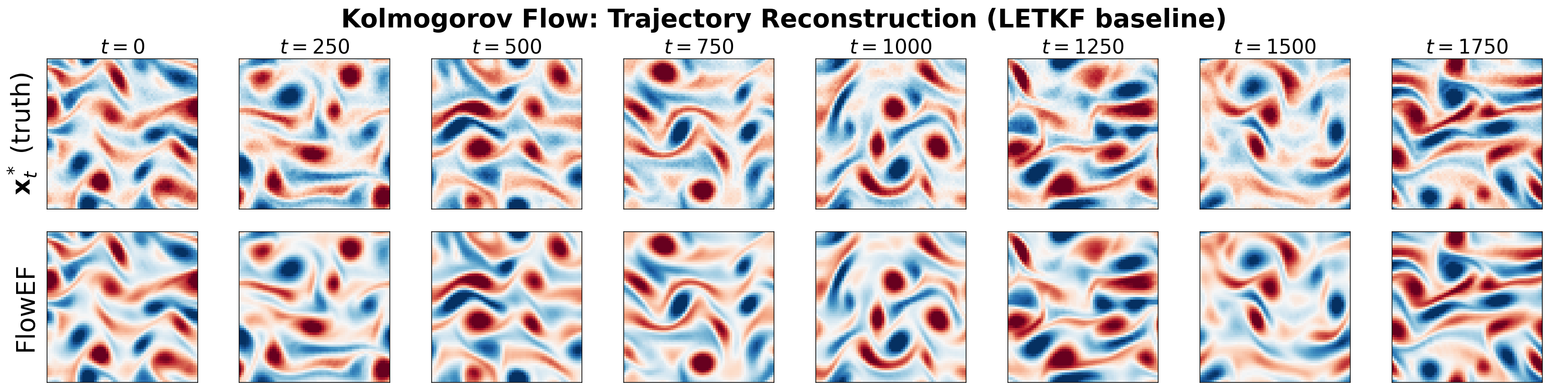}
\caption{Kolmogorov flow state reconstruction over time using LETKF as the
baseline filter on a representative seed. 
\emph{Top}: ground truth $\bm{x}_t^\star$. 
\emph{Bottom}: analysis ensembles $\hat{\bm{X}}_t^{a}$ of FlowEF. 
Columns show assimilation times $t=0,250,\ldots,1750$.}
\label{fig:kol-reconstruction}
\end{figure}

\Cref{tab:bench-kol} in \Cref{app:benchmarks} compares FlowEF against representative ML-based DA
benchmarks on top of the LETKF baseline. FlowEF achieves the best performance
across all four metrics among all benchmark methods.
FlowEF uses a periodic CNN velocity field directly on the vorticity field,
thereby encoding local spatial interactions and consistent behavior under
spatial shifts.
Overall, among the evaluated benchmark methods, FlowEF provides the most
accurate state estimates and the most reliable uncertainty quantification for
Kolmogorov flow.

\section{Conclusion}
\label{sec:conclusion}

We introduced the Flow Ensemble Filter (FlowEF), a learned nonlinear analysis update that augments a classical ensemble filter through conditional flow matching. The learned velocity field is conditioned on features constructed from the baseline forecast and analysis ensembles and the current observation. During training, FlowEF uses a localized Gaussian source derived from the forecast ensemble; at inference, it transports the forecast members themselves to form the analysis ensemble. This construction anchors the learned transport to the current forecast while allowing a nonlinear analysis update beyond the affine structure of classical ensemble filters.

We evaluated FlowEF with EnKF, ETKF, DEnKF, and LETKF baselines on Lorenz--96 with state dimensions $40$ and $100$, Kuramoto--Sivashinsky with dimension $128$, and $64\times64$ Kolmogorov flow. Across the reported settings, FlowEF reduced both RMSE and marginal CRPS relative to the corresponding tuned classical baselines. Where direct comparisons with learned DA methods were available, FlowEF also achieved lower RMSE and marginal CRPS than the evaluated alternatives. The improvements were smaller for already strong baseline analyses and larger in several of the more challenging finite-ensemble settings. FlowEF also generally improved the spread--skill ratio and marginal coverage, although empirical $95\%$ coverage remained below its nominal level in several settings. Overall, these results show that a forecast-informed learned transport can improve both point accuracy and probabilistic performance while retaining information supplied by a classical ensemble filter.

\paragraph{Limitations}
FlowEF is currently trained separately for each baseline filter and dynamical system, which limits transfer across settings without retraining. Training also relies on simulated trajectories for which the latent state is known, so performance depends on how well the training distribution represents the regime encountered at inference. In addition, numerical integration of the learned flow introduces computational cost beyond the underlying classical analysis, and the present conditioning construction is designed primarily for the observation models considered here.

\paragraph{Future work}
Future work could develop FlowEF models that transfer across baseline filters, dynamical regimes, and observation configurations without separate retraining. Other directions include extending the conditioning construction to nonlinear, non-Gaussian, or misspecified observation models; learning localization and calibration jointly with the transport; and reducing inference cost through more efficient flow parameterizations or numerical integration schemes. These extensions would broaden the applicability and robustness of the proposed framework.

\acks{The authors acknowledge support from the Singapore Ministry of Education
under grants MOE-000618-00 and MOE-000537-00.}

\vskip 0.2in
\bibliography{refs}
\newpage
\appendix
\crefname{appendix}{Appendix}{Appendices}
\Crefname{appendix}{Appendix}{Appendices}
\crefalias{section}{appendix}
\crefalias{subsection}{appendix}
\section{Implementation details}
\label{sec:flowda-implementation-details}
In this section, we present the implementation details of the proposed method FlowEF,
which includes the data generation and splits in \Cref{app:data-splits},
the localized Gaussian sampling in \Cref{app:localized-gaussian-sampling},
the network architecture in \Cref{app:architecture},
the training procedure in \Cref{app:training},
the inference and calibration in \Cref{app:inference},
and the evaluation metrics in \Cref{app:metrics}.
The complete algorithm is summarized in \Cref{alg:flowda},
and the implementation and hyperparameter settings
for the experiments are collected in \Cref{tab:implementation-settings}.

\subsection{Data generation and splits}
\label{app:data-splits}
For each system, we simulate one trajectory of $n_{\mathrm{total}}=25{,}000$
time steps. Following standard ensemble filtering practice,
we discard an initial spin-up segment of length $n_{\mathrm{spinup}}$
before collecting the data used for training, validation, and testing.
This spin-up serves two purposes:
(i) it lets the ground truth state trajectory move away from its
arbitrary starting point and settle into the system's typical chaotic
dynamics, and
(ii) it gives the baseline filter
time to converge from its arbitrary initialization, so that neither the
ground truth nor the filter's ensemble
we used for experiments is still unstable.
From the resulting trajectory we take three disjoint
windows of assimilation times:
a training set of size $n_{\mathrm{train}}=10{,}000$,
a validation set of size $n_{\mathrm{val}}=2{,}000$,
and a test set of size $n_{\mathrm{test}}=2{,}000$.
We use validation set $\mathcal{T}_{\mathrm{val}}$
to tune the baseline filter's inflation factor and localization radius
before training FlowEF,
and tune the FlowEF's inflation factor $\alpha_{\mathrm{flow}}$ (\Cref{app:inference})
after training FlowEF.
This pipeline is repeated over $10$ data seeds, each
of which regenerates the trajectory, the split, and every other source of
randomness listed in \Cref{tab:implementation-settings}.

\subsection{Localized Gaussian sampling}
\label{app:localized-gaussian-sampling}
The taper $\bm{C}_{\mathrm{GC}}(r_{\mathrm{loc}}) \in \mathbb{R}^{d_x\times d_x}$ of \cref{eq:locgauss-dist}
is a function of pairwise grid distance. For the one-dimensional systems
(Lorenz--96, KS) with periodic index set $\{0,\ldots,d_x-1\}$, the distance
between grid points $\ell,k$ is the ring distance
$
  \rho(\ell,k)=\min\big(|\ell-k|,\,d_x-|\ell-k|\big).
$
For Kolmogorov flow, discretized on an $n_x\times n_x$ periodic grid
($d_x=n_x^2$), points are indexed by $(\ell_1,\ell_2),(k_1,k_2)\in\{0,\ldots,n_x-1\}^2$
and the distance is the torus distance
$
  \rho\big((\ell_1,\ell_2),(k_1,k_2)\big)
  =\sqrt{\rho_1(\ell_1,k_1)^2+\rho_1(\ell_2,k_2)^2},
$
with $\rho_1(a,b)=\min(|a-b|,n_x-|a-b|)$ the coordinatewise ring distance.
The entries of the taper are the Gaspari--Cohn fifth-order piecewise
polynomial \citep{gaspari1999construction} evaluated at the grid distance,
\begin{align}
    C_{\mathrm{GC},\ell k}(r_{\mathrm{loc}})
    &=G\big(\rho(\ell,k)/r_{\mathrm{loc}}\big),
    \notag
    \\
    G(z)
    &=\begin{cases}
    1-\dfrac{5}{3}z^2+\dfrac{5}{8}z^3+\dfrac{1}{2}z^4-\dfrac{1}{4}z^5,
    & 0\leq z\leq 1,\\[4pt]
    4-5z+\dfrac{5}{3}z^2+\dfrac{5}{8}z^3-\dfrac{1}{2}z^4+\dfrac{1}{12}z^5
    -\dfrac{2}{3z},
    & 1<z<2,\\[4pt]
    0, & z\geq 2,
    \end{cases}
    \label{eq:gaspari-cohn}
\end{align}
so that $\bm{C}_{\mathrm{GC}}(r_{\mathrm{loc}})$ has compact support
$\rho(\ell,k)\leq 2r_{\mathrm{loc}}$.
We sample the localized Gaussian in \cref{eq:locgauss-dist}
using the factorization below. 
Let
\begin{align}
    \bm{C}_{\mathrm{GC}}
    &=\bm{U}\bm{\Lambda}\bm{U}^\top,
    &
    \bm{\Lambda}
    &=\operatorname{diag}(\lambda_1,\ldots,\lambda_{d_x}),
    \quad \lambda_1\geq\cdots\geq\lambda_{d_x}\geq 0,
    \label{eq:taper-eigendecomposition}
\end{align}
be the eigendecomposition of the taper, with eigenvalues clipped at zero to
remove negative numerical roundoff. Define the truncation rank $r$ and 
low-rank factor $\bm{L}$ as
\begin{align}
    r
    &=\min\Big\{j\in\{1,\ldots,d_x\}:
      \textstyle\sum_{i=1}^{j}\lambda_i
      \geq 0.999\sum_{i=1}^{d_x}\lambda_i\Big\},
    &
    \bm{L}
    &=\bm{U}_r\bm{\Lambda}_r^{1/2}\in\mathbb{R}^{d_x\times r},
    \label{eq:taper-rank-and-factor}
\end{align}
where $\bm{U}_r\in\mathbb{R}^{d_x\times r}$ collects the eigenvectors
associated with $\lambda_1,\ldots,\lambda_r$ and
$\bm{\Lambda}_r=\operatorname{diag}(\lambda_1,\ldots,\lambda_r)$, so that
$\bm{L}\bm{L}^\top=\bm{U}_r\bm{\Lambda}_r\bm{U}_r^\top\approx\bm{C}_{\mathrm{GC}}$.
We get the localized Gaussian sample 
$\bm{z}_{t,0} \sim \mathcal{N}(\bar{\bm{x}}_t^f,\bm{C}_{\mathrm{GC}}\circ\bm{P}_t^f)$ 
in \cref{eq:locgauss-dist} as follows:
\begin{align}
    \bm{W}
    &=[W_{ij}]\in\mathbb{R}^{N\times r},
    \quad
    W_{ij}
    \overset{\mathrm{iid}}{\sim}\mathcal{N}(0,1),
    \quad i=1,\ldots,N,\ j=1,\ldots,r,
    \notag
    \\
    \bm{V}
    &=\frac{\bm{L}\bm{W}^\top}{\sqrt{N-1}}
      \in\mathbb{R}^{d_x\times N},
    \notag
    \\
    \bm{A}_t^f
    &
      =\bm{X}_t^f-\bar{\bm{x}}_t^f\bm{1}_N^\top
      \in\mathbb{R}^{d_x\times N},
    \notag
    \\
    \bm{z}_{t,0}
    &=\bar{\bm{x}}_t^f+\big(\bm{A}_t^f\circ\bm{V}\big)\bm{1}_N,
    \label{eq:localized-source-implementation}
\end{align}
where $\bm{1}_N\in\mathbb{R}^N$ is the all-ones vector and $\circ$ denotes
the Hadamard (elementwise) product. 
The covariance of $\bm{z}_{t,0}$ conditional on the forecast ensemble
$\bm{X}_t^f$ is
\begin{align}
    \operatorname{Cov}(\bm{z}_{t,0}\mid\bm{X}_t^f)
    &=(\bm{L}\bm{L}^\top)\circ\bm{P}_t^f
    \approx\bm{C}_{\mathrm{GC}}\circ\bm{P}_t^f,
    \label{eq:localized-source-covariance}
\end{align}
where $\bm{P}_t^f=\frac{1}{N-1}\bm{A}_t^f(\bm{A}_t^f)^\top$ defined in 
\cref{eq:forecast-moments}.
Setting $r_{\mathrm{loc}}=0$ gives the identity taper
$\bm{C}_{\mathrm{GC}}=\bm{I}_{d_x}$ and yields the diagonal Gaussian source
\begin{align}
    q_{t,0}^{\mathrm{diag}}(\cdot\mid\bm{X}_t^f)
    =\mathcal{N}\!\left(
      \bar{\bm{x}}_t^f,
      \mathrm{diag}\big({(\bm{\sigma}_t^f)}^2\big)
    \right),
    \qquad
    \bm{\sigma}_t^f=\sqrt{\mathrm{diag}(\bm{P}_t^f)}.
    \label{eq:flow-gauss}
\end{align}
This source preserves marginal forecast spread but discards cross-coordinate
correlations. 
The Gaspari--Cohn factor and its eigendecomposition
\eqref{eq:taper-eigendecomposition} are computed once for each localization radius. 
In practice, we use a dense eigendecomposition 
when $d_x$ is small and a randomized low-rank eigensolver when $d_x$ is large
and the estimated rank $r$ of \cref{eq:taper-rank-and-factor} is small.

\subsection{Architecture}
\label{app:architecture}

\paragraph{Conditioning channels}
The innovation field $\tilde{\bm{d}}_t \in \mathbb{R}^{d_x}$ and 
observation weight map $\bm{m}_{t}\in[0,1]^{d_x}$
in \Cref{sec:cond} are defined as follows:
\begin{align}
    \bm{d}_t
    &=\bm{y}_t-\bm{H}_t\bar{\bm{x}}_t^f,
    \quad
    e_t
    =\frac{1}{d_y}\bm{d}_t^\top\bm{R}^{-1}\bm{d}_t,
    \quad
    \tilde{\bm{d}}_t
    =\bm{H}_t^\top\bm{d}_t,
    \label{eq:static-innovation}
    \\
    \bm{m}_{t}[\ell]
    &=\frac{\lVert\bm{H}_{t}[:,\ell]\rVert_2}
      {\max_{1\leq k\leq d_x}\lVert\bm{H}_{t}[:,k]\rVert_2},
    \quad
    \ell\in\{1,\ldots,d_x\},
    \label{eq:observation-weight-map}
\end{align}
with the linear observation operator $\bm{H}_t \in \mathbb{R}^{d_y \times d_x}$ 
and noise covariance $\bm{R}\in\mathbb{R}^{d_y\times d_y}$ defined in
\cref{eq:linear-gaussian-analysis-model}. 
The static conditioning variables $\bm{c}_t^{\mathrm{static}}$ include the
forecast--observation channels $\bm{c}_t^{f,y}$ and analysis--forecast
channels $\bm{c}_t^{a,f}$:
\begin{align}
    \bm{c}_t^{\mathrm{static}}
    =[\bm{c}_t^{f,y},\bm{c}_t^{a,f}].
    \label{eq:static-condition}
\end{align}
The forecast--observation channels $\bm{c}_t^{f,y}$ are given by
\begin{align}
    \bm{c}_t^{f,y}
    =\left[
      \frac{\bar{\bm{x}}_t^f}{\operatorname{std}(\bar{\bm{x}}_t^f)},
      \frac{\bm{\sigma}_t^f}{\operatorname{mean}(\bm{\sigma}_t^f)},
      \frac{\tilde{\bm{d}}_t}{\operatorname{std}(\bm{d}_t)},
      \bm{m}_t,
      \log(1+e_t)\bm{1}_{d_x}
    \right].
    \label{eq:forecast-observation-condition}
\end{align}
With $\bar{\bm{x}}_t^a$, $\bm{\sigma}_t^a$ the baseline analysis mean and
spread, $\bm{\Delta}_t=\bar{\bm{x}}_t^a-\bar{\bm{x}}_t^f$ the mean
increment, and $\sigma_{\mathrm{obs}}$ the observation noise scale, the
analysis--forecast channels $\bm{c}_t^{a,f}$ are given by 
\begin{align}
    \bm{c}_t^{a,f}
    =\left[
      \frac{\bm{\Delta}_t}{\operatorname{std}(\bm{\Delta}_t)},
      \frac{\bm{\Delta}_t}{\bm{\sigma}_t^f},
      \frac{\bm{\sigma}_t^a}{\operatorname{mean}(\bm{\sigma}_t^f)},
      \log\frac{\bm{\sigma}_t^a}{\bm{\sigma}_t^f},
      1-{\left(\frac{\bm{\sigma}_t^a}{\bm{\sigma}_t^f}\right)}^{2},
      \frac{\tilde{\bm{d}}_t}{\sigma_{\mathrm{obs}}},
      \frac{|\bm{\Delta}_t|}{|\tilde{\bm{d}}_t|}
    \right].
    \label{eq:baseline-action-condition}
\end{align}

The above setting forms $d_{\mathrm{static}}=12$ channels
for static conditioning variables in \Cref{sec:cond} in total. 
Forecast and analysis spreads in these channels use the
population standard deviation, whereas the localized source uses the unbiased
$1/(N-1)$ ensemble covariance. For binary selection observation matrix $\bm{H}_t$, 
the observation weight map $\bm{m}_t$ is the binary mask of observed coordinates.
For dense observation matrix $\bm{H}_t$, 
the observation weight map $\bm{m}_t$ is computed 
from \cref{eq:observation-weight-map}. 

Because $\bm{c}_t^{\mathrm{static}}$ depends only on the assimilation time
$t$ and not on the pseudo-time $\tau$ or the ensemble member $i$, it is
precomputed once per $t$ and cached as an array of shape
$(T,d_x,d_{\mathrm{static}})$, which avoids recomputation at each of the $K$
pseudo-time steps. Since assimilation times are mutually independent and
only these $K$ steps of \cref{eq:ode-euler} are sequential, 
we vectorize over $t$: over $t\in\mathcal{T}_{\mathrm{batch}}$
during training, and over both $t\in\mathcal{T}_{\mathrm{test}}$ and
ensemble member $i$ during inference.

\paragraph{Network Architecture}
We set 
$d_\phi=4$, $d_{\mathrm{static}}=12$,
$d_{\mathrm{dyn}}=16$, 
and total number of conditioning channels is then $d_c=d_{\mathrm{static}} + d_{\mathrm{dyn}}=28$. 
The pseudo-time embedding $\bm{\phi}(\tau_t) \in \mathbb{R}^{d_\phi}$ of \cref{eq:time-embedding} is
broadcast over the $d_x$ state grids to form  
$\bm{\Phi}(\tau_t)=\bm{1}_{d_x}\bm{\phi}(\tau_t)^\top\in\mathbb{R}^{d_x\times d_{\phi}}$.
For an input $\bm{h}\in\mathbb{R}^{d_x\times c}$ with $c$ channels at each
grid point, kernel width $k$ ($k$ is odd), $w$ output channels, weights
$\bm{\Theta}\in\mathbb{R}^{k\times c\times w}$, bias $\bm{b}\in\mathbb{R}^w$,
and half-width $p_{\max}=(k-1)/2$, we write $\operatorname{Conv}^{\mathrm{period}}_{k,w}$
for the periodic (circular) convolution
\begin{align}
    \big(\operatorname{Conv}^{\mathrm{period}}_{k,w}(\bm{h})\big)[\ell,:]
    &=\bm{b}+\sum_{p=-p_{\max}}^{p_{\max}}
    \bm{h}\big[(\ell+p)\bmod d_x,:\big] \bm{\Theta}[p+p_{\max}+1,:,:],
    \\
    \ell & =0,\ldots,d_x-1,
    \label{eq:implementation-periodic-conv}
\end{align}
which wraps around the ring index set of \Cref{app:localized-gaussian-sampling}.
For Kolmogorov flow, the same rule is applied along each axis of the
$n_x\times n_x$ grid with $n_x$-periodic wraparound.
The overall velocity $\bm{v}_\theta(\bm{z}_{t,\tau},\tau;\bm{c}_{t,\tau})$ in
\cref{eq:velocity-cnn} is a composition of two periodic CNNs: an observation CNN that builds the dynamic conditioning features, and a velocity field CNN that
combines them together with the state and static channels:
\begin{align}
    \bm{c}_{t,\tau}^{\mathrm{dyn}}
    &=\operatorname{CNN}^{\mathrm{period}}_{\theta_{\mathrm{obs}}}
    \big([\bm{r}_{t,\tau},\bm{m}_t,\bm{\Phi}(\tau)]\big),
    \\
    \bm{v}_\theta(\bm{z}_{t,\tau},\tau;\bm{c}_{t,\tau})
    &=\operatorname{CNN}^{\mathrm{period}}_{\theta_{\mathrm{field}}}
    \big([\bm{z}_{t,\tau},\bm{\Phi}(\tau),\bm{c}_{t,\tau}]\big),
    \quad
    \bm{c}_{t,\tau}=[\bm{c}_t^{\mathrm{static}},\bm{c}_{t,\tau}^{\mathrm{dyn}}].
\end{align}

\begin{figure}[!htb]
\centering
\includegraphics[width=0.92\linewidth]{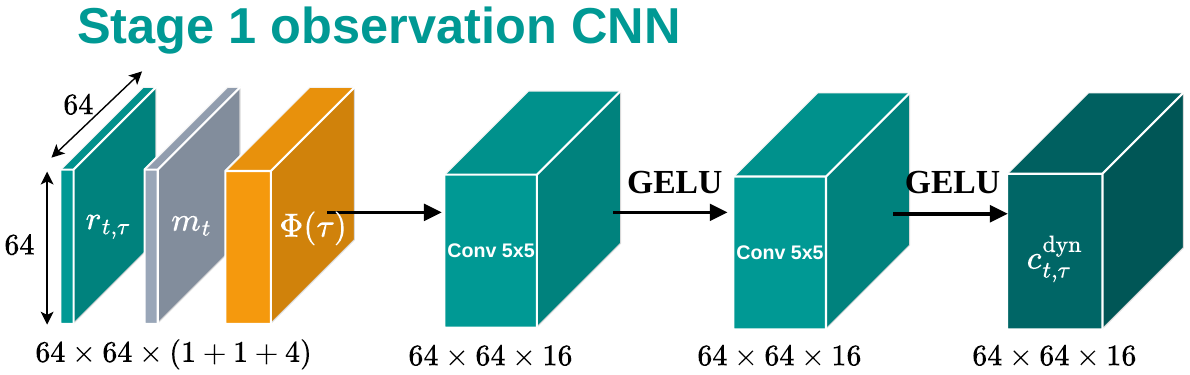}
\caption{Stage 1 observation CNN for Kolmogorov flow with $d_x=64\times64$.}
\label{fig:implementation-observation-cnn}
\end{figure}

\paragraph{Stage 1: observation CNN}
$\operatorname{CNN}^{\mathrm{period}}_{\theta_{\mathrm{obs}}}$ is a small
network that extracts dynamic features from the observation residuals $\bm{r}_{t,\tau}
=
\bm{H}_t^\top
\bm{R}^{-1/2}
\left(\bm{y}_t-\bm{H}_t\bm{z}_{t,\tau}\right)
\in\mathbb{R}^{d_x}$ of \cref{eq:obs-residual-field} and the
pseudo-time embedding $\bm{\phi}(\tau)
= [\tau,\sin(2\pi\tau),\cos(2\pi\tau),\sin(4\pi\tau)]
\in\mathbb{R}^{d_{\phi}}$ of \cref{eq:time-embedding}. Its layers alternate a periodic convolution with the
Gaussian error linear unit \citep[GELU]{hendrycks2016gaussian},
\begin{align}
    \operatorname{GELU}(x)=\frac{x}{2}\Big(1+\operatorname{erf}(x/\sqrt{2})\Big),
    \label{eq:implementation-gelu}
\end{align}
applied elementwise, where
$\operatorname{erf}(u)=\frac{2}{\sqrt{\pi}}\int_0^u e^{-s^2}\,\mathrm{d}s$
is the Gauss error function. Since the standard normal cumulative distribution
function satisfies
$\Phi_{\mathcal{N}}(x)=\frac{1}{2}\big(1+\operatorname{erf}(x/\sqrt{2})\big)$,
\cref{eq:implementation-gelu} is equivalently
$\operatorname{GELU}(x)=x\,\Phi_{\mathcal{N}}(x)=x\,\mathbb{P}(\xi\le x)$ with
$\xi\sim\mathcal{N}(0,1)$. Unlike the hard gate of ReLU, this
map is infinitely differentiable and has nonzero gradient everywhere, which
matters here because the velocity field must be differentiated through the
ODE solver during training. 
The architecture is given below and 
its implementation for Kolmogorov flow is illustrated in
\Cref{fig:implementation-observation-cnn}:
\begin{align}
    \bm{h}_{t,\tau}^{\mathrm{obs}}
    &= [\bm{r}_{t,\tau},\bm{m}_t,\bm{\Phi}(\tau)]
      \in\mathbb{R}^{d_x\times(2+d_\phi)},
    \label{eq:implementation-observation-input}\\
    \bm{u}^{(0)}_{t,\tau}
    &= \bm{h}_{t,\tau}^{\mathrm{obs}}, \\
    \bm{u}^{(j)}_{t,\tau}
    &= \operatorname{GELU}\!\left(
      \operatorname{Conv}^{\mathrm{period}}_{5,d_{\mathrm{dyn}},j}
      \big(\bm{u}^{(j-1)}_{t,\tau}\big)\right),
      \qquad j=1,2, \\
    \bm{c}_{t,\tau}^{\mathrm{dyn}}
    &= \bm{u}^{(2)}_{t,\tau}
      = \operatorname{CNN}^{\mathrm{period}}_{\theta_{\mathrm{obs}}}
      \big(\bm{h}_{t,\tau}^{\mathrm{obs}}\big)
      \in\mathbb{R}^{d_x\times d_{\mathrm{dyn}}}.
    \label{eq:implementation-observation-cnn}
\end{align}

\paragraph{Stage 2: velocity field CNN}
$\operatorname{CNN}^{\mathrm{period}}_{\theta_{\mathrm{field}}}$ 
is a larger network that computes the velocity field 
$\bm{v}_\theta(\bm{z}_{t,\tau},\tau;\bm{c}_{t,\tau})$ of \cref{eq:velocity-cnn}. 
It concatenates the transported particle $\bm{z}_{t,\tau}\in\mathbb{R}^{d_x}$, the pseudo-time embedding
$\bm{\Phi}(\tau)\in\mathbb{R}^{d_x\times d_\phi}$, the static channels $\bm{c}_t^{\mathrm{static}}\in\mathbb{R}^{d_x\times d_{\mathrm{static}}}$, and the dynamic observation features
$\bm{c}_{t,\tau}^{\mathrm{dyn}}\in\mathbb{R}^{d_x\times d_{\mathrm{dyn}}}$ channelwise into an input tensor
$\bm{h}_{t,\tau}^{\mathrm{in}}\in\mathbb{R}^{d_x\times(1+d_\phi+d_{\mathrm{static}}+d_{\mathrm{dyn}})}$, 
then applies a convolution to transform the input channels to a hidden width $w$, followed by $L=3$ residual blocks of width $w$ with layer normalization
\citep{ba2016layer}. For a hidden tensor $\bm{u}\in\mathbb{R}^{d_x\times w}$,
whose row $\bm{u}[\ell,:]\in\mathbb{R}^w$ is the channel vector at grid point
$\ell\in\{0,\ldots,d_x-1\}$, layer normalization (LN) rescales that grid point's
channels independently of every other grid point,
\begin{align}
    &\operatorname{LN}(\bm{u})[\ell,:]
    =\bm{\gamma}\circ\frac{\bm{u}[\ell,:]-\mu_\ell}{\sqrt{\sigma_\ell^2+\epsilon}}
      +\bm{\beta},
    \\
    &\mu_\ell=\frac{1}{w}\sum_{c=1}^{w}u[\ell,c],
    \quad
    \sigma_\ell^2=\frac{1}{w}\sum_{c=1}^{w}{\big(u[\ell,c]-\mu_\ell\big)}^2,
    \qquad \ell=0,\ldots,d_x-1,
    \label{eq:implementation-layernorm}
\end{align}
with learned per-channel scale and shift 
$\bm{\gamma},\bm{\beta}\in\mathbb{R}^w$, 
and a small constant $\epsilon>0$.
The pseudo-time embedding $\bm{\phi}(\tau)\in\mathbb{R}^{d_\phi}$ enters the
velocity network twice: once as the broadcast input channels 
$\bm{\Phi}(\tau)=\bm{1}_{d_x}\bm{\phi}(\tau)^\top\in\mathbb{R}^{d_x\times d_\phi}$ 
concatenated into $\bm{h}_{t,\tau}^{\mathrm{in}}$ 
(see \cref{eq:implementation-trunk-input}), 
and again inside every residual block $m=1,\ldots,L$ 
as a feature-wise linear modulation (FiLM) \citep{perez2018film}. 
After $L=3$ blocks of LayerNorm, GELU, and periodic convolutions, 
the information of $\tau$ carried by the input channels 
$\bm{h}_{t,\tau}^{\mathrm{in}}$ 
is repeatedly renormalized and transformed, 
so the deeper blocks may no longer 
see an accurate and useful signal for $\tau$. 
Since $\bm{v}_\theta(\bm{z}_{t,\tau},\tau;\bm{c}_{t,\tau})$
should vary smoothly with $\tau$ at every depth
to trace the ODE in \cref{eq:flowda-analysis-ode},
we use FiLM to re-inject $\bm{\phi}(\tau)$ directly
into each block's hidden features,
which gives every block a direct dependence on $\tau$
rather than one that is only inherited through depth. 
Concretely, FiLM rescales the hidden features
$\bm{a}^{(m)}_{t,\tau}$ inside block $m$
(see \cref{eq:implementation-velocity-cnn} below) by a scale
$\bm{\alpha}^{(m)}(\tau)\in\mathbb{R}^w$ and shifts them by
$\bm{\delta}^{(m)}(\tau)\in\mathbb{R}^w$, 
which are computed from $\bm{\phi}(\tau)$ by a
block specific affine layer,
\begin{align}
    \operatorname{Dense}_m(\bm{\phi})
    &= \bm{W}^{(m)}_{\phi}\bm{\phi}+\bm{b}^{(m)}_{\phi},
    &
    \bm{W}^{(m)}_{\phi}&\in\mathbb{R}^{2w\times d_\phi},\;\;
    \bm{b}^{(m)}_{\phi}\in\mathbb{R}^{2w},
    \label{eq:implementation-film-dense}
    \\
    [\bm{\alpha}^{(m)}(\tau),\bm{\delta}^{(m)}(\tau)]
    &= \operatorname{Dense}_m\big(\bm{\phi}(\tau)\big),
    &
    m&=1,\ldots,L,
    \label{eq:implementation-film-split}
\end{align}
with weights $\bm{W}^{(m)}_{\phi}$ and bias $\bm{b}^{(m)}_{\phi}$ learned
separately for each block $m$.
The architecture of the velocity CNN is given below 
and its implementation for Kolmogorov flow is shown 
in \Cref{fig:implementation-velocity-cnn}:
\begin{align}
    \bm{h}_{t,\tau}^{\mathrm{in}}
    &=\left[\bm{z}_{t,\tau},\bm{\Phi}(\tau),
      \bm{c}_t^{\mathrm{static}},\bm{c}_{t,\tau}^{\mathrm{dyn}}\right]
    \in\mathbb{R}^{d_x\times(1+d_\phi+d_{\mathrm{static}}+d_{\mathrm{dyn}})},
    \label{eq:implementation-trunk-input}
    \\
    \bm{s}^{(0)}_{t,\tau}
    &= \operatorname{Conv}^{1\times1}_{w,\mathrm{in}}
       \big(\bm{h}_{t,\tau}^{\mathrm{in}}\big),
    \notag
    \\
    \bm{a}^{(m)}_{t,\tau}
    &= \operatorname{Conv}^{\mathrm{period}}_{5,w,m,1}
       \!\left(\operatorname{GELU}\!\left(
       \operatorname{LN}\big(\bm{s}^{(m-1)}_{t,\tau}\big)
       \right)\right),
    \\
    \bm{f}^{(m)}_{t,\tau}
    &= \big(\bm{1}+\bm{\alpha}^{(m)}(\tau)\big)
       \circ\bm{a}^{(m)}_{t,\tau}+\bm{\delta}^{(m)}(\tau),
    \\
    \bm{s}^{(m)}_{t,\tau}
    &= \bm{s}^{(m-1)}_{t,\tau}
       +\operatorname{Conv}^{\mathrm{period}}_{5,w,m,2}
       \!\left(\operatorname{GELU}\!\left(
       \operatorname{LN}\big(\bm{f}^{(m)}_{t,\tau}\big)
       \right)\right),
    \notag
    \\
    \bm{v}_\theta(\bm{z}_{t,\tau},\tau;\bm{c}_{t,\tau})
    &= \operatorname{Conv}^{1\times1}_{1,\mathrm{out}}
       \big(\bm{s}^{(L)}_{t,\tau}\big),
       \qquad m=1,\ldots,L.
    \label{eq:implementation-velocity-cnn}
\end{align}
The FiLM scale and shift are broadcast across grid points, whereas the
periodic convolutions and layer normalization are applied coordinatewise. 
The LayerNorm parameters $\bm{\gamma},\bm{\beta}$
and the FiLM weights $\bm{W}^{(m)}_{\phi},\bm{b}^{(m)}_{\phi}$ are part of
$\theta_{\mathrm{field}}$, so they are learned jointly with the convolution
weights by minimizing the flow matching loss.

\begin{figure}[!htb]
\centering
\includegraphics[width=\linewidth]{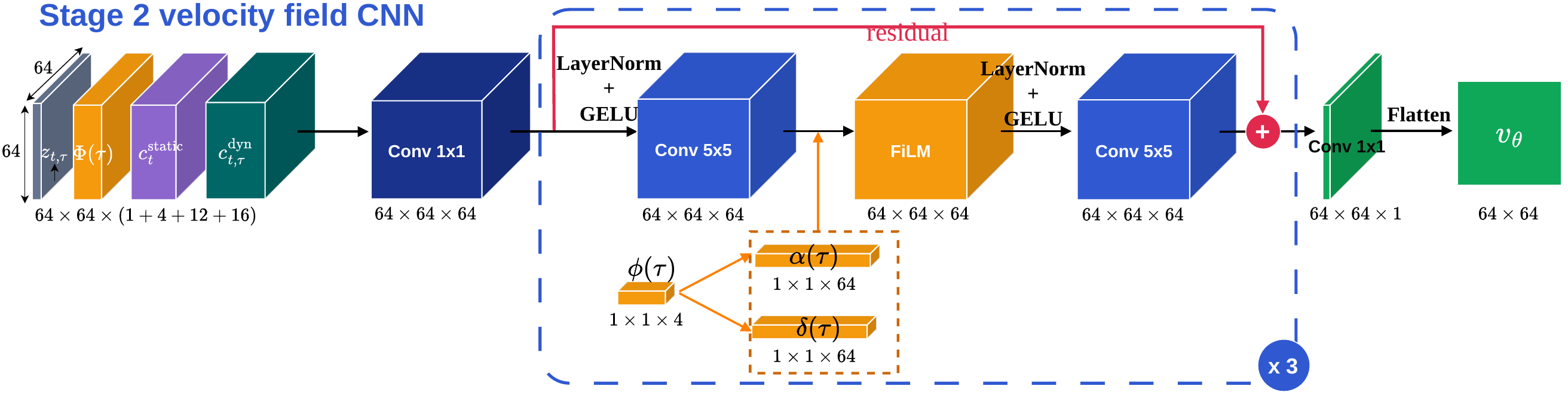}
\caption{Stage 2 velocity field CNN for Kolmogorov flow with $d_x=64 \times 64$.}
\label{fig:implementation-velocity-cnn}
\end{figure}

For Lorenz--96 and Kuramoto--Sivashinsky, 
all the inputs and hidden features are 1D tensors of length $d_x$, 
therefore we use circular padding in the 1D periodic convolutions 
with hidden width $w=32$ and kernel width $k=5$. 
For Kolmogorov flow, the inputs and hidden features are 2D tensors 
of shape $n_x\times n_x$ with $d_x=n_x^2$, 
so we use wrap padding in both spatial directions 
with hidden width $w=64$, and $5\times5$ kernels.

\subsection{Training}
\label{app:training}
At each iteration step in training, we sample a minibatch
$\mathcal{T}_{\mathrm{batch}}\subset\mathcal{T}_{\mathrm{train}}$ of $M=32$
assimilation times and one Monte Carlo draw $(\bm{z}_{t,0},\tau_t)$ for each
$t\in\mathcal{T}_{\mathrm{batch}}$, 
which generates the unbiased estimator 
of \cref{eq:finite-training-loss} as below:
\begin{align}
    \widehat{\mathcal{L}}_{\mathrm{train}}(\theta)
    =\frac{1}{M}\sum_{t\in\mathcal{T}_{\mathrm{batch}}}
      \big\lVert
        \bm{v}_\theta(\bm{z}_{t,\tau_t}^{\mathrm{train}},\tau_t;\bm{c}_{t,\tau_t})
        -(\bm{x}_t^\star-\bm{z}_{t,0})
      \big\rVert_2^2.
    \label{eq:implementation-minibatch-loss}
\end{align}
We then compute the gradient 
$\nabla_\theta\widehat{\mathcal{L}}_{\mathrm{train}}$ 
of this minibatch loss with respect to the 
trainable parameters $\theta$ of the velocity network, 
and update $\theta$ using the Adam optimizer over $100$ epochs 
with cosine decay learning rate schedule 
at initial learning rate $10^{-3}$, 
and gradient clipping at $\ell_2$ norm $5$. 
All the training settings 
are listed in \Cref{tab:implementation-settings}.

\subsection{Inference and calibration}
\label{app:inference}
At inference, we integrate the ODE in
\cref{eq:flowda-analysis-ode} with $K=10$ Euler steps, 
which means we compute the velocity field at pseudo-times 
$\tau_k=k/K$ for $k=0,\ldots,K-1$. 
The static conditioning tensor $\bm{c}_t^{\mathrm{static}}$ is
computed once per assimilation time from the 
forecast and analysis ensembles 
$\{ \bm{X}_t^f, \bm{X}_t^a \}$ from the baseline filter 
and the observation $\bm{y}_t$, 
whereas the dynamic conditioning $\bm{c}_{t,\tau_k}^{(i)}$ 
is recomputed at every Euler step $k$ from the current 
transported particle $\bm{z}_{t,\tau_k}^{(i)}$ 
and the observation $\bm{y}_t$, 
and the velocity field 
$\bm{v}_\theta(\bm{z}_{t,\tau_k}^{(i)},\tau_k;\bm{c}_{t,\tau_k}^{(i)})$ 
is then used to update the particle to the next pseudo-time step
$\bm{z}_{t,\tau_{k+1}}^{(i)}$ via \cref{eq:ode-euler}. 
The dynamic conditioning tracks how far 
the transported particle is from the observation 
in the intermediate pseudo-time steps $\tau_k$. 
The raw analysis ensemble is obtained at final pseudo-time $\tau_K=1$ as 
$\bm{Z}_{t,1}:=[\bm{z}_{t,1}^{(1)},\ldots,\bm{z}_{t,1}^{(N)}]$.

After obtaining the raw analysis ensemble
$\bm{Z}_{t,1}$ as described above, we apply
a multiplicative factor
$\alpha_{\mathrm{flow}}$ to $\bm{Z}_{t,1}$ to
improve the ensemble spread and reduce underdispersion.
On the held-out validation set $\mathcal{T}_{\mathrm{val}}$, 
we search over a grid of 
$\alpha_{\mathrm{flow}}\in\{0.80,0.81,\ldots,2.00\}$, 
which is a $121$-point grid with
step $\Delta\alpha_{\mathrm{flow}}=0.01$, 
for the value of $\alpha_{\mathrm{flow}}$ 
by minimizing the CRPS defined in \cref{eq:ensemble-crps},
\begin{align}
    \alpha_{\mathrm{flow}}
    =\argmin_{\alpha\in\{0.80,0.81,\ldots,2.00\}}
    \operatorname{CRPS}_{\mathrm{val}}(\alpha).
    \label{eq:implementation-alpha-calibration}
\end{align}
Finally, we apply the calibrated inflation factor $\alpha_{\mathrm{flow}}$ to
each member of the raw analysis ensemble about its mean,
\begin{align}
    \hat{\bm{x}}_t^{a,(i)}
    =
    \hat{\bar{\bm{x}}}_t^a
    +\alpha_{\mathrm{flow}}\big(\bm{z}_{t,1}^{(i)}-\hat{\bar{\bm{x}}}_t^a\big),
    \qquad
    \hat{\bar{\bm{x}}}_t^a
    =\frac{1}{N}\sum_{i=1}^{N}\bm{z}_{t,1}^{(i)},
    \qquad i=1,\ldots,N,
    \label{eq:implementation-inflation}
\end{align}
where $\hat{\bar{\bm{x}}}_t^a$ is the ensemble mean 
and isn't changed by inflation. 
The final calibrated analysis ensemble
$\hat{\bm{X}}_t^a=[\hat{\bm{x}}_t^{a,(1)},\ldots,\hat{\bm{x}}_t^{a,(N)}]$ is
then used for evaluation on the test set $\mathcal{T}_{\mathrm{test}}$.
All the inference and calibration settings are also listed in \Cref{tab:implementation-settings}.

\begin{table}[!ht]
\centering
\caption{FlowEF settings for the experiments.}
\label{tab:implementation-settings}
\small
\setlength{\tabcolsep}{4pt}
\renewcommand{\arraystretch}{1.08}
\newcolumntype{V}{>{\centering\arraybackslash}p{0.15\linewidth}}
\begin{tabular}{p{0.43\linewidth}VVV}
\toprule
\textbf{Setting} & \textbf{Lorenz--96} & \textbf{KS} & \textbf{Kolmogorov} \\
\midrule
\multicolumn{4}{l}{\textbf{Data}} \\
Trajectory length $n_{\mathrm{total}}$ & \multicolumn{3}{c}{$25{,}000$} \\
Spin-up time steps $n_{\mathrm{spinup}}$ & $200$ & $2{,}200$ & $700$ \\
Training size $n_{\mathrm{train}}$ & \multicolumn{3}{c}{$10{,}000$} \\
Validation size $n_{\mathrm{val}}$ & \multicolumn{3}{c}{$2{,}000$} \\
Test size $n_{\mathrm{test}}$ & \multicolumn{3}{c}{$2{,}000$} \\
Training source & \multicolumn{3}{c}{$q_{t,0}^{\mathrm{train}}\sim\mathcal{N}\big(\bar{\bm{x}}_t^f,\bm{C}_{\mathrm{GC}}(r_{\mathrm{loc}})\circ\bm{P}_t^f\big)$ (\cref{eq:locgauss-dist})} \\
Inference source & \multicolumn{3}{c}{$\bm{z}_{t,0}^{(i)}=\bm{x}_t^{f,(i)}$, $i=1,\ldots,N$ (\cref{eq:ordered-deploy-source})} \\
Data seeds & \multicolumn{3}{c}{$1000,1010,\ldots,1090$} \\
\midrule
\multicolumn{4}{l}{\textbf{Architecture}} \\
Static conditioning channels $d_{\mathrm{static}}$ & \multicolumn{3}{c}{$12$} \\
Dynamic conditioning channels $d_{\mathrm{dyn}}$ & \multicolumn{3}{c}{$16$} \\
Total conditioning channels $d_c$ & \multicolumn{3}{c}{$28$} \\
Pseudo-time embedding dimension $d_\phi$ & \multicolumn{3}{c}{$4$} \\
Residual-block count $L$ of $\bm{v}_\theta$ & $3$ & $3$ & $3$ \\
Width $w$ of $\bm{v}_\theta$ & $32$ & $32$ & $64$ \\
Conv.\ kernel width $k$ of $\bm{v}_\theta$ and $\bm{c}_{t,\tau}^{\mathrm{dyn}}$ & $5$ & $5$ & $5\times5$ \\
Padding of $\bm{v}_\theta$ and $\bm{c}_{t,\tau}^{\mathrm{dyn}}$ & circular & circular & wrap \\
\midrule
\multicolumn{4}{l}{\textbf{Training}} \\
Optimizer & \multicolumn{3}{c}{Adam} \\
Learning-rate schedule & \multicolumn{3}{c}{Cosine decay} \\
Initial learning rate & \multicolumn{3}{c}{$10^{-3}$} \\
Training epochs & \multicolumn{3}{c}{$100$} \\
Minibatch size $M=|\mathcal{T}_{\mathrm{batch}}|$ & \multicolumn{3}{c}{$32$} \\
Grad.\ clipping ($\ell_2$ norm of $\nabla_\theta\mathcal{L}_{\mathrm{train}}$) & \multicolumn{3}{c}{$5$} \\
\midrule
\multicolumn{4}{l}{\textbf{Inference and calibration}} \\
Euler steps $K$ & \multicolumn{3}{c}{$10$} \\
Inflation factor $\alpha_{\mathrm{flow}}$ search grid & \multicolumn{3}{c}{$\alpha_{\mathrm{flow}}\in\{0.80,0.81,\ldots,2.00\}$} \\
\bottomrule
\end{tabular}
\end{table}

\begin{algorithm}[!htbp]
\renewcommand{\algorithmicrequire}{\textbf{Inputs:}}
\renewcommand{\algorithmicensure}{\textbf{Outputs:}}
\caption{FlowEF training and analysis update}
\label{alg:flowda}
\begin{algorithmic}[1]
\REQUIRE $\mathsf{SSM}$ $(f,h,\bm{H}_t,\bm{R})$; baseline
ensembles $\{ \bm{X}_t^f,\bm{X}_t^a \}$;
observations $\bm{y}_t$;
ground truth $\bm{x}_t^\star$;
training/validation/test splits
$\{ \mathcal{T}_{\mathrm{train}},
\mathcal{T}_{\mathrm{val}},
\mathcal{T}_{\mathrm{test}} \}$;
ensemble size $N$; Euler steps $K$
\ENSURE learned analysis ensembles
${\big\{\hat{\bm{X}}_t^a\big\}}_{t\in\mathcal{T}_{\mathrm{test}}}$
\STATE \textbf{Training:} initialize $\theta$
\FOR{minibatches $\mathcal{T}_{\mathrm{batch}}\subset\mathcal{T}_{\mathrm{train}}$}
    \STATE $\bm{z}_{t,0}\sim q_{t,0}^{\mathrm{train}}(\cdot\mid\bm{X}_t^f)$,
    $\tau_t\sim\operatorname{Uniform}(0,1)$ for
    $t\in\mathcal{T}_{\mathrm{batch}}$ in parallel
    \algref{\cref{eq:locgauss-dist}}
    \STATE $\bm{z}_{t,\tau_t}^{\mathrm{train}}=(1-\tau_t)\bm{z}_{t,0}+\tau_t\bm{x}_t^\star$
    \STATE $\bm{c}_t^{\mathrm{static}}=C(\bm{X}_t^f,\bm{X}_t^a,\bm{y}_t)$
    \algref{\cref{eq:static-condition}}
    \STATE $\bm{r}_{t,\tau_t}=\bm{H}_t^\top\bm{R}^{-1/2}
    (\bm{y}_t-\bm{H}_t\bm{z}_{t,\tau_t}^{\mathrm{train}})$,
    \quad
    $\bm{\Phi}(\tau_t)=\bm{1}_{d_x}\bm{\phi}(\tau_t)^\top$
    \algref{\cref{eq:obs-residual-field,eq:time-embedding}}
    \STATE $\bm{c}_{t,\tau_t}^{\mathrm{dyn}}=\operatorname{CNN}^{\mathrm{period}}_{\theta_{\mathrm{obs}}}
    ([\bm{r}_{t,\tau_t},\bm{m}_t,\bm{\Phi}(\tau_t)])$
    \algref{\cref{eq:dynamic-conditioning}}
    \STATE $\bm{c}_{t,\tau_t}=[\bm{c}_t^{\mathrm{static}},
    \bm{c}_{t,\tau_t}^{\mathrm{dyn}}]$
    \STATE $\bm{v}_{\theta}(\bm{z}_{t,\tau_t}^{\mathrm{train}},\tau_t;\bm{c}_{t,\tau_t})=
    \operatorname{CNN}^{\mathrm{period}}_{\theta_{\mathrm{field}}}
    ([\bm{z}_{t,\tau_t}^{\mathrm{train}},\bm{\Phi}(\tau_t),\bm{c}_{t,\tau_t}])$
    \algref{\cref{eq:velocity-cnn}}
    \STATE Compute $\mathcal{L}_{\mathrm{train}}$
    \algref{\cref{eq:finite-training-loss}}
    \STATE $\theta\leftarrow\operatorname{Adam}
    (\theta,\nabla_\theta\mathcal{L}_{\mathrm{train}})$
\ENDFOR
\STATE $\hat{\theta} \leftarrow \theta$
\STATE \textbf{Calibration:} $\alpha_{\mathrm{flow}}=\argmin_{\alpha}
\operatorname{CRPS}_{\mathrm{val}}(\alpha)$
\algref{\cref{eq:ensemble-crps}}
\STATE \textbf{Inference:}
$\bm{z}_{t,0}^{(i)}=\bm{x}_t^{f,(i)}$,
$\bm{c}_t^{\mathrm{static}}=C(\bm{X}_t^f,\bm{X}_t^a,\bm{y}_t)$ for
$t\in\mathcal{T}_{\mathrm{test}}$, $i=1,\ldots,N$ in parallel
\FOR{$k=0,\ldots,K-1$}
    \STATE $\tau_k=k/K$
    \STATE $\bm{r}_{t,\tau_k}^{(i)}=\bm{H}_t^\top\bm{R}^{-1/2}
    (\bm{y}_t-\bm{H}_t\bm{z}_{t,\tau_k}^{(i)})$,
    \quad
    $\bm{\Phi}(\tau_k)=\bm{1}_{d_x}\bm{\phi}(\tau_k)^\top$
    \algref{\cref{eq:obs-residual-field,eq:time-embedding}}
    \STATE $\bm{c}_{t,\tau_k}^{\mathrm{dyn},(i)}=\operatorname{CNN}^{\mathrm{period}}_{\hat{\theta}_{\mathrm{obs}}}
    ([\bm{r}_{t,\tau_k}^{(i)},\bm{m}_t,\bm{\Phi}(\tau_k)])$
    \algref{\cref{eq:dynamic-conditioning}}
    \STATE $\bm{c}_{t,\tau_k}^{(i)}=[\bm{c}_t^{\mathrm{static}},
    \bm{c}_{t,\tau_k}^{\mathrm{dyn},(i)}]$
    \STATE $\bm{v}_{t,\tau_k}^{(i)}=\bm{v}_{\hat{\theta}}
    (\bm{z}_{t,\tau_k}^{(i)},\tau_k;\bm{c}_{t,\tau_k}^{(i)})$,
    \quad
    $\bm{z}_{t,\tau_{k+1}}^{(i)}=\bm{z}_{t,\tau_k}^{(i)}
    +K^{-1}\bm{v}_{t,\tau_k}^{(i)}$
    \algref{\cref{eq:flowda-analysis-ode}}
\ENDFOR
\STATE $\hat{\bar{\bm{x}}}_t^a = N^{-1}\sum_{i=1}^{N}\bm{z}_{t,1}^{(i)}$,
\quad
$\hat{\bm{x}}_t^{a,(i)} = \hat{\bar{\bm{x}}}_t^a+
\alpha_{\mathrm{flow}}(\bm{z}_{t,1}^{(i)}-\hat{\bar{\bm{x}}}_t^a)$
\algref{\cref{eq:flowda-inflation}}
\STATE \textbf{return}
$\big\{
    \hat{\bm{X}}_t^a =
    [\hat{\bm{x}}_t^{a,(1)},\ldots,\hat{\bm{x}}_t^{a,(N)}]
\big\}_{t\in\mathcal{T}_{\mathrm{test}}}$
\end{algorithmic}
\end{algorithm}

\subsection{Metrics}
\label{app:metrics}
In \Cref{sec:setup}, we evaluate the performance of FlowEF 
using four metrics that measure the 
different aspects of the quality 
of the analysis ensemble. 
The metrics are computed on the test set 
$\mathcal{T}_{\mathrm{test}}$ 
and averaged over $10$ seeds. 
Precisely, let 
$T_{\mathrm{test}}=|\mathcal{T}_{\mathrm{test}}|$, write
$\hat{x}_{t,j}^{a,(i)}$ for member $i$ and state coordinate $j$ 
of the final analysis ensemble at time $t\in\mathcal{T}_{\mathrm{test}}$,  
and set $\hat{\bar{x}}_{t,j}^a=N^{-1}\sum_{i=1}^N \hat{x}_{t,j}^{a,(i)}$. 
Each metric below is computed for a single seed 
and then aggregated as mean $\pm$ standard deviation 
over $10$ seeds under the experimental protocol 
described in \Cref{sec:setup}. 
The four metrics are RMSE, CRPS, SSR, and $\operatorname{Cov}_{95}$,
which are defined below.

\paragraph{RMSE}
RMSE measures the point accuracy of the analysis ensemble mean, 
and is defined as
\begin{align}
\operatorname{RMSE}
&=\frac{1}{T_{\mathrm{test}}}\sum_{t\in\mathcal{T}_{\mathrm{test}}}
\left[\frac{1}{d_x}\sum_{j=1}^{d_x}
\bigl(\hat{\bar{x}}_{t,j}^a-x_{t,j}^\star\bigr)^2\right]^{1/2},
\label{eq:test-rmse}
\end{align}
where $x_{t,j}^\star$ is the ground truth for state coordinate $j$ at time $t$. 

\paragraph{CRPS}
CRPS is a proper scoring rule on the full ensemble that jointly measures
accuracy and how tightly the predictive ensemble concentrates around
the truth. The CRPS between the analysis ensemble $\hat{\bm{X}}_t^a$ 
and the ground truth $\bm{x}_t^\star$ is defined as
\begin{align*}
\operatorname{CRPS}_t(\hat{\bm{X}}_t^a,\bm{x}_t^\star)
=\frac{1}{d_x}\sum_{\ell=1}^{d_x}
\left[
  \frac{1}{N}\sum_{i=1}^{N}
  \left|\hat{x}_{t,\ell}^{a,(i)}-x_{t,\ell}^\star\right|
  -\frac{1}{2N^2}\sum_{i=1}^{N}\sum_{j=1}^{N}
  \left|\hat{x}_{t,\ell}^{a,(i)}-\hat{x}_{t,\ell}^{a,(j)}\right|
\right],
\end{align*}
which is averaged uniformly over test times and state coordinates. 
Unlike RMSE, CRPS is minimized only when the 
ensemble's predictive distribution matches the 
true data generating distribution.

\paragraph{SSR}
SSR is a calibration diagnostic that compares the ensemble's own spread to
its actual error. It's defined as 
\begin{align}
\operatorname{SSR}
&=\left[
\frac{(T_{\mathrm{test}}d_x)^{-1}\sum_{t,j}s_{t,j}^2}
     {(T_{\mathrm{test}}d_x)^{-1}\sum_{t,j}(\hat{\bar{x}}_{t,j}^a-x_{t,j}^\star)^2}
\right]^{1/2},
\qquad
s_{t,j}^2
=\frac{1}{N}\sum_{i=1}^N
\bigl(\hat{x}_{t,j}^{a,(i)}-\hat{\bar{x}}_{t,j}^a\bigr)^2,
\label{eq:test-ssr}
\end{align}
which computes the square root of the ratio 
between the time- and coordinate-averaged 
ensemble variance $s_{t,j}^2$ and the 
time- and coordinate-averaged 
squared error of the ensemble mean. 
SSR${}=1$ is the reference value for a 
perfectly dispersed ensemble, 
SSR${}<1$ indicates underdispersion, 
which means the ensemble is overconfident 
relative to its actual error, 
and SSR${}>1$ indicates overdispersion, 
which means the ensemble is underconfident.

\paragraph{Coverage probability}
$\operatorname{Cov}_{95}$ checks whether
the truth actually falls inside the
ensemble's $95\%$ empirical uncertainty interval. 
It's defined as
\begin{align}
\operatorname{Cov}_{95}
&=\frac{1}{T_{\mathrm{test}}d_x}
\sum_{t=1}^{T_{\mathrm{test}}}\sum_{j=1}^{d_x}
\mathds{1} \left\{
\widehat q_{t,j}(0.025)\leq x_{t,j}^\star
\leq\widehat q_{t,j}(0.975)\right\},
\label{eq:test-cov95}
\end{align}
where $\mathds{1}\{\cdot\}$ is the indicator function and $\widehat q_{t,j}(p)$ is the
linearly interpolated empirical $p$-quantile of the 
$N$ analysis ensemble members $\{ \hat{\bm{x}}_t^{a,(i)} \}_{i=1}^N$. 
We compute quantity coverage 
at each test time and state coordinate independently 
and then averaged over all times and coordinates,
it's therefore a marginal and coordinatewise coverage 
rather than joint coverage of the full state vector at once. 
$\operatorname{Cov}_{95}=0.95$ is the reference value, 
$\operatorname{Cov}_{95}<0.95$ indicate undercoverage, 
which means the ensemble interval is too narrow, 
and values above indicate overcoverage, 
which means the ensemble interval is too wide.

\section{Benchmark comparison}
\label{app:benchmarks}

In this section, we compare FlowEF against all DA methods considered
in the experiments. 
\Cref{tab:benchmark-methods} lists the representative benchmark methods we used 
throughout the experiments, together with a brief description for each method. 
We show the comparison results for 
low-dimensional Lorenz--96 system in \Cref{tab:bench-lorenz96},
for high-dimensional Lorenz--96 system in \Cref{tab:bench-lorenz96-large}, 
for Kuramoto--Sivashinsky equation in \Cref{tab:bench-ks}, 
and for Kolmogorov flow in \Cref{tab:bench-kol}.

\begingroup
\footnotesize
\renewcommand{\baselinestretch}{1}\selectfont
\setlength{\tabcolsep}{3pt}
\renewcommand{\arraystretch}{1.2}
\newcommand{\catlab}[2]{\multirow{#1}{*}{\rotatebox[origin=c]{90}{\scriptsize#2}}}
\newcommand{\pad}[1]{\rule[-0.42\dimexpr#1\relax]{0pt}{#1}}
\begin{longtable}{%
  c%
  >{\raggedright\arraybackslash}p{0.36\linewidth}%
  >{\raggedright\arraybackslash\hspace{0pt}}p{0.56\linewidth}}
\caption{Representative benchmark methods for the experiments.}\label{tab:benchmark-methods}\\
\toprule
& Method & Description \\
\midrule
\endfirsthead
\toprule & Method & Description \\
\midrule
\endhead
\catlab{4}{Deterministic}
 & ADEnKF \citep{chen2022autodifferentiable} & Auto-differentiable ensemble Kalman filter. \\*
 & PSEF \citep{bach2026learning} & Proper scoring ensemble filter. \\*
 & DAN \citep{boudier2023data} & Data assimilation network. \\*
 & 4DVarNet \citep{fablet2021learning} & Four-dimensional variational network. \\
\midrule
\catlab{4}{Stochastic}
 & SDA \citep{rozet2023sda} & Score-based data assimilation. \\*
 & CCDDA \citep{binder2026ccdda} & Closed-form conditional diffusion data assimilation. \\*
 & DAISI \citep{andrae2026daisi} & Inverse sampling with stochastic interpolants. \\*
 & GenDA \citep{martin2025genda} & Generative data assimilation. \\
\midrule
\catlab{1}{Flow}
 & \pad{1.8em}FlowDAS \citep{chen2025flowdas} & Flow assimilation with stochastic interpolants. \\
\bottomrule
\end{longtable}
\endgroup

\begin{table}[!ht]
\centering
\caption{Benchmark comparison of FlowEF against all methods on the test
set (mean $\pm$ std over $10$ seeds for every method)
for the low-dimensional Lorenz--96 system ($d_x=40$, $d_y=10$, $N=30$).}
\label{tab:bench-lorenz96}
\footnotesize
\setlength{\tabcolsep}{1pt}
\renewcommand{\arraystretch}{0.92}
\resizebox{\linewidth}{!}{%
\begin{tabular}{lcccc}
\toprule
Method & RMSE $\downarrow$ & CRPS $\downarrow$ & SSR ($\to1$) & Cov$_{95}$ ($\to0.95$) \\
\midrule
Baseline (DEnKF) \citep{sakov2008deterministic} & $0.894 \pm 0.154$ & $0.438 \pm 0.069$ & $0.753 \pm 0.166$ & $0.893 \pm 0.026$ \\
\rowcolor{gray!15}
\textbf{FlowEF (ours)} & $\mathbf{0.873 \pm 0.127}$ & $\mathbf{0.420 \pm 0.053}$ & $\mathbf{0.948 \pm 0.145}$ & $\mathbf{0.905 \pm 0.020}$ \\
\midrule
ADEnKF \citep{chen2022autodifferentiable} & $0.891 \pm 0.152$ & $0.434 \pm 0.069$ & $0.770 \pm 0.256$ & $0.868 \pm 0.047$ \\
PSEF \citep{bach2026learning} & $0.986 \pm 0.168$ & $0.482 \pm 0.084$ & $0.779 \pm 0.181$ & $0.857 \pm 0.045$ \\
DAN \citep{boudier2023data} & $1.004 \pm 0.148$ & $0.545 \pm 0.074$ & $0.530 \pm 0.104$ & $0.627 \pm 0.028$ \\
4DVarNet \citep{fablet2021learning} & $1.114 \pm 0.130$ & $0.580 \pm 0.065$ & $0.476 \pm 0.076$ & $0.674 \pm 0.046$ \\
\midrule
SDA \citep{rozet2023sda} & $1.005 \pm 0.132$ & $0.659 \pm 0.066$ & $0.024 \pm 0.004$ & $0.082 \pm 0.007$ \\
CCDDA \citep{binder2026ccdda} & $1.118 \pm 0.141$ & $0.754 \pm 0.076$ & $0.186 \pm 0.033$ & $0.102 \pm 0.007$ \\
DAISI \citep{andrae2026daisi} & $0.939 \pm 0.147$ & $0.483 \pm 0.070$ & $0.581 \pm 0.120$ & $0.713 \pm 0.031$ \\
GenDA \citep{martin2025genda} & $0.938 \pm 0.149$ & $0.461 \pm 0.067$ & $0.747 \pm 0.145$ & $0.693 \pm 0.025$ \\
\midrule
FlowDAS \citep{chen2025flowdas} & $0.994 \pm 0.130$ & $0.622 \pm 0.063$ & $0.174 \pm 0.053$ & $0.259 \pm 0.056$ \\
\bottomrule
\end{tabular}%
}
\end{table}

\begin{table}[!ht]
\centering
\caption{Benchmark comparison of FlowEF against all methods on the test
set (mean $\pm$ std over $10$ seeds for every method)
for the high-dimensional Lorenz--96 system ($d_x=100$, $d_y=20$, $N=30$).}
\label{tab:bench-lorenz96-large}
\footnotesize
\setlength{\tabcolsep}{1pt}
\renewcommand{\arraystretch}{0.92}
\resizebox{\linewidth}{!}{%
\begin{tabular}{lcccc}
\toprule
Method & RMSE $\downarrow$ & CRPS $\downarrow$ & SSR ($\to1$) & Cov$_{95}$ ($\to0.95$) \\
\midrule
Baseline (DEnKF) \citep{sakov2008deterministic} & $2.150 \pm 0.102$ & $1.030 \pm 0.056$ & $0.758 \pm 0.234$ & $0.848 \pm 0.059$ \\
\rowcolor{gray!15}
\textbf{FlowEF (ours)} & $\mathbf{1.886 \pm 0.081}$ & $\mathbf{0.874 \pm 0.044}$ & $\mathbf{0.969 \pm 0.083}$ & $\mathbf{0.886 \pm 0.020}$ \\
\midrule
ADEnKF \citep{chen2022autodifferentiable} & $2.136 \pm 0.102$ & $1.014 \pm 0.057$ & $0.723 \pm 0.232$ & $0.815 \pm 0.061$ \\
PSEF \citep{bach2026learning} & $2.212 \pm 0.101$ & $1.070 \pm 0.063$ & $0.785 \pm 0.215$ & $0.826 \pm 0.064$ \\
DAN \citep{boudier2023data} & $2.211 \pm 0.096$ & $1.190 \pm 0.062$ & $0.419 \pm 0.122$ & $0.504 \pm 0.056$ \\
4DVarNet \citep{fablet2021learning} & $2.309 \pm 0.079$ & $1.217 \pm 0.049$ & $0.401 \pm 0.110$ & $0.557 \pm 0.055$ \\
\midrule
SDA \citep{rozet2023sda} & $2.218 \pm 0.092$ & $1.417 \pm 0.077$ & $0.045 \pm 0.005$ & $0.144 \pm 0.028$ \\
CCDDA \citep{binder2026ccdda} & $2.654 \pm 0.238$ & $1.735 \pm 0.189$ & $0.118 \pm 0.021$ & $0.043 \pm 0.004$ \\
DAISI \citep{andrae2026daisi} & $2.159 \pm 0.094$ & $1.052 \pm 0.054$ & $0.694 \pm 0.206$ & $0.773 \pm 0.070$ \\
GenDA \citep{martin2025genda} & $2.169 \pm 0.096$ & $1.032 \pm 0.054$ & $0.743 \pm 0.189$ & $0.701 \pm 0.053$ \\
\midrule
FlowDAS \citep{chen2025flowdas} & $2.144 \pm 0.082$ & $1.292 \pm 0.061$ & $0.215 \pm 0.107$ & $0.257 \pm 0.052$ \\
\bottomrule
\end{tabular}%
}
\end{table}

\begin{table}[!ht]
\centering
\caption{Benchmark comparison of FlowEF against all methods on the test
set (mean $\pm$ std over $10$ seeds for every method)
for Kuramoto--Sivashinsky ($d_x=128$, $d_y=16$, $N=20$), against the DEnKF
baseline.}
\label{tab:bench-ks}
\footnotesize\setlength{\tabcolsep}{1pt}\renewcommand{\arraystretch}{0.92}
\resizebox{\linewidth}{!}{\begin{tabular}{lcccc}
\toprule Method & RMSE $\downarrow$ & CRPS $\downarrow$ & SSR ($\to1$) & Cov$_{95}$ ($\to0.95$) \\
\midrule
Baseline (DEnKF) \citep{sakov2008deterministic} & $0.502 \pm 0.032$ & $0.257 \pm 0.018$ & $0.826 \pm 0.093$ & $0.840 \pm 0.023$ \\
\rowcolor{gray!15}\textbf{FlowEF (ours)} & $\mathbf{0.483 \pm 0.024}$ & $\mathbf{0.248 \pm 0.013}$ & $\mathbf{0.939 \pm 0.073}$ & $0.854 \pm 0.021$ \\
\midrule
ADEnKF \citep{chen2022autodifferentiable} & $0.504 \pm 0.031$ & $0.257 \pm 0.018$ & $0.870 \pm 0.075$ & $\mathbf{0.872 \pm 0.016}$ \\
PSEF \citep{bach2026learning} & $0.570 \pm 0.057$ & $0.292 \pm 0.030$ & $0.894 \pm 0.292$ & $0.843 \pm 0.022$ \\
DAN \citep{boudier2023data} & $0.551 \pm 0.032$ & $0.285 \pm 0.019$ & $0.650 \pm 0.059$ & $0.750 \pm 0.022$ \\
4DVarNet \citep{fablet2021learning} & $0.529 \pm 0.024$ & $0.271 \pm 0.014$ & $0.679 \pm 0.049$ & $0.771 \pm 0.023$ \\
\midrule
SDA \citep{rozet2023sda} & $0.527 \pm 0.027$ & $0.341 \pm 0.017$ & $0.059 \pm 0.004$ & $0.111 \pm 0.005$ \\
CCDDA \citep{binder2026ccdda} & $0.544 \pm 0.033$ & $0.305 \pm 0.020$ & $0.613 \pm 0.056$ & $0.607 \pm 0.018$ \\
DAISI \citep{andrae2026daisi} & $0.536 \pm 0.031$ & $0.277 \pm 0.018$ & $0.779 \pm 0.078$ & $0.784 \pm 0.024$ \\
GenDA \citep{martin2025genda} & $0.503 \pm 0.028$ & $0.255 \pm 0.016$ & $0.777 \pm 0.074$ & $0.795 \pm 0.023$ \\
\midrule
FlowDAS \citep{chen2025flowdas} & $0.508 \pm 0.021$ & $0.310 \pm 0.013$ & $0.160 \pm 0.008$ & $0.268 \pm 0.007$ \\
\bottomrule\end{tabular}}
\end{table}

\begin{table}[!ht]
\centering
\caption{Benchmark comparison of FlowEF against all methods on the test
set (mean $\pm$ std over $10$ seeds for every method)
for Kolmogorov flow ($d_x=64{\times}64=4096$, $d_y=8{\times}8=64$, $N=30$),
against the LETKF baseline.}
\label{tab:bench-kol}
\footnotesize
\setlength{\tabcolsep}{1pt}
\renewcommand{\arraystretch}{0.92}
\resizebox{\linewidth}{!}{\begin{tabular}{lcccc}
\toprule
Method & RMSE $\downarrow$ & CRPS $\downarrow$ & SSR ($\to1$) & Cov$_{95}$ ($\to0.95$) \\
\midrule
Baseline (LETKF) \citep{hunt2007efficient} & $0.547 \pm 0.209$ & $0.331 \pm 0.145$ & $0.548 \pm 0.331$ & $0.835 \pm 0.054$ \\
\rowcolor{gray!15}
\textbf{FlowEF (ours)} & $\mathbf{0.476 \pm 0.165}$ & $\mathbf{0.273 \pm 0.104}$ & $\mathbf{0.699 \pm 0.259}$ & $\mathbf{0.877 \pm 0.029}$ \\
\midrule
ADEnKF \citep{chen2022autodifferentiable} & $0.543 \pm 0.203$ & $0.328 \pm 0.143$ & $0.569 \pm 0.330$ & $0.843 \pm 0.051$ \\
PSEF \citep{bach2026learning} & $0.555 \pm 0.203$ & $0.335 \pm 0.143$ & $0.583 \pm 0.354$ & $0.840 \pm 0.051$ \\
DAN \citep{boudier2023data} & $0.577 \pm 0.212$ & $0.397 \pm 0.156$ & $0.146 \pm 0.085$ & $0.322 \pm 0.022$ \\
4DVarNet \citep{fablet2021learning} & $0.552 \pm 0.205$ & $0.377 \pm 0.150$ & $0.152 \pm 0.089$ & $0.334 \pm 0.018$ \\
\midrule
CCDDA \citep{binder2026ccdda} & $0.550 \pm 0.205$ & $0.333 \pm 0.145$ & $0.573 \pm 0.335$ & $0.847 \pm 0.050$ \\
DAISI \citep{andrae2026daisi} & $0.555 \pm 0.206$ & $0.338 \pm 0.146$ & $0.547 \pm 0.318$ & $0.815 \pm 0.050$ \\
\midrule
FlowDAS \citep{chen2025flowdas} & $0.514 \pm 0.168$ & $0.314 \pm 0.119$ & $0.402 \pm 0.165$ & $0.703 \pm 0.059$ \\
\bottomrule
\end{tabular}}
\end{table}

\begin{remark}[Interpretation of benchmark performance]
\Cref{tab:bench-lorenz96,tab:bench-lorenz96-large,tab:bench-ks,tab:bench-kol}
show that many benchmark methods perform worse than the tuned classical
baseline in both RMSE and CRPS. All benchmark methods are implemented through
the same one-step filtering framework: each method receives a forecast
ensemble from a baseline filter and an observation at an assimilation time,
and finally returns a learned analysis ensemble for that time. 
The results are therefore a comparison of adapted implementation
of the benchmark merhods under this common framework 
instead of exact reproductions of each method.

The tuned classical filters are a strong reference here: 
they use covariance localization and multiplicative prior inflation, 
with the localization radius and inflation factor 
jointly selected by validation CRPS. 
The experiments also use a linear, sparse observation model, 
which favors methods built for this 
setting over methods designed for nonlinear observations or strongly
non-Gaussian filtering distributions.
Deterministic learned methods such as ADEnKF \citep{chen2022autodifferentiable}
and PSEF \citep{bach2026learning} apply a deterministic learned analysis update;
DAN \citep{boudier2023data} additionally carries
a learned hidden state across assimilation times.
FlowEF instead recomputes the observation residual of the current particle
and uses it to update the conditioning variables,
so the analysis update can adjust dynamically along the flow path.
Training-free stochastic methods such as CCDDA \citep{binder2026ccdda}
also initialize from the forecast ensemble,
but they estimate a conditional score from finite ensemble members
by kernel density estimation.
This estimate can suffer from large approximation errors
when a small ensemble sparsely samples a high-dimensional state space.
Guided diffusion and flow samplers such as SDA
\citep{rozet2023sda}, DAISI \citep{andrae2026daisi}, GenDA
\citep{martin2025genda}, and FlowDAS \citep{chen2025flowdas} likewise
incorporate observations during sampling.
However, their architectures were developed around trajectory inference,
a stationary generative prior, snapshot reconstruction, or learned transition dynamics,
rather than modeling a baseline-specific forecast-to-analysis update.

Overall, many benchmark methods target different settings or objectives, 
including nonlinear or non-Gaussian distribution, 
reconstruction from complex observations, 
learning unknown dynamics, and computational scalability. 
They were not designed specifically for the forecast-to-analysis framework used here. 
The adapted implementations of these benchmark methods therefore 
face task mismatch and approximation error 
while competing with a localized and inflated classical filter 
tuned for this setting. 
FlowEF is designed for this framework and additionally
calibrates its output inflation by validation CRPS. 
Thus, these tables compare performance of the analysis 
update from fixed baseline ensemble forecast, 
instead of the recursive filtering performance or the 
methods in their native settings.
\end{remark}

\section{Convergence theory of FlowEF}
\label{app:theory}
In this section, we prove the error bound of
\Cref{thm:convergence-informal} in \Cref{sec:convergence}, 
which decomposes the error of FlowEF into six terms: 
conditioning bias, finite-ensemble feature error, 
flow matching error, Euler error, 
source mismatch, and inflation error. 
We introduce notation in \Cref{app:theory-setup}, 
state the assumptions in \Cref{app:theory-assumptions}, 
prove the supporting lemmas in \Cref{app:theory-lemmas}, 
and present the explicit bound in \Cref{app:theory-main}. 
Flow matching recovers its conditional target distribution 
exactly when the training source distribution matches the
inference source distribution, the velocity is learned exactly, and
the ODE is solved exactly \citep{lipman2023flow,tong2024improving}. 
\Cref{tab:theory-dictionary} explains the role of each error term 
and how they are controlled by the assumptions.

\subsection{Notations}
\label{app:theory-setup}
Given an assimilation time $t$, 
we define the notations and derive the results 
with respect to it. 
Let $\tau\in[0,1]$ index the pseudo-time of the flow matching transport. 
All random variables below are defined on one probability space
$(\Omega,\mathcal{F},\mathbb{P})$. 
For random variables $Z_1,\ldots,Z_k$, collected as
$Z:=(Z_1,\ldots,Z_k)$, we write $\sigma(Z)\subseteq\mathcal{F}$ for the
\emph{$\sigma$-algebra generated by} them, that is, the smallest
sub-$\sigma$-algebra of $\mathcal{F}$ 
with respect to which every $Z_j$ is measurable. 
For the finite-dimensional random variables, 
the Doob--Dynkin lemma gives
\begin{align}
    W\ \text{is }\sigma(Z)\text{-measurable}
    \iff
    W=g(Z)\ \text{for some Borel function } g ,
    \label{eq:theory-doob-dynkin}
\end{align}
therefore $\sigma(Z)$ represents the information contained in $Z$.
In particular $\sigma(V)\subseteq\sigma(Z)$ means that $V$
is a Borel-measurable function of $Z$, thus $Z$ carries at least as much
information as $V$.

\subsubsection{Filtering laws notations}
\label{app:theory-filtering-laws}

We denote every law on the state space by $\pi$. As in
\Cref{sec:bayesian-filtering}, let
\begin{align}
    \pi_t^f(d\bm{x})
    =
    \mathbb{P}(\bm{x}_t\in d\bm{x}\mid \bm{y}_{1:t-1}),
    \qquad
    \pi_t^a(d\bm{x})
    =
    \mathbb{P}(\bm{x}_t\in d\bm{x}\mid \bm{y}_{1:t})
    \label{eq:theory-forecast-analysis-laws}
\end{align}
denote the exact forecast and analysis laws. As in
\Cref{sec:ensemble-da}, the empirical laws of 
a baseline ensemble filter $B$ are
\begin{align}
    \widehat{\pi}_t^{B,f}
    =\frac1N\sum_{i=1}^N\delta_{\bm{x}_t^{f,(i)}},
    \qquad
    \widehat{\pi}_t^{B,a}
    =\frac1N\sum_{i=1}^N\delta_{\bm{x}_t^{a,(i)}},
    \label{eq:theory-baseline-laws}
\end{align}
where $\delta_x$ is the Dirac measure at $x$, 
and $\bm{x}_t^{f,(i)},\bm{x}_t^{a,(i)}$ are the 
forecast and analysis ensemble members 
of the baseline filter $B$. 
We define the FlowEF output laws as
\begin{align}
    \widetilde{\pi}_t^a := \frac1N\sum_{i=1}^N\delta_{\bm z_{t,1}^{(i)}},
    \qquad
    \widehat{\pi}_t^a := \frac1N\sum_{i=1}^N\delta_{\hat{\bm x}_t^{a,(i)}},
    \label{eq:theory-flowef-laws}
\end{align}
where $\bm z_{t,1}^{(i)}$ are transport endpoints, 
also called FlowEF raw analysis ensemble, 
and $\hat{\bm x}_t^{a,(i)}$ is the FlowEF analysis ensemble 
after inflation in \cref{eq:flowda-inflation}. 
Let $\bm{x}_t^{B,f,\infty}$ denote a forecast ensemble member of the
infinite-ensemble baseline filter $B$.  Its conditional forecast law is
$
\pi_t^{B,f,\infty}(d\bm{x})
:= \mathbb{P}\big(\bm{x}_t^{B,f,\infty}\in d\bm{x}
\mid \bm y_{1:t-1}\big)
$. 
For 
$\bm{x}_t^{B,f,\infty,(1)},\ldots,\bm{x}_t^{B,f,\infty,(N)}
\overset{\mathrm{i.i.d.}}{\sim}\pi_t^{B,f,\infty}$, 
we define the associated Monte Carlo empirical measure as
\begin{align}
    \widehat{\pi}_t^{B,f,\infty}
    := \frac1N\sum_{i=1}^N\delta_{\bm{x}_t^{B,f,\infty,(i)}}.
    \label{eq:theory-limiting-forecast-law}
\end{align}

\subsubsection{Conditional variables notations}
\label{app:theory-conditional-setup}

The static conditioning tensor
$\bm{c}_t^{\mathrm{static}}=C(\bm X_t^f,\bm X_t^a,\bm y_t)$ of
\cref{eq:condition-tensor} depends on the two ensembles only through their
coordinatewise first two moments: with $\bm\sigma_t^f,\bm\sigma_t^a$ the
coordinatewise forecast and analysis standard deviations of \Cref{sec:cond},
the channel definitions
\cref{eq:forecast-observation-condition,eq:baseline-action-condition} read
\begin{align}
    C(\bm X_t^f,\bm X_t^a,\bm y_t)
    = \Gamma_t\big(\bar{\bm x}_t^f,\bm\sigma_t^f,
      \bar{\bm x}_t^a,\bm\sigma_t^a,\bm y_t\big),
    \label{eq:theory-moment-map}
\end{align}
where $\Gamma_t$ is the deterministic map defined by those channel formulas
at the fixed design $\bm H_t,\bm R$. The infinite-population counterpart
evaluates the same $\Gamma_t$ at the moments of the exact laws
\cref{eq:theory-forecast-analysis-laws} in place of the ensemble moments,
\begin{align}
    \bm c_t^{\mathrm{static},\infty}
    := C_\infty(\pi_t^f,\pi_t^a,\bm{y}_t)
    := \Gamma_t\big(\bm\mu_t^f,\bm s_t^f,\bm\mu_t^a,\bm s_t^a,\bm{y}_t\big),
    \label{eq:theory-finite-conditioning}
\end{align}
where for $\bullet\in\{f,a\}$,
$\bm\mu_t^\bullet:=\mathbb{E}_{\pi_t^\bullet}[\bm x]$ is the mean and
$\bm s_t^\bullet:=\{\operatorname{diag}(\operatorname{Cov}_{\pi_t^\bullet}(\bm x))\}^{1/2}$
is the standard deviation under $\pi_t^\bullet$.
Thus $C$ and $C_\infty$ are the same map $\Gamma_t$ evaluated at ensemble
and at population moments respectively.
The superscript $\infty$ here means \emph{infinite-population} instead of 
\emph{infinite-ensemble}. $C_\infty$ is evaluated at the exact Bayesian laws
$\pi_t^f,\pi_t^a$ of \cref{eq:theory-forecast-analysis-laws}, not at the
$N\to\infty$ limit $\pi_t^{B,f,\infty}$ of the baseline filter $B$. 
Let 
\begin{align}
    \mathcal{G}_t:=\sigma\big(\bm X_t^f,\bm X_t^a,\bm y_t\big)\subseteq\mathcal{F}
    \label{eq:theory-baseline-info}
\end{align}
be the sub-$\sigma$-algebra generated by the baseline forecast ensemble, the
baseline analysis ensemble, and the observation. 
We call it the \emph{baseline information} at cycle $t$, 
since a random variable is $\mathcal{G}_t$-measurable 
when it is a deterministic function of 
$\bm X_t^f,\bm X_t^a,\bm y_t$. 
The static tensor $\bm c_t^{\mathrm{static}}$ is such a function by 
\cref{eq:theory-moment-map}, so
$\sigma(\bm c_t^{\mathrm{static}})\subseteq\mathcal{G}_t$.

The dynamic channel \cref{eq:dynamic-conditioning} is recomputed at each Euler
step from the residual \cref{eq:obs-residual-field},
we therefore carry the following variables as input for dynamic channels: 
\begin{align}
    \bm o_t := \big[\bm y_t,\bm H_t,\bm R\big].
    \label{eq:theory-exogenous-design}
\end{align}
We define the corresponding finite-ensemble and infinite-population 
conditioning variables as follows:
\begin{align}
    \mathcal{C}_t^N := \big[\bm c_t^{\mathrm{static}},\bm o_t\big],
    \qquad
    \mathcal{C}_t^\infty := \big[\bm c_t^{\mathrm{static},\infty},\bm o_t\big], 
    \label{eq:theory-conditioning-record}
\end{align}
which are both $\mathcal{G}_t$-measurable. 
Note that only static component carries ensemble estimation error, 
since $\bm o_t$ is not estimated from the $N$ ensemble members. 
Hence, the feature term of \Cref{thm:flowef} 
involves only $\bm c_t^{\mathrm{static}}$. 
Since $\sigma(\bm c_t^{\mathrm{static}})\subseteq\sigma(\mathcal{C}_t^N)$,
conditioning on $\mathcal{C}_t^N$ rather than 
$\bm c_t^{\mathrm{static}}$ alone can only reduce the bias of
\Cref{lem:bias}.

The velocity field network in \cref{eq:implementation-trunk-input} reads
$[\bm z_{t,\tau},\bm\Phi(\tau),\bm c_t^{\mathrm{static}},\bm
c_{t,\tau}^{\mathrm{dyn}}]$ as network inputs.  
Accordingly we distinguish the three generated $\sigma$-algebras
\begin{align}
    \mathcal{F}_\tau^{\mathrm{stat}}
    :=\sigma\big(\bm z_{t,\tau},\tau,\bm c_t^{\mathrm{static}}\big)
    \subseteq
    \mathcal{F}_\tau^{\mathrm{dyn}}
    :=\sigma\big(\bm z_{t,\tau},\tau,\bm c_t^{\mathrm{static}},\bm c_{t,\tau}^{\mathrm{dyn}}\big)
    \subseteq
    \mathcal{F}_\tau^{\mathrm{rec}}
    :=\sigma\big(\bm z_{t,\tau},\tau,\mathcal{C}_t^N\big).
    \label{eq:theory-filtrations}
\end{align}
The velocity network in \Cref{alg:flowda} 
is $\mathcal{F}_\tau^{\mathrm{dyn}}$-measurable. 
The CFM target is indexed by the record $\mathcal{C}_t^N$, not the dynamic feature. 
The dynamic feature is a $\tau$-dependent function of the 
transported state itself, so conditioning on it would define no probability path. 
$\mathcal{C}_t^N$ is the coarsest transport-invariant variable,  
which makes the dynamic feature a deterministic function of the state
(\cref{eq:theory-dyn-measurability}). 
\subsubsection{Training and inference notations}
\label{app:theory-transport-setup}
The training source \cref{eq:locgauss-dist} depends on the full ensemble
information, which is $\mathcal{G}_t$-measurable. 
In contrast, the CFM objective conditions only on $\mathcal{C}_t^N$, 
a lower-dimensional summary of that information. 
For $\mathcal{C}_t^N=[\bm c,\bm o]$, 
we define the conditional training source law by
\begin{align}
    q_{t,0}^{\mathrm{tr};\bm c,\bm o}(d\bm z)
    &:= \mathbb{P}\big(\bm z_{t,0}\in d\bm z\mid
    \bm c_t^{\mathrm{static}}=\bm c,\,\bm o_t=\bm o\big) \notag\\
    &= \mathbb{E}\Big[
        \mathcal{N}\big(\bar{\bm x}_t^f,
        \bm C_{\mathrm{GC}}(r_{\mathrm{loc}})\circ\bm P_t^f\big)
        \,\Big|\,\bm c_t^{\mathrm{static}}=\bm c,\,\bm o_t=\bm o\Big].
    \label{eq:theory-train-source}
\end{align} 
The inference source \cref{eq:ordered-deploy-source}
sets $\bm z_{t,0}^{(i)}=\bm x_t^{f,(i)}$, 
so it is exactly the baseline empirical forecast measure,
\begin{align}
    q_{t,0}^{\mathrm{inf};N}
    := \frac{1}{N}\sum_{i=1}^N \delta_{\bm x_t^{f,(i)}}
    = \widehat{\pi}_t^{B,f} .
    \label{eq:theory-inference-source}
\end{align}
Throughout, a superscript $\mathcal{C}_t^N$ in place of $[\bm c,\bm o]$ denotes
the same object evaluated at the random record, i.e.\
$q_{t,0}^{\mathrm{tr};\mathcal{C}_t^N}:=q_{t,0}^{\mathrm{tr};\bm
c_t^{\mathrm{static}},\bm o_t}$, and likewise for the flow and Euler maps
$\widehat\Phi^{\mathcal{C}_t^N}_{0,1}$ and $\widehat\Psi^{\mathcal{C}_t^N}_K$
defined below.

Let $(\bm Z,\bm X^\star)$ have the joint
law of $(\bm z_{t,0},\bm x_t^\star)$ given $\mathcal{C}_t^N=[\bm c,\bm o]$, 
then $\bm Z\sim q_{t,0}^{\mathrm{tr};\bm c,\bm o}$ and $\bm X^\star\sim p_t^{\bm
c,\bm o}$ where $p_t^{\bm c,\bm o}(d\bm
x):=\mathbb{P}(\bm x_t^\star\in d\bm x\mid \bm c_t^{\mathrm{static}}=\bm
c,\bm o_t=\bm o)$. Following \cref{eq:flowda-interpolant}, we define
\begin{align}
    \bm Z_\tau=(1-\tau)\bm Z+\tau\bm X^\star,
    \qquad
    \bm U=\bm X^\star-\bm Z,
    \label{eq:theory-interpolant-pair}
\end{align}
with \emph{ideal velocity}, the marginal velocity field of the conditional
flow-matching construction,
\begin{align}
    u_\tau^{\bm c,\bm o}(\bm z)
    :=\mathbb{E}\big[\bm U\mid\bm Z_\tau=\bm z,\,\bm c_t^{\mathrm{static}}=\bm c,\,\bm o_t=\bm o\big].
    \label{eq:theory-marginal-velocity}
\end{align}
The observation CNN maps $\bm{z}_{t,\tau}$, $\tau$, and 
$\bm{o}_t$ to the dynamic feature. 
We denote this deterministic map by $D_{\theta_{\mathrm{obs}}}$:
\begin{align}
    \bm{c}_{t,\tau}^{\mathrm{dyn}}
    = D_{\theta_{\mathrm{obs}}}(\bm{z}_{t,\tau},\tau;\bm o_t)
    := \operatorname{CNN}^{\mathrm{period}}_{\theta_{\mathrm{obs}}}
      \big([\bm H_t^\top\bm R^{-1/2}(\bm y_t-\bm H_t\bm z_{t,\tau}),
      \bm m_t,\bm\Phi(\tau)]\big).
    \label{eq:theory-exogenous-obs}
\end{align}
The dynamic feature depends on the observation record $\bm o_t$ 
as well as $\{\bm z_{t,\tau},\tau\}$, then 
\begin{align}
    \sigma\big(\bm c_{t,\tau}^{\mathrm{dyn}}\big)
    \subseteq \sigma\big(\bm z_{t,\tau},\tau,\bm o_t\big)
    \subseteq \sigma\big(\bm z_{t,\tau},\tau,\mathcal{C}_t^N\big).
    \label{eq:theory-dyn-measurability}
\end{align}
We define the \emph{induced velocity field}
\begin{align}
    \widetilde{\bm v}_{\theta,t}(\bm z,\tau;\bm c,\bm o)
    :=\bm v_{\theta_{\mathrm{field}}}
    \big(\bm z,\tau;[\bm c,D_{\theta_{\mathrm{obs}}}(\bm z,\tau;\bm o)]\big).
    \label{eq:theory-induced-velocity}
\end{align}
Because the dynamic channels are part of the velocity parameterization given
$\mathcal{C}_t^N$ rather than an additional conditioning variable, 
replacing original velocity field in \Cref{alg:flowda} 
with induced velocity field $\widetilde{\bm v}_{\theta,t}$ 
in \cref{eq:theory-induced-velocity} makes no changes. 

For a fixed record $[\bm c,\bm o]$ and pseudo-times
$0\le s\le\tau\le1$, let
$\widehat\Phi_{s,\tau}^{\bm c,\bm o}$ be the exact flow map of the induced
velocity field:
\begin{align}
    \bm z_\tau
    =\widehat\Phi_{s,\tau}^{\bm c,\bm o}(\bm z_s), 
    \quad
    \dot{\bm z}_\tau
    =\widetilde{\bm v}_{\hat\theta,t}
      (\bm z_\tau,\tau;\bm c,\bm o).
    \label{eq:theory-exact-flow-map}
\end{align}
Thus the subscripts $(s,\tau)$ denote the start and end pseudo-times.
On the grid $\tau_k=k/K$, let $\widehat\Psi_K^{\bm c,\bm o}$ be the
corresponding $K$ step Euler map:
\begin{align}
    & \bm z^{\mathrm E}_{\tau_0}
    =\bm z_0, 
    \quad
    \bm z^{\mathrm E}_{\tau_{k+1}}
    =\bm z^{\mathrm E}_{\tau_k}
      +K^{-1}\widetilde{\bm v}_{\hat\theta,t}
      (\bm z^{\mathrm E}_{\tau_k},\tau_k;\bm c,\bm o),
    \\
    & \widehat\Psi_K^{\bm c,\bm o}(\bm z_0)
    :=\bm z^{\mathrm E}_{\tau_K},
    \quad k=0,\ldots,K-1.
    \label{eq:theory-euler-map}
\end{align}
The superscript $[\bm c,\bm o]$ fixes the conditioning record, while the
superscript $\mathcal C_t^N$ evaluates the same map at the random record.
For a measurable map $T$ and probability law $\nu$, $T_\#\nu$ denotes the
pushforward law, defined by
\begin{align}
    (T_\#\nu)(A):=\nu\big(T^{-1}(A)\big)
    \qquad\text{for every measurable set }A.
    \label{eq:theory-pushforward}
\end{align}

Finally, we define the exact CFM target laws as follows
\begin{align}
    \pi_t^N(d\bm{x})
    &:= \mathbb{P}\big(\bm{x}_t^\star\in d\bm{x}\mid \mathcal{C}_t^N\big)
     = \mathbb{P}\big(\bm{x}_t^\star\in d\bm{x}\mid
        \bm c_t^{\mathrm{static}},\bm o_t\big), \\
    \pi_t^\infty(d\bm{x})
    &:= \mathbb{P}\big(\bm{x}_t^\star\in d\bm{x}\mid \mathcal{C}_t^\infty\big)
     = \mathbb{P}\big(\bm{x}_t^\star\in d\bm{x}\mid
        \bm c_t^{\mathrm{static},\infty},\bm o_t\big).
    \label{eq:theory-target-laws}
\end{align}

\subsection{Assumptions}
\label{app:theory-assumptions}

\subsubsection{Observation and baseline assumptions}
\label{app:theory-base-assumptions}

\begin{assumption}[Linear observation]
\label{asm:linear-obs}
$\bm y_t=\bm H_t\bm x_t+\bm\epsilon_t$ with $\bm H_t\in\mathbb{R}^{d_y\times
d_x}$ known and $\bm\epsilon_t\sim\mathcal{N}(\bm 0,\bm R)$, $\bm R\succ0$,
as in \cref{eq:linear-gaussian-analysis-model}. Moreover, 
$\bm R=\sigma_{\mathrm{obs}}^2\bm I_{d_y}$, 
and $\bm H_t\bm H_t^\top=\bm I_{d_y}$.
\end{assumption}

\begin{assumption}[Stationary assimilation cycles]
\label{asm:stationary}
The observation design is time-invariant, 
i.e., the observation operator $\bm H_t\equiv\bm H$, 
and observation noise covariance $\bm R$ is fixed.  
The baseline filter $B$ has reached a statistically steady 
and time-uniformly bounded regime: for some $C_B<\infty$,
$\mathbb E[\|\bm X_t^f\|_F^2+\|\bm X_t^a\|_F^2]\leq C_B$ for every $t$,
so that the tuples 
$\{\bm X_t^f,\bm X_t^a,\bm y_t,\bm x_t^\star\}$ for 
$t\in\mathcal{T}_{\mathrm{train}}\cup\mathcal{T}_{\mathrm{val}}
\cup\mathcal{T}_{\mathrm{test}}$ 
share a common marginal law $\mathcal{P}$.
The trained parameters $\hat\theta$ are shared across 
all assimilation cycles $t$.
\end{assumption}

\begin{assumption}[Baseline consistency]
\label{asm:baseline}
There exist rates $r_{f,N}, r_{a,N}\to0$ such that
\begin{align}
    \mathbb{E}\,W_2\big(\widehat{\pi}_t^{B,f},\pi_t^f\big) = O(r_{f,N}),
    \qquad
    \mathbb{E}\,W_2\big(\widehat{\pi}_t^{B,a},\pi_t^a\big) = O(r_{a,N}).
    \label{eq:theory-baseline-consistency}
\end{align}
\end{assumption}

By the triangle inequality,
\begin{align}
    \mathbb{E} W_2\big(\widehat{\pi}_t^{B,f},\pi_t^f\big)
    &\leq
    \underbrace{\mathbb{E} W_2\big(\widehat{\pi}_t^{B,f},
        \widehat{\pi}_t^{B,f,\infty}\big)}_{\text{(i) finite-ensemble approximation error}}
    \notag\\
    &\quad+
    \underbrace{\mathbb{E} W_2\big(\widehat{\pi}_t^{B,f,\infty},
        \pi_t^{B,f,\infty}\big)}_{\text{(ii) Monte Carlo error}}
    +\underbrace{W_2\big(\pi_t^{B,f,\infty},\pi_t^f\big)}_{\text{(iii) asymptotic approximation bias}},
    \label{eq:theory-baseline-decomposition}
\end{align}
and similarly for the analysis laws.
For each $t$, convergence theory of standard EnKF implies that  
the error between finite-ensemble members and their
infinite-ensemble counterparts 
can satisfy the $O(N^{-1/2})$ bound 
under suitable regularity conditions:
\begin{align}
    \max_{1\leq i\leq N}
    \left(\mathbb E\left\|\bm x_t^{f,(i)}-
    \bm x_t^{B,f,\infty,(i)}\right\|_2^2\right)^{1/2}
    \leq C_{f,t} N^{-1/2}.
    \label{eq:theory-limiting-member-rate}
\end{align}
This implies: 
\begin{align}
    \mathbb E\,W_2\big(\widehat\pi_t^{B,f},
    \widehat\pi_t^{B,f,\infty}\big)
    \leq
    \left\{\frac1N\sum_{i=1}^N
    \mathbb E\left\|\bm x_t^{f,(i)}-
    \bm x_t^{B,f,\infty,(i)}\right\|_2^2\right\}^{1/2}
    \leq C_{f,t}N^{-1/2},
    \label{eq:theory-limiting-measure-rate}
\end{align}
which is the $O(N^{-1/2})$ bound for term~(i).  
The constants $C_{f,t}$ depend on the assimilation time, 
we need additional stability assumptions \citep{mandel2011convergence} 
to obtain a time-uniform constant. 
The same bound applies to the analysis members. 
Term (ii) is dimension dependent.   
For a law $\mu$ on $\mathbb{R}^{d_x}$ with a finite moment of order $q>4$, 
and $\widehat\mu_N$ its empirical measure from $N$ i.i.d.\ draws,
we have $\mathbb{E}\,W_2(\widehat\mu_N,\mu)=O\big(r_{W_2}(N,d_x)\big)$ 
\citep{fournier2015rate} with  
\begin{align}
    r_{W_2}(N,d_x) =
    \begin{cases}
        N^{-1/4}, & d_x<4,\\
        N^{-1/4}\sqrt{\log(1+N)}, & d_x=4,\\
        N^{-1/d_x}, & d_x>4 .
    \end{cases}
    \label{eq:theory-ensemble-rate}
\end{align}
Term (ii) dominates term (i) for $d_x>4$, 
so the best convergence rate for term (i)+(ii) is 
$r_{f,N}=r_{W_2}(N,d_x)$.
Term (iii) does not depend on $N$ and vanishes only when the
infinite-ensemble baseline filter yields the 
exact Bayesian forecast law $\pi_t^f$ \citep{sarkka2023bayesian}. 
\Cref{asm:baseline} sets term (iii) to zero.

\subsubsection{Regularity assumptions}
\label{app:theory-regularity}

\begin{assumption}[Wasserstein conditioning stability]
\label{asm:lipschitz-kernel}
We define a kernel for conditional law as below: 
\begin{align}
    K_t(\bm c,\bm o)(d\bm x)
    &:= \mathbb{P}\big(\bm x_t^\star\in d\bm x
    \mid \bm c_t^{\mathrm{static}}=\bm c,\,\bm o_t=\bm o\big).
    \label{eq:theory-conditional-law-kernel}
\end{align}
Assume that this same kernel also satisfies
$\pi_t^\infty=K_t(\bm c_t^{\mathrm{static},\infty},\bm o_t)$ a.s. There exist
a norm $\|\cdot\|_c$, $L_{K,t}<\infty$, and $\beta_t\in(0,1]$, uniform in
$\bm o$, such that
\begin{align}
    W_2\big(K_t(\bm c,\bm o),K_t(\bm c',\bm o)\big)
    \le L_{K,t}\,\|\bm c-\bm c'\|_c^{\beta_t}
    \label{eq:theory-lipschitz-kernel}
\end{align}
for all $\bm c,\bm c'$ in the support of $\bm c_t^{\mathrm{static}}$ and
$\bm c_t^{\mathrm{static},\infty}$.
\end{assumption}

\begin{assumption}[Conditioning concentration]
\label{asm:feature-concentration}
Given $\|\cdot\|_c$ as above, there exist $\sigma_{c,t},D_{c,t}$ and a bias
$b_{c,N,t}$ such that 
\begin{align}
    \mathbb{E}\big\|\bm c_t^{\mathrm{static}}-\bm c_t^{\mathrm{static},\infty}\big\|_c^2
    \le \sigma_{c,t}^2\,\frac{D_{c,t}}{N}+b_{c,N,t}^2 .
    \label{eq:theory-dc-def}
\end{align}
\end{assumption}

$\sigma_{c,t}^2D_{c,t}/N$ is sampling fluctuation 
and $b_{c,N,t}$ is a systematic offset. 
By \cref{eq:theory-moment-map} the static channels are a smooth function
$\Gamma_t$ of $\bar{\bm x}_t^f,\bm\sigma_t^f,\bar{\bm x}_t^a,\bm
\sigma_t^a,\bm y_t$ alone, so $b_{c,N,t}\to0$ when the baseline filter
moments converge to their Bayesian counterparts $\bm\mu_t^\bullet,\bm
s_t^\bullet$ (\Cref{asm:baseline}). 
If the baseline filter is biased, then $b_{c,N,t}$ does not vanish and the
$N\to\infty$ limit of \cref{eq:theory-dc-def} is $b_{c,t}^2>0$.

\begin{assumption}[Ideal transport stability]
\label{asm:ideal-stability}
For every fixed observation record $\bm o$, the ideal velocity
$u_\tau^{\bm c,\bm o}$ of \cref{eq:theory-marginal-velocity} is
$L_{u,z,t}$-Lipschitz in its state argument and $L_{u,c,t}$-Lipschitz in
the static condition $\bm c$, uniformly on $\mathcal N_t$. Its ODE
transports $q_{t,0}^{\mathrm{tr};\bm c,\bm o}$ to
$K_t(\bm c,\bm o)$, and the conditional training sources satisfy
\begin{align}
    W_2\big(q_{t,0}^{\mathrm{tr};\bm c,\bm o},
        q_{t,0}^{\mathrm{tr};\bm c',\bm o}\big)
    \le L_{q,c,t}\|\bm c-\bm c'\|_c.
    \label{eq:theory-source-condition-stability}
\end{align}
\end{assumption}

\begin{assumption}[Transport regularity]
\label{asm:tube}
There is a set $\mathcal{N}_t\subset\mathbb{R}^{d_x}\times[0,1]$ containing
the support of the interpolant paths \cref{eq:flowda-interpolant}, the
exact trajectories of \cref{eq:flowda-analysis-ode}, and the $K$ Euler
iterates \cref{eq:ode-euler} initialized from the training source
$q_{t,0}^{\mathrm{tr};\mathcal{C}_t^N}$ and the inference source
$q_{t,0}^{\mathrm{inf};N}$. The induced velocity
$\widetilde{\bm v}_{\hat\theta,t}(\bm z,\tau;\bm c,\bm o)$ is $C^1$ in
$\tau$, $\widetilde{L}_{v,z,t}$-Lipschitz in $\bm z$, and
$\widetilde{L}_{v,c,t}$-Lipschitz in $\bm c$. 
That is, for all $\bm z,\bm z',\bm c,\bm c'$ and $\bm o$, we have
\begin{align}
    \big\|\widetilde{\bm v}_{\hat\theta,t}(\bm z,\tau;\bm c,\bm o)
    -\widetilde{\bm v}_{\hat\theta,t}(\bm z',\tau;\bm c',\bm o)\big\|_2
    \leq \widetilde{L}_{v,z,t}\|\bm z-\bm z'\|_2
    +\widetilde{L}_{v,c,t}\|\bm c-\bm c'\|_c,
    \label{eq:theory-transport-regularity}
\end{align}
\end{assumption}

\subsection{Supporting results}
\label{app:theory-lemmas}

\subsubsection{Induced velocity field}
\label{app:theory-induced}

\begin{lemma}[Induced velocity field]
\label{lem:induced}
Under \Cref{asm:linear-obs}, for 
$\theta=\{\theta_{\mathrm{obs}},\theta_{\mathrm{field}}\}$,
plugging in induced velocity
$\widetilde{\bm v}_{\theta,t}$ of \cref{eq:theory-induced-velocity} 
in \Cref{alg:flowda} yields  the following exact loss, 
Euler update, and excess loss identities. 
\begin{enumerate}
\item[(i)] the dynamic feature 
$\bm c_{t,\tau}^{\mathrm{dyn}}
=D_{\theta_{\mathrm{obs}}}(\bm z_{t,\tau},\tau;\bm o_t)$ of
\cref{eq:theory-exogenous-obs}. Hence
\cref{eq:theory-dyn-measurability} holds, and
$\bm c_{t,\tau}^{\mathrm{dyn}}$ is a deterministic function of
$(\bm z_{t,\tau},\tau)$ given fixed $\bm o_t$;
\item[(ii)] the CFM loss \cref{eq:cfm-loss} is exactly
\begin{align*}
    \mathcal{L}_{\mathrm{CFM}}(\widetilde{\bm v}_{\theta,t})
    &= \mathbb{E}\big\|
      \widetilde{\bm v}_{\theta,t}(\bm Z_\tau,\tau;
      \bm c_t^{\mathrm{static}},\bm o_t)-\bm U
      \big\|_2^2,
\end{align*}
with $(\bm Z_\tau,\bm U)$ as in \cref{eq:theory-interpolant-pair};
\item[(iii)] with $\tau_k=k/K$, the inference step in 
\cref{eq:ode-euler} d is exactly
\begin{align*}
    \bm z_{t,\tau_{k+1}}^{(i)}
    &= \bm z_{t,\tau_k}^{(i)}
      + \frac{1}{K}\widetilde{\bm v}_{\hat\theta,t}
      (\bm z_{t,\tau_k}^{(i)},\tau_k;
      \bm c_t^{\mathrm{static}},\bm o_t),
      \quad k=0,\ldots,K-1,
\end{align*}
the explicit Euler scheme for the induced velocity field;
\item[(iv)] for integrable
$\mathcal{F}_\tau^{\mathrm{rec}}$-measurable velocity fields, with
$\mathcal{F}_\tau^{\mathrm{rec}}$ as in \cref{eq:theory-filtrations}, 
the minimizer of \cref{eq:cfm-loss} is the ideal velocity of
\cref{eq:theory-marginal-velocity},
\begin{align*}
    u_\tau^{\bm c,\bm o}(\bm z)
    &:= \mathbb{E}\big[\bm U\mid
        \bm Z_\tau=\bm z,\,
        \bm c_t^{\mathrm{static}}=\bm c,\,
        \bm o_t=\bm o\big].
\end{align*}
For every $\theta$, the excess CFM loss of $\widetilde{\bm v}_{\theta,t}$
over the interpolate velocity $u$ equals its 
mean squared velocity error integrated over pseduo-time:
\begin{align}
    \mathcal{L}_{\mathrm{CFM}}(\widetilde{\bm v}_{\theta,t})
    - \mathcal{L}_{\mathrm{CFM}}(u)
    = \int_0^1 \mathbb{E}\big\|\widetilde{\bm v}_{\theta,t}(\bm{Z}_\tau,\tau;\bm c_t^{\mathrm{static}},\bm o_t)
      - u_\tau^{\bm c_t^{\mathrm{static}},\bm o_t}(\bm{Z}_\tau)\big\|_2^2\,d\tau .
    \label{eq:theory-loss-equivalence}
\end{align}
\end{enumerate}
\end{lemma}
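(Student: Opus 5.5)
Parts (i)–(iii) are unwinding definitions; part (iv) is the usual conditional-expectation Pythagoras argument, and that is the only place needing care.

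\emph{Parts (i)–(iii).} For (i), substitute the residual \cref{eq:obs-residual-field} into the observation CNN \cref{eq:dynamic-conditioning}. Its inputs are $\bm z_{t,\tau}$, $\bm m_t$ (a function of $\bm H_t$ alone by \cref{eq:observation-weight-map}), $\bm\Phi(\tau)$, and $\bm y_t,\bm H_t,\bm R$. These are exactly $(\bm z_{t,\tau},\tau,\bm o_t)$, so $\bm c_{t,\tau}^{\mathrm{dyn}}=D_{\theta_{\mathrm{obs}}}(\bm z_{t,\tau},\tau;\bm o_t)$ as in \cref{eq:theory-exogenous-obs}. Doob–Dynkin \cref{eq:theory-doob-dynkin} then gives \cref{eq:theory-dyn-measurability}, and the inclusion $\sigma(\bm o_t)\subseteq\sigma(\mathcal C_t^N)$ is immediate from \cref{eq:theory-conditioning-record}.

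For (ii), during training $\bm z_{t,\tau}=\bm z^{\mathrm{train}}_{t,\tau}$. Hence $\bm v_\theta(\bm z^{\mathrm{train}}_{t,\tau},\tau;[\bm c_t^{\mathrm{static}},\bm c^{\mathrm{dyn}}_{t,\tau}])$ equals $\widetilde{\bm v}_{\theta,t}(\bm z^{\mathrm{train}}_{t,\tau},\tau;\bm c_t^{\mathrm{static}},\bm o_t)$ by \cref{eq:theory-induced-velocity}. Conditionally on the record, $(\bm z_{t,0},\bm x_t^\star)$ has the law of $(\bm Z,\bm X^\star)$, so the interpolant and target are $(\bm Z_\tau,\bm U)$ of \cref{eq:theory-interpolant-pair}. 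The tower property over the record then rewrites \cref{eq:cfm-loss} in the claimed form.

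For (iii), at inference the dynamic field is recomputed from $\bm z^{(i)}_{t,\tau_k}$ with the same $\bm o_t$. The same substitution therefore turns the update \cref{eq:ode-euler} into the Euler step for $\widetilde{\bm v}_{\hat\theta,t}$.

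\emph{Part (iv).} Write $\mathcal L_{\mathrm{CFM}}(v)=\int_0^1\mathbb E\|v-\bm U\|^2\,d\tau$, using $\tau\sim\mathrm{Uniform}(0,1)$ independent of everything else and Fubini. For fixed $\tau$, set $u=\mathbb E[\bm U\mid\mathcal F^{\mathrm{rec}}_\tau]$. By Doob–Dynkin, $u$ evaluates to $u_\tau^{\bm c,\bm o}(\bm Z_\tau)$ on the record, which is \cref{eq:theory-marginal-velocity}. Expand
\[
\|v-\bm U\|^2=\|v-u\|^2+2\langle v-u,u-\bm U\rangle+\|u-\bm U\|^2 .
\]
If $v$ is $\mathcal F^{\mathrm{rec}}_\tau$-measurable and square-integrable, the cross term vanishes after conditioning on $\mathcal F^{\mathrm{rec}}_\tau$. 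This shows $u$ is the minimizer, unique a.e. in $L^2$ by $L^2$-projection.

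To apply this to $\widetilde{\bm v}_{\theta,t}(\bm Z_\tau,\tau;\bm c_t^{\mathrm{static}},\bm o_t)$ for every $\theta$, I need it to be $\mathcal F^{\mathrm{rec}}_\tau$-measurable. This is exactly (i) combined with \cref{eq:theory-dyn-measurability} and the measurability of the CNN map $\bm v_{\theta_{\mathrm{field}}}$. Subtracting $\mathcal L_{\mathrm{CFM}}(u)$ and integrating over $\tau$ gives \cref{eq:theory-loss-equivalence}.

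\emph{Main obstacle.} The delicate step is justifying the cross-term cancellation. Three things need checking.
\begin{itemize}[label=\textbullet]
\item \emph{Integrability.} We need square-integrability of $\bm U$ and of the network output. Under \Cref{asm:stationary} the Frobenius moments of the ensembles are bounded, so $\bm z_{t,0}$ and $\bm x_t^\star$ have second moments (I will assume $\bm x_t^\star$ does). Networks with GELU and LayerNorm have at most linear growth, so the output is square-integrable as well.
\item \emph{Conditioning on the record.} The training source is $\mathcal G_t$-measurable, not $\mathcal C_t^N$-measurable. The argument must therefore condition only on $\mathcal F^{\mathrm{rec}}_\tau$ and never on $\mathcal G_t$, which is why the ideal velocity is defined via $q^{\mathrm{tr};\bm c,\bm o}_{t,0}$.
\item \emph{Measurability in $\tau$.} We need joint measurability in $\tau$ so that Fubini applies.
\end{itemize}
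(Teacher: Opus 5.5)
Your proposal is correct and follows the paper's proof essentially step for step. Parts (i)--(iii) come from substituting $\bm c_{t,\tau}^{\mathrm{dyn}}=D_{\theta_{\mathrm{obs}}}(\bm z_{t,\tau},\tau;\bm o_t)$ into the velocity network, and (iv) is the $L^2$-projection (Pythagoras) argument onto $\mathcal F_\tau^{\mathrm{rec}}$, using that the dynamic feature carries no information beyond $(\bm Z_\tau,\tau,\mathcal C_t^N)$.

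Your extra care with Fubini in $\tau$ and with integrability makes explicit what the paper leaves implicit. One small point: your claim that the network output is automatically square-integrable is not justified, because the ratio and log static channels need not lie in $L^2$. This is harmless, since if $\widetilde{\bm v}_{\theta,t}\notin L^2$ both sides of the excess-loss identity are $+\infty$.
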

\begin{proof}
(i) By \Cref{asm:linear-obs}, $\bm r_{t,\tau}=\bm H_t^\top\bm
R^{-1/2}(\bm y_t-\bm H_t\bm z)$ is jointly measurable in $(\bm z,\bm o_t)$
and affine in $\bm z$ for fixed $\bm o_t$, $\bm m_t$ is a function of $\bm
H_t$, and $\bm\Phi(\tau)$ a function of $\tau$, so the observation CNN output
\cref{eq:implementation-observation-cnn} is a measurable function of $(\bm
z,\tau,\bm o_t)$, which gives the first inclusion in
\cref{eq:theory-dyn-measurability}. 
The second inclusion in \cref{eq:theory-dyn-measurability} 
holds because $\bm o_t$ is a component of $\mathcal{C}_t^N$.

(ii) By \cref{eq:theory-exogenous-obs},
$\bm c_{t,\tau}=[\bm c_t^{\mathrm{static}},
D_{\theta_{\mathrm{obs}}}(\bm Z_\tau,\tau;\bm o_t)]$. Hence, the network in \cref{eq:cfm-loss} satisfies
\[
\bm v_\theta(\bm Z_\tau,\tau;\bm c_{t,\tau})
=\widetilde{\bm v}_{\theta,t}(\bm Z_\tau,\tau;
  \bm c_t^{\mathrm{static}},\bm o_t)
\]
by \cref{eq:theory-induced-velocity}. Substitution into
\cref{eq:cfm-loss}, with $\bm U=\bm x_t^\star-\bm z_{t,0}$ from
\cref{eq:theory-interpolant-pair}, gives (ii).

(iii) At Euler step $k$, \Cref{alg:flowda} forms
$\bm c_{t,\tau_k}^{\mathrm{dyn},(i)}$ from $\bm z_{t,\tau_k}^{(i)}$ and
evaluates $\bm v_{t,\tau_k}^{(i)}$. By (i),
\[
\bm v_{t,\tau_k}^{(i)}
=\widetilde{\bm v}_{\hat\theta,t}(\bm z_{t,\tau_k}^{(i)},\tau_k;
  \bm c_t^{\mathrm{static}},\bm o_t).
\]
Thus its update
$\bm z_{t,\tau_{k+1}}^{(i)}=\bm z_{t,\tau_k}^{(i)}
+K^{-1}\bm v_{t,\tau_k}^{(i)}$ is exactly the Euler update in
\cref{eq:ode-euler}.

(iv) By (ii), the CFM loss is the $L^2$ regression loss for $\bm U$ given
$(\bm Z_\tau,\tau,\mathcal{C}_t^N)$. By (i) and
\cref{eq:theory-dyn-measurability}, the dynamic feature
$\bm c_{t,\tau}^{\mathrm{dyn}}$ is already determined by these regressors:
\[
\sigma(\bm Z_\tau,\tau,\mathcal{C}_t^N,
\bm c_{t,\tau}^{\mathrm{dyn}})
=\sigma(\bm Z_\tau,\tau,\mathcal{C}_t^N).
\]
Thus
\[
u:=\mathbb{E}[\bm U\mid\bm Z_\tau,\tau,\mathcal{C}_t^N]
=u_\tau^{\bm c_t^{\mathrm{static}},\bm o_t}(\bm Z_\tau),
\]
where the second equality is \cref{eq:theory-marginal-velocity}. For every 
integrable admissible field $\bm w$, conditional expectation
orthogonality gives
\[
\mathbb{E}\|\bm w-\bm U\|_2^2
=\mathbb{E}\|\bm w-u\|_2^2+\mathbb{E}\|u-\bm U\|_2^2.
\]
Taking $\bm w=\widetilde{\bm v}_{\theta,t}$ and using
$\tau\sim\operatorname{Uniform}(0,1)$ gives
\cref{eq:theory-loss-equivalence}.
\end{proof}

\subsubsection{Static and dynamic conditioning}
\label{app:theory-dynamic}

The induced velocity field
$\widetilde{\bm v}_{\theta,t}(\bm z_{t,\tau},\tau;\bm c_t^{\mathrm{static}},\bm o_t)$
of \cref{eq:theory-induced-velocity}
is $\mathcal{F}_\tau^{\mathrm{dyn}}$-measurable, 
whereas the ideal velocity
$u_\tau^{\bm c_t^{\mathrm{static}},\bm o_t}(\bm z_{t,\tau})$ of
\cref{eq:theory-marginal-velocity} is the $L^2$ projection onto the larger $\mathcal{F}_\tau^{\mathrm{rec}}$ of
\cref{eq:theory-filtrations} by \Cref{lem:induced} (iv). 
The following lemma shows that, under \Cref{asm:linear-obs}, 
if dynamic feature $\bm c_{t,\tau}^{\mathrm{dyn}}$ of
\cref{eq:theory-exogenous-obs} determines the residual $\bm r_{t,\tau}$ of
\cref{eq:obs-residual-field}, we have
$\mathcal{F}_\tau^{\mathrm{dyn}}=\mathcal{F}_\tau^{\mathrm{rec}}$, 
so the dynamic feature loses no information 
relative to the record $\mathcal{C}_t^N$ of
\cref{eq:theory-conditioning-record}.

\begin{lemma}[Sufficiency of the dynamic channel]
\label{lem:dyn-sufficiency}
Let \Cref{asm:linear-obs} hold, then for each $t,\tau$:
\begin{enumerate}
\item[(i)] the residual $\bm r_{t,\tau}$ of \cref{eq:obs-residual-field} is
the whitened score of the observation likelihood at the transported state
$\bm z_{t,\tau}$,
\begin{align}
    \bm r_{t,\tau}
    = \sigma_{\mathrm{obs}}\,
      \nabla_{\bm z}\log p(\bm y_t\mid\bm z)\big|_{\bm z=\bm z_{t,\tau}},
    \qquad
    p(\bm y\mid\bm z)=\mathcal{N}(\bm y;\bm H_t\bm z,\bm R);
    \label{eq:theory-score-identity}
\end{align}
\item[(ii)] the observation $\bm y_t$ is recovered from the residual
$\bm r_{t,\tau}$ and the transported state $\bm z_{t,\tau}$,
\begin{align}
    \bm y_t
    = \sigma_{\mathrm{obs}}\bm H_t\bm r_{t,\tau}+\bm H_t\bm z_{t,\tau},
    \label{eq:theory-residual-inverse}
\end{align}
so $\sigma(\bm z_{t,\tau},\tau,\bm c_t^{\mathrm{static}},\bm
r_{t,\tau})=\mathcal{F}_\tau^{\mathrm{rec}}$, and the network input
$\bm h_{t,\tau}^{\mathrm{obs}}=[\bm r_{t,\tau},\bm m_t,\bm\Phi(\tau)]$ of
\cref{eq:implementation-observation-input} already carries all information of $\bm o_t$;
\item[(iii)] if $D_{\theta_{\mathrm{obs}}}$ of
\cref{eq:theory-exogenous-obs} is \emph{faithful} i.e., there is a
measurable $\psi$ such that 
$\psi(D_{\theta_{\mathrm{obs}}}(\bm z,\tau;\bm o),\bm z,\tau)=\bm
H_t^\top\bm R^{-1/2}(\bm y-\bm H_t\bm z)$ for all $(\bm z,\tau,\bm o)$,
where $\bm o=[\bm y,\bm H_t,\bm R]$ as in
\cref{eq:theory-exogenous-design}, then
$\mathcal{F}_\tau^{\mathrm{dyn}}=\mathcal{F}_\tau^{\mathrm{rec}}$ and the
ideal velocity $u_\tau^{\bm c,\bm o}$ of
\cref{eq:theory-marginal-velocity} is representable in the 
form \cref{eq:theory-induced-velocity}.
So there exists a measurable $G$ such that 
\begin{align}
    u_\tau^{\bm c,\bm o}(\bm z)
    = G\big(\bm z,\tau;[\bm c,D_{\theta_{\mathrm{obs}}}(\bm z,\tau;\bm o)]\big)
    \qquad\text{a.e. }(\bm z,\tau,\bm c,\bm o).
    \label{eq:theory-representable}
\end{align}
\end{enumerate}
\end{lemma}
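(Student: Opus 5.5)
The plan is to treat the three parts in order, since each feeds the next. All of them use only \Cref{asm:linear-obs} ($\bm R=\sigma_{\mathrm{obs}}^2\bm I_{d_y}$, $\bm H_t\bm H_t^\top=\bm I_{d_y}$) together with the measurability facts already established in \Cref{lem:induced}~(i) and \cref{eq:theory-doob-dynkin}.

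For (i), I will compute the Gaussian log-likelihood directly:
\[
\log p(\bm y_t\mid\bm z)=-\tfrac12(\bm y_t-\bm H_t\bm z)^\top\bm R^{-1}(\bm y_t-\bm H_t\bm z)+\mathrm{const}.
\]
Its gradient is $\bm H_t^\top\bm R^{-1}(\bm y_t-\bm H_t\bm z)$. Substituting $\bm R^{-1}=\sigma_{\mathrm{obs}}^{-1}\bm R^{-1/2}$, valid because $\bm R^{-1/2}=\sigma_{\mathrm{obs}}^{-1}\bm I$, and multiplying by $\sigma_{\mathrm{obs}}$ gives exactly $\bm r_{t,\tau}$ at $\bm z=\bm z_{t,\tau}$.

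For (ii), I will left-multiply the residual by $\sigma_{\mathrm{obs}}\bm H_t$. Using $\bm H_t\bm H_t^\top=\bm I$ and $\sigma_{\mathrm{obs}}\bm R^{-1/2}=\bm I$ gives
\[
\sigma_{\mathrm{obs}}\bm H_t\bm r_{t,\tau}=\bm y_t-\bm H_t\bm z_{t,\tau},
\]
which rearranges to \cref{eq:theory-residual-inverse}.

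For the $\sigma$-algebra identity I will argue by two inclusions.
\begin{itemize}
\item The residual is a measurable function of $(\bm z_{t,\tau},\bm o_t)$, so $\sigma(\bm z_{t,\tau},\tau,\bm c_t^{\mathrm{static}},\bm r_{t,\tau})\subseteq\mathcal F_\tau^{\mathrm{rec}}$.
\item Conversely, $\bm y_t$ is a Borel function of $(\bm r_{t,\tau},\bm z_{t,\tau})$ by \cref{eq:theory-residual-inverse}. The matrices $\bm H_t$ and $\bm R$ are deterministic, being fixed under \Cref{asm:stationary} and in any case design constants, so they generate the trivial $\sigma$-algebra. Hence $\bm o_t$, and therefore $\mathcal C_t^N$, is measurable with respect to the left side.
\end{itemize}
The claim about $\bm h^{\mathrm{obs}}_{t,\tau}$ then follows, since that input contains $\bm r_{t,\tau}$ and $\bm m_t$ is a function of $\bm H_t$.

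The main obstacle is the point about $\bm H_t,\bm R$ as random components of $\bm o_t$. If they were genuinely random, the recovery in (ii) would need $\bm H_t$ to be known. I will state explicitly that they are fixed by the design, which is the sense in which the paper treats them as fixed in \Cref{asm:linear-obs}.

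For (iii), faithfulness gives $\bm r_{t,\tau}=\psi(\bm c^{\mathrm{dyn}}_{t,\tau},\bm z_{t,\tau},\tau)$, so $\bm r_{t,\tau}$ is $\mathcal F_\tau^{\mathrm{dyn}}$-measurable. By (ii), $\mathcal F_\tau^{\mathrm{rec}}\subseteq\mathcal F_\tau^{\mathrm{dyn}}$. The reverse inclusion is the chain \cref{eq:theory-filtrations}, which rests on \cref{eq:theory-dyn-measurability}, so the two are equal.

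By \Cref{lem:induced}~(iv), $u_\tau^{\bm c_t^{\mathrm{static}},\bm o_t}(\bm Z_\tau)=\mathbb E[\bm U\mid\mathcal F_\tau^{\mathrm{rec}}]$. This is therefore $\mathcal F_\tau^{\mathrm{dyn}}$-measurable, and Doob--Dynkin \cref{eq:theory-doob-dynkin} yields a Borel $G$ with
\[
u=G\bigl(\bm Z_\tau,\tau;[\bm c_t^{\mathrm{static}},\bm c^{\mathrm{dyn}}_{t,\tau}]\bigr)\quad\text{a.s.}
\]
Concretely, I would take $G(\bm z,\tau;[\bm c,\bm d])=u_\tau^{\bm c,\bm o(\bm d,\bm z,\tau)}(\bm z)$, where $\bm o$ is reconstructed as $\bm y=\sigma_{\mathrm{obs}}\bm H_t\psi(\bm d,\bm z,\tau)+\bm H_t\bm z$. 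Substituting $\bm d=D_{\theta_{\mathrm{obs}}}(\bm z,\tau;\bm o)$ recovers $u_\tau^{\bm c,\bm o}(\bm z)$ identically. This explicit choice avoids the Doob--Dynkin ``a.s.'' subtlety except on the null set where the version of $u$ is undefined, which gives \cref{eq:theory-representable} a.e.
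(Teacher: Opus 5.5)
Your proposal is correct and follows the paper's proof essentially step for step. It uses the same direct gradient computation for (i), and for (ii) the left-multiplication by $\bm H_t$ with $\bm H_t\bm H_t^\top=\bm I_{d_y}$ plus the two-inclusion $\sigma$-algebra argument that treats $\bm H_t,\bm R$ as deterministic. For (iii) it likewise combines faithfulness with \Cref{lem:induced}~(iv) and Doob--Dynkin, and your explicit choice $G(\bm z,\tau;[\bm c,\bm d])=u_\tau^{\bm c,\bm o(\bm d,\bm z,\tau)}(\bm z)$ is only a minor refinement of the paper's bare appeal to \cref{eq:theory-doob-dynkin}, making \cref{eq:theory-representable} hold pointwise wherever the jointly measurable version of $u$ is defined.
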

\begin{proof}
(i) $\log p(\bm y\mid\bm z)=-\tfrac12(\bm y-\bm H_t\bm z)^\top\bm
R^{-1}(\bm y-\bm H_t\bm z)+\mathrm{const}$ gives $\nabla_{\bm z}\log
p(\bm y\mid\bm z)=\sigma_{\mathrm{obs}}^{-2}\bm H_t^\top(\bm y-\bm
H_t\bm z)$, while \cref{eq:obs-residual-field} gives $\bm
r_{t,\tau}=\sigma_{\mathrm{obs}}^{-1}\bm H_t^\top(\bm y_t-\bm H_t\bm
z_{t,\tau})$.

(ii) Multiplying $\bm r_{t,\tau}$ by $\bm H_t$ and using $\bm H_t\bm
H_t^\top=\bm I_{d_y}$ gives $\bm H_t\bm
r_{t,\tau}=\sigma_{\mathrm{obs}}^{-1}(\bm y_t-\bm H_t\bm z_{t,\tau})$,
which is \cref{eq:theory-residual-inverse}, so $\bm y_t$ is
$\sigma(\bm z_{t,\tau},\bm r_{t,\tau})$-measurable. Since $\bm H_t$ and
$\bm R$ are deterministic (\Cref{asm:linear-obs}), the whole design
$\bm o_t=[\bm y_t,\bm H_t,\bm R]$ of \cref{eq:theory-exogenous-design} is
then $\sigma(\bm z_{t,\tau},\bm r_{t,\tau})$-measurable, so
$\mathcal{C}_t^N=[\bm c_t^{\mathrm{static}},\bm o_t]$ is
$\sigma(\bm z_{t,\tau},\bm c_t^{\mathrm{static}},\bm r_{t,\tau})$-measurable,
giving
\begin{align*}
    \mathcal{F}_\tau^{\mathrm{rec}}
    =\sigma\big(\bm z_{t,\tau},\tau,\mathcal{C}_t^N\big)
    \subseteq
    \sigma\big(\bm z_{t,\tau},\tau,\bm c_t^{\mathrm{static}},\bm r_{t,\tau}\big).
\end{align*}
Conversely, $\bm r_{t,\tau}$ is a function of
$(\bm z_{t,\tau},\bm o_t)$ by \cref{eq:obs-residual-field}, so
\begin{align*}
    \sigma\big(\bm z_{t,\tau},\tau,\bm c_t^{\mathrm{static}},\bm r_{t,\tau}\big)
    \subseteq
    \sigma\big(\bm z_{t,\tau},\tau,\mathcal{C}_t^N\big)
    =\mathcal{F}_\tau^{\mathrm{rec}}.
\end{align*}

(iii) Evaluating the faithfulness identity at
$(\bm z,\tau,\bm o)=(\bm z_{t,\tau},\tau,\bm o_t)$ and combining with 
$\bm c_{t,\tau}^{\mathrm{dyn}}=D_{\theta_{\mathrm{obs}}}(\bm z_{t,\tau},\tau;\bm o_t)$
of \cref{eq:theory-exogenous-obs} gives
\begin{align*}
    \bm r_{t,\tau}
    =\psi\big(\bm c_{t,\tau}^{\mathrm{dyn}},\bm z_{t,\tau},\tau\big),
\end{align*}
so $\bm r_{t,\tau}$ is $\mathcal{F}_\tau^{\mathrm{dyn}}$-measurable and, by
the equality proved in (ii),
\begin{align*}
    \mathcal{F}_\tau^{\mathrm{rec}}
    =\sigma\big(\bm z_{t,\tau},\tau,\bm c_t^{\mathrm{static}},\bm r_{t,\tau}\big)
    \subseteq\mathcal{F}_\tau^{\mathrm{dyn}} .
\end{align*}
The reverse inclusion
$\mathcal{F}_\tau^{\mathrm{dyn}}\subseteq\mathcal{F}_\tau^{\mathrm{rec}}$ is
\cref{eq:theory-filtrations}. The
ideal velocity $u_\tau^{\bm c_t^{\mathrm{static}},\bm o_t}(\bm z_{t,\tau})$
is $\mathcal{F}_\tau^{\mathrm{rec}}$-measurable by \Cref{lem:induced} (iv),
hence it's $\mathcal{F}_\tau^{\mathrm{dyn}}$-measurable. We apply 
\cref{eq:theory-doob-dynkin} to
$Z=(\bm z_{t,\tau},\tau,\bm c_t^{\mathrm{static}},\bm c_{t,\tau}^{\mathrm{dyn}})$
and get a measurable $G$ satisfying \cref{eq:theory-representable}.
\end{proof}

\begin{proposition}[Approximation error of the induced velocity]
\label{prop:mediation}
Let
\begin{align}
    \mathcal{V}_{\mathrm{dyn}}
    :=\Big\{
      (\bm z,\tau,\bm c,\bm o)\mapsto
      F\big(\bm z,\tau;[\bm c,D(\bm z,\tau;\bm o)]\big)
      ;
      F,D\ \text{measurable and integrable}
    \Big\}
    \label{eq:theory-mediated-class}
\end{align}
be the class of all induced velocity fields in 
\cref{eq:theory-induced-velocity}, i.e.\ the fields obtained by composing an
observation encoder $D$ with a velocity network $F$, it contains
$\widetilde{\bm v}_{\theta,t}$ for every $\theta$. 
For a velocity field $\bm w$,  we denote its mean squared deviation from the ideal
velocity as
\begin{align}
    \mathcal{J}(\bm w)
    := \int_0^1\mathbb{E}\Big\|
       \bm w(\bm Z_\tau,\tau,\bm c_t^{\mathrm{static}},\bm o_t)
       -u_\tau^{\bm c_t^{\mathrm{static}},\bm o_t}(\bm Z_\tau)
       \Big\|_2^2\,d\tau ,
    \label{eq:theory-approx-objective}
\end{align}
and define the approximation error of the class as its smallest value over
the class,
\begin{align}
    \varepsilon_{\mathrm{approx}}
    := \inf_{\bm w\in\mathcal{V}_{\mathrm{dyn}}}\mathcal{J}(\bm w),
    \label{eq:theory-approx-error}
\end{align}
and let $\varepsilon_{\mathrm{est}}
:=\varepsilon_{\mathrm{vel}}-\varepsilon_{\mathrm{approx}}$, where
\begin{align}
    \varepsilon_{\mathrm{vel}}
    := \mathcal{L}_{\mathrm{CFM}}\big(\widetilde{\bm v}_{\hat\theta,t}\big)
     - \mathcal{L}_{\mathrm{CFM}}(u)
    = \mathcal{J}\big(\widetilde{\bm v}_{\hat\theta,t}\big)
    \label{eq:theory-vel-loss}
\end{align}
is the population excess CFM loss \cref{eq:cfm-loss} of the trained induced
field $\widetilde{\bm v}_{\hat\theta,t}$ over the ideal velocity
$u_\tau^{\bm c,\bm o}$ of \cref{eq:theory-marginal-velocity}, the second
equality is \cref{eq:theory-loss-equivalence}. Then:
\begin{enumerate}
\item[(i)] the decomposition
\begin{align}
    \varepsilon_{\mathrm{vel}}
    = \underbrace{\varepsilon_{\mathrm{approx}}}_{\text{approximation}}
    + \underbrace{\varepsilon_{\mathrm{est}}}_{\text{estimation and optimization}}
    \label{eq:theory-excess-split}
\end{align}
has both terms nonnegative: $\varepsilon_{\mathrm{approx}}$ is the smallest
error achievable by any induced velocity field, and
$\varepsilon_{\mathrm{est}}$ is what a finite network, a finite training set,
and the optimizer add on top of it;
\item[(ii)] under \Cref{asm:linear-obs},
$\varepsilon_{\mathrm{approx}}=0$.
\end{enumerate}
Consequently, under \Cref{asm:linear-obs} the induced parameterization
represents the ideal velocity exactly, so
$\varepsilon_{\mathrm{vel}}=\varepsilon_{\mathrm{est}}$ and the
flow-matching term
$\varepsilon_{\mathrm{fm}}=C_{\mathrm{FM}}\sqrt{\varepsilon_{\mathrm{vel}}}$
of \Cref{thm:flowef} is entirely estimation and optimization error:
passing the observation record $\bm o_t$ to the velocity network via 
learned dynamic feature $\bm c_{t,\tau}^{\mathrm{dyn}}$ contributes
nothing to $\varepsilon_{\mathrm{vel}}$.
\end{proposition}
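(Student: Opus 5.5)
Part (i) follows from the definitions together with \Cref{lem:induced}. First, $\mathcal{J}(\bm w)\ge 0$ for every $\bm w$, since it is an integral of squared norms; hence $\varepsilon_{\mathrm{approx}}=\inf_{\mathcal{V}_{\mathrm{dyn}}}\mathcal{J}\ge0$. Second, $\widetilde{\bm v}_{\hat\theta,t}$ has exactly the form $F(\bm z,\tau;[\bm c,D(\bm z,\tau;\bm o)])$ with $F=\bm v_{\hat\theta_{\mathrm{field}}}$ and $D=D_{\hat\theta_{\mathrm{obs}}}$ by \cref{eq:theory-induced-velocity}, so it lies in $\mathcal{V}_{\mathrm{dyn}}$. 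Integrability follows from the Lipschitz bound of \Cref{asm:tube} and the second moments of \Cref{asm:stationary}. Therefore $\varepsilon_{\mathrm{vel}}=\mathcal{J}(\widetilde{\bm v}_{\hat\theta,t})\ge\varepsilon_{\mathrm{approx}}$, where the equality is \cref{eq:theory-loss-equivalence} from \Cref{lem:induced}(iv). This gives $\varepsilon_{\mathrm{est}}\ge0$, and the decomposition \cref{eq:theory-excess-split} holds by definition of $\varepsilon_{\mathrm{est}}$.

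For (ii), I will exhibit one element of $\mathcal{V}_{\mathrm{dyn}}$ with $\mathcal{J}=0$. The infimum ranges over all measurable encoders $D$, not only the trained CNN, so I may take the faithful encoder $D^\star(\bm z,\tau;\bm o):=\bm H^\top\bm R^{-1/2}(\bm y-\bm H\bm z)$, i.e.\ the residual $\bm r_{t,\tau}$ itself. Faithfulness in the sense of \Cref{lem:dyn-sufficiency}(iii) is immediate with $\psi(d,\bm z,\tau)=d$. By \Cref{lem:dyn-sufficiency}(ii), under \Cref{asm:linear-obs} we have $\bm y=\sigma_{\mathrm{obs}}\bm H D^\star+\bm H\bm z$. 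Since $\bm H,\bm R$ are fixed, $\bm o$ is a measurable function of $(\bm z,D^\star)$. By \Cref{lem:dyn-sufficiency}(iii), or directly by Doob--Dynkin \cref{eq:theory-doob-dynkin}, there is a measurable $G$ with $u_\tau^{\bm c,\bm o}(\bm z)=G(\bm z,\tau;[\bm c,D^\star(\bm z,\tau;\bm o)])$ a.e. Concretely, $G(\bm z,\tau;[\bm c,d]):=u_\tau^{\bm c,[\sigma_{\mathrm{obs}}\bm H d+\bm H\bm z,\bm H,\bm R]}(\bm z)$. Integrability of $G$ composed with $D^\star$ holds because $u$ is a conditional expectation of $\bm U$, which has finite second moment. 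With $\bm w^\star:=G(\cdot;[\cdot,D^\star])\in\mathcal{V}_{\mathrm{dyn}}$, the integrand in \cref{eq:theory-approx-objective} vanishes a.s. for a.e.\ $\tau$, so $\mathcal{J}(\bm w^\star)=0$ and hence $\varepsilon_{\mathrm{approx}}=0$. The consequence $\varepsilon_{\mathrm{vel}}=\varepsilon_{\mathrm{est}}$ then follows from (i).

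The main obstacle is the measure-theoretic bookkeeping in the "a.e." statement. $u_\tau^{\bm c,\bm o}$ is defined only up to null sets of the law of $(\bm Z_\tau,\tau,\mathcal{C}_t^N)$, so I must fix a jointly measurable version of $(\bm z,\tau,\bm c,\bm o)\mapsto u_\tau^{\bm c,\bm o}(\bm z)$, namely a regular conditional expectation given $(\bm Z_\tau,\tau,\mathcal{C}_t^N)$ as in \Cref{lem:induced}(iv). I must also check that substituting the recovered $\bm y$ into this version reproduces it on a full-measure set. This is exactly where the identity $\bm H\bm H^\top=\bm I$ is needed, since without it $\bm y$ would not be recoverable from the back-projected residual. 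I will also remark that $\varepsilon_{\mathrm{approx}}=0$ is a statement about the class, not about the finite CNN architecture; any finite-capacity gap is absorbed into $\varepsilon_{\mathrm{est}}$ by definition.
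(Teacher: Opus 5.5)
Your proposal is correct and follows essentially the same route as the paper. For (i) you use nonnegativity of $\mathcal{J}$ together with $\widetilde{\bm v}_{\hat\theta,t}\in\mathcal{V}_{\mathrm{dyn}}$, and for (ii) you use the faithful residual encoder with \Cref{lem:dyn-sufficiency} to exhibit an element with $\mathcal{J}=0$. The differences are cosmetic: your explicit $G(\bm z,\tau;[\bm c,d])=u_\tau^{\bm c,[\sigma_{\mathrm{obs}}\bm H d+\bm H\bm z,\bm H,\bm R]}(\bm z)$ gives the representation pointwise once a jointly measurable version of $u$ is fixed, so the a.e.\ bookkeeping you flag is lighter than you expect, and dropping the paper's zero-padding of the other $d_{\mathrm{dyn}}-1$ channels is harmless because the class imposes no shape constraint.
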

\begin{proof}
(i) $\mathcal{J}$ of \cref{eq:theory-approx-objective} integrates a squared
norm, so $\mathcal{J}(\bm w)\ge0$ for every
$\bm w\in\mathcal{V}_{\mathrm{dyn}}$ and hence
$\varepsilon_{\mathrm{approx}}=\inf_{\bm w\in\mathcal{V}_{\mathrm{dyn}}}
\mathcal{J}(\bm w)\ge0$. For $\varepsilon_{\mathrm{est}}\ge0$, the trained
field satisfies $\widetilde{\bm v}_{\hat\theta,t}\in\mathcal{V}_{\mathrm{dyn}}$
and $\varepsilon_{\mathrm{vel}}=\mathcal{J}(\widetilde{\bm v}_{\hat\theta,t})$
by \cref{eq:theory-vel-loss}, so
\begin{align*}
    \varepsilon_{\mathrm{vel}}
    = \mathcal{J}\big(\widetilde{\bm v}_{\hat\theta,t}\big)
    \ge \inf_{\bm w\in\mathcal{V}_{\mathrm{dyn}}}\mathcal{J}(\bm w)
    = \varepsilon_{\mathrm{approx}} .
\end{align*}

(ii) By (i), it suffices to construct one $\bm w\in\mathcal{V}_{\mathrm{dyn}}$
with $\mathcal{J}(\bm w)=0$. Take the encoder
\begin{align*}
    D(\bm z,\tau;\bm o)
    := \big[\bm H_t^\top\bm R^{-1/2}(\bm y-\bm H_t\bm z),\,\bm 0\big],
\end{align*}
i.e.\ $D^{(1)}(\bm z_{t,\tau},\tau;\bm o_t)=\bm r_{t,\tau}\in\mathbb{R}^{d_x}$
of \cref{eq:obs-residual-field} and $D^{(j)}=\bm 0$ for
$j=2,\dots,d_{\mathrm{dyn}}$, so that
$D(\bm z,\tau;\bm o)\in\mathbb{R}^{d_x\times d_{\mathrm{dyn}}}$ has the shape
of $\bm c_{t,\tau}^{\mathrm{dyn}}$ in \cref{eq:theory-exogenous-obs}.
It is measurable and integrable, and, with
$\psi(\bm d,\bm z,\tau):=\bm d^{(1)}$ the projection onto the first channel,
\begin{align*}
    \psi\big(D(\bm z,\tau;\bm o),\bm z,\tau\big)
    = \bm H_t^\top\bm R^{-1/2}(\bm y-\bm H_t\bm z).
\end{align*}
Therefore, $D$ is faithful in the sense of \Cref{lem:dyn-sufficiency} (iii). 
By \Cref{lem:dyn-sufficiency} (iii), we have a measurable $G$ such that
\begin{align*}
    u_\tau^{\bm c,\bm o}(\bm z)
    = G\big(\bm z,\tau;[\bm c,D(\bm z,\tau;\bm o)]\big)
    \qquad\text{a.e. }(\bm z,\tau,\bm c,\bm o),
\end{align*}
and $G$ is integrable because
$u_\tau^{\bm c,\bm o}=\mathbb{E}[\bm U\mid\bm Z_\tau,\bm c_t^{\mathrm{static}},\bm o_t]$
with $\bm U\in L^2$. Hence $F:=G$ gives
$\bm w:=G(\cdot,\cdot;[\cdot,D(\cdot,\cdot;\cdot)])\in\mathcal{V}_{\mathrm{dyn}}$
with $\bm w=u$ a.e., so $\mathcal{J}(\bm w)=0$ and
$\varepsilon_{\mathrm{approx}}=0$.
\end{proof}

\subsubsection{Transport endpoint}
\label{app:theory-endpoint-errors}

\begin{lemma}[CFM endpoint]
\label{lem:optimal}
For $\mathcal{C}_t^N=[\bm c,\bm o]$, $(\bm Z_\tau,\bm U)$ and
$u_\tau^{\bm c,\bm o}$ as in
\cref{eq:theory-interpolant-pair,eq:theory-marginal-velocity}. If
\begin{enumerate}
\item[(i)] the induced velocity field achieves zero excess loss, i.e., 
$\varepsilon_{\mathrm{vel}}=0$ in \cref{eq:theory-vel-loss};
\item[(ii)] training and inference share the same source distribution, i.e., 
$q_{t,0}^{\mathrm{tr};\bm c,\bm o} = q_{t,0}^{\mathrm{inf}}$; 
\item[(iii)] the ODE \cref{eq:flowda-analysis-ode} driven by
$\widetilde{\bm v}_{\hat\theta,t}$ is solved exactly,
\end{enumerate}
then its endpoint law is exact, i.e., 
$\mathbb{P}\big(\bm z_{t,1}\in d\bm x\mid\mathcal{C}_t^N\big)
=\pi_t^N(d\bm x)$ of \cref{eq:theory-target-laws}.
\end{lemma}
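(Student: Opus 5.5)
The proof combines the marginal-path property of conditional flow matching with uniqueness of solutions to the continuity equation. Everything is done conditionally on a fixed record $\mathcal{C}_t^N=[\bm c,\bm o]$, which is legitimate because the training target and the velocity field are both indexed by this record. The steps are: identify the trained field with the ideal velocity, show that the interpolant marginals solve the continuity equation driven by that velocity, and then use uniqueness to transfer this conclusion to the ODE flow started from the inference source.

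\textbf{Step 1 (identify the velocity).} By hypothesis (i) and \Cref{lem:induced}(iv), $\mathcal{J}(\widetilde{\bm v}_{\hat\theta,t})=\varepsilon_{\mathrm{vel}}=0$. Hence
\[
\widetilde{\bm v}_{\hat\theta,t}(\cdot,\tau;\bm c,\bm o)=u_\tau^{\bm c,\bm o}
\quad\text{for a.e. } \tau \text{ and } \rho_\tau^{\bm c,\bm o}\text{-a.e. } \bm z,
\]
where $\rho_\tau^{\bm c,\bm o}$ denotes the law of $\bm Z_\tau$ given the record. For a.e.\ record this holds by disintegrating the expectation over $\mathcal{C}_t^N$.

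\textbf{Step 2 (marginal continuity equation).} Take a test function $\varphi\in C_c^\infty(\mathbb{R}^{d_x})$. Since $\dot{\bm Z}_\tau=\bm U$, the tower property gives
\[
\frac{d}{d\tau}\mathbb{E}[\varphi(\bm Z_\tau)\mid \bm c,\bm o]
=\mathbb{E}[\nabla\varphi(\bm Z_\tau)\cdot \bm U\mid \bm c,\bm o]
=\mathbb{E}[\nabla\varphi(\bm Z_\tau)\cdot u_\tau^{\bm c,\bm o}(\bm Z_\tau)\mid\bm c,\bm o].
\]
So $\rho_\tau^{\bm c,\bm o}$ weakly solves $\partial_\tau\rho+\nabla\cdot(\rho\,u_\tau)=0$. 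Its endpoints are $\rho_0=q_{t,0}^{\mathrm{tr};\bm c,\bm o}$ and $\rho_1=p_t^{\bm c,\bm o}=\pi_t^N$. By Step 1, $\rho_\tau$ also solves the continuity equation driven by the trained field $\widetilde{\bm v}_{\hat\theta,t}$.

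\textbf{Step 3 (uniqueness and the endpoint).} \Cref{asm:tube} makes $\widetilde{\bm v}_{\hat\theta,t}$ Lipschitz in $\bm z$ and $C^1$ in $\tau$ on $\mathcal N_t$, which contains the relevant paths. The flow map $\widehat\Phi_{0,\tau}^{\bm c,\bm o}$ of \cref{eq:theory-exact-flow-map} therefore exists, and the curve $(\widehat\Phi_{0,\tau})_\#\rho_0$ is a weak solution of the same continuity equation. By the standard uniqueness theorem for continuity equations with Lipschitz velocity (the superposition principle / Ambrosio–Gigli–Savaré), it coincides with $\rho_\tau$. In particular
\[
(\widehat\Phi_{0,1}^{\bm c,\bm o})_\#q_{t,0}^{\mathrm{tr};\bm c,\bm o}=\pi_t^N .
\]
Hypotheses (ii) and (iii) then finish the argument. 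The inference particle $\bm z_{t,0}$ has conditional law $q_{t,0}^{\mathrm{inf}}=q_{t,0}^{\mathrm{tr};\bm c,\bm o}$, and $\bm z_{t,1}=\widehat\Phi_{0,1}^{\bm c,\bm o}(\bm z_{t,0})$ exactly. So the conditional law of $\bm z_{t,1}$ given $\mathcal{C}_t^N$ is $\pi_t^N$.

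\textbf{Main obstacle.} The hardest step is Step 3. The equality in Step 1 holds only $\rho_\tau$-a.e., so the trained field and $u_\tau$ can differ off the support of the interpolant marginals. The point of using uniqueness is that it requires only the Lipschitz regularity of $\widetilde{\bm v}_{\hat\theta,t}$ from \Cref{asm:tube}, not that of $u$. The identity $\widetilde{\bm v}=u$ is needed only $\rho_\tau$-a.e., which is exactly what enters the weak formulation in Step 2. One must also check integrability, $\int_0^1\!\int\|\widetilde{\bm v}\|\,d\rho_\tau\,d\tau<\infty$, which follows from $\bm U\in L^2$ (\Cref{asm:stationary}).
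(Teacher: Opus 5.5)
Your proposal is correct and follows essentially the same route as the paper. Both show that the interpolant marginals $\rho_\tau^{\bm c,\bm o}$ solve $\partial_\tau\rho_\tau^{\bm c,\bm o}+\nabla\cdot(\rho_\tau^{\bm c,\bm o}u_\tau^{\bm c,\bm o})=0$: the paper averages the straight-line conditional paths against $\gamma^{\bm c,\bm o}$, while you work in weak form via the tower property. Both then use $\varepsilon_{\mathrm{vel}}=0$ and \Cref{lem:induced}(iv) to identify $\widetilde{\bm v}_{\hat\theta,t}$ with $u_\tau^{\bm c,\bm o}$ along $\rho_\tau^{\bm c,\bm o}$, and finish with (ii)--(iii).

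Your Step 3 is a sharper version of the paper's last step, which simply asserts that the flow of $u$ transports the source to the target and that the learned ODE ``has the same flow''. Invoking uniqueness for the continuity equation driven by the Lipschitz field $\widetilde{\bm v}_{\hat\theta,t}$ of \Cref{asm:tube} explains why agreement only $\rho_\tau^{\bm c,\bm o}$-a.e.\ suffices, and it removes any need for regularity of $u$. One small slip: $\bm U\in L^2$ does not follow from \Cref{asm:stationary}, which bounds the baseline ensembles but not $\bm x_t^\star$; like the paper, you should take it as implicit in $\varepsilon_{\mathrm{vel}}$ being finite.
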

\begin{proof}
Let $\rho_\tau^{\bm c,\bm o}(d\bm z)
:=\mathbb{P}\big(\bm Z_\tau\in d\bm z\mid\mathcal{C}_t^N=[\bm c,\bm o]\big)$,
so that
$\rho_0^{\bm c,\bm o}=q_{t,0}^{\mathrm{tr};\bm c,\bm o}$ and
$\rho_1^{\bm c,\bm o}=p_t^{\bm c,\bm o}$. Each interpolant path in 
\cref{eq:theory-interpolant-pair} is a straight line ending at $\bm U=\bm X^\star-\bm Z$. 
Write $\gamma^{\bm c,\bm o}$ for the joint law of $(\bm Z,\bm X^\star)$ given
$\mathcal{C}_t^N=[\bm c,\bm o]$ and
$\rho_\tau^{\bm z,\bm x}:=\delta_{(1-\tau)\bm z+\tau\bm x}$ for the path with
endpoints $(\bm z,\bm x)$, then 
$\rho_\tau^{\bm c,\bm o}=\int\rho_\tau^{\bm z,\bm x}
\gamma^{\bm c,\bm o}(d\bm z,d\bm x)$ and
$\partial_\tau\rho_\tau^{\bm z,\bm x}
+\nabla\cdot\big(\rho_\tau^{\bm z,\bm x}(\bm x-\bm z)\big)=0$. Integrating
the latter against $\gamma^{\bm c,\bm o}$ and combining with
\cref{eq:theory-marginal-velocity}, we get 
\begin{align*}
    \partial_\tau\rho_\tau^{\bm c,\bm o}
    &= \int\partial_\tau\rho_\tau^{\bm z,\bm x}\,
       \gamma^{\bm c,\bm o}(d\bm z,d\bm x)
     = -\nabla\cdot\int\rho_\tau^{\bm z,\bm x}(\bm x-\bm z)\,
       \gamma^{\bm c,\bm o}(d\bm z,d\bm x)
     = -\nabla\cdot\big(\rho_\tau^{\bm c,\bm o}u_\tau^{\bm c,\bm o}\big),
\end{align*}
which is the continuity equation of conditional flow matching
\citep[Thm.~1]{lipman2023flow} extended to general couplings by
\citet{tong2024improving}. Hence, the flow of $u^{\bm c,\bm o}$ transports
$q_{t,0}^{\mathrm{tr};\bm c,\bm o}$ to $p_t^{\bm c,\bm o}$. By (i) and
\Cref{lem:induced} (iv), the learned and ideal velocities agree along the
interpolant distribution:
\begin{align*}
    \widetilde{\bm v}_{\hat\theta,t}(\cdot,\tau;\bm c,\bm o)
    = u_\tau^{\bm c,\bm o}
    \qquad \text{along }\rho_\tau^{\bm c,\bm o},\quad \tau\in[0,1].
\end{align*}
Thus \cref{eq:flowda-analysis-ode} has the same flow. By (ii) and (iii), 
\begin{align*}
    \mathbb{P} \left(
        \bm z_{t,1}\in\cdot
        \mid\mathcal C_t^N
    \right)
    = p_t^{\bm c_t^{\mathrm{static}},\bm o_t}
    = \mathbb{P} \left(
        \bm x_t^\star\in\cdot\mid\mathcal C_t^N
       \right)
     = \pi_t^N,
\end{align*}
where the final equality is from \cref{eq:theory-target-laws}.
\end{proof}

\subsubsection[]{Conditioning bias ($\varepsilon_{\mathrm{bias}}$)}
\label{app:theory-error-bias}

\begin{lemma}[Conditioning bias]
\label{lem:bias}
Define $B_{c,t} := \{\mathbb{E}\, W_2^2(\pi_t^\infty,\pi_t^a)\}^{1/2}$. If
every conditional law $\pi_t^\infty(\cdot\mid \mathcal{C}_t^\infty=[\bm c,\bm o])$
satisfies the uniform Talagrand $T_2(\kappa_t)$ transport--entropy inequality
\citep{talagrand1996transportation},
\begin{align}
    W_2^2\big(\nu, \pi_t^\infty(\cdot\mid \mathcal{C}_t^\infty=[\bm c,\bm o])\big)
    \le \frac{2}{\kappa_t}\,
    \mathrm{KL}\big(\nu \,\|\, \pi_t^\infty(\cdot\mid \mathcal{C}_t^\infty=[\bm c,\bm o])\big),
    \label{eq:theory-t2}
\end{align}
then 
\begin{align}
    \mathbb{E}\, W_2^2(\pi_t^\infty,\pi_t^a)
    = 
    B_{c,t}^2 \le \frac{2}{\kappa_t} \mathcal{I}_{\mathrm{repr},t}, 
    \quad
    \mathcal{I}_{\mathrm{repr},t}:=\mathbb{E}\,\mathrm{KL}(\pi_t^a\,\|\,\pi_t^\infty).
\end{align}
In particular $B_{c,t}=0$ whenever $\pi_t^\infty=\pi_t^a$ a.s.
\end{lemma}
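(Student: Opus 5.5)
The plan is to apply the Talagrand inequality \cref{eq:theory-t2} pointwise in $\omega$ and then take expectations. For a fixed realization, the analysis law $\pi_t^a=\pi_t^a(\omega)$ is a probability measure on $\mathbb{R}^{d_x}$. Likewise, $\pi_t^\infty(\omega)=\pi_t^\infty(\cdot\mid\mathcal{C}_t^\infty=[\bm c,\bm o])$ is evaluated at the realized record $[\bm c,\bm o]=[\bm c_t^{\mathrm{static},\infty}(\omega),\bm o_t(\omega)]$, so $\pi_t^\infty$ is exactly the kernel in the hypothesis.

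First I will check measurability and integrability. Both $\pi_t^a$ and $\pi_t^\infty$ are regular conditional distributions given $\bm y_{1:t}$ and $\mathcal{C}_t^\infty$, respectively, so they are random measures. The maps $\omega\mapsto W_2^2(\pi_t^\infty,\pi_t^a)$ and $\omega\mapsto \mathrm{KL}(\pi_t^a\|\pi_t^\infty)$ are then measurable, since $W_2$ and $\mathrm{KL}$ are lower semicontinuous and hence Borel on the space of measures. Both quantities are nonnegative, so their expectations exist in $[0,\infty]$ and no integrability condition is needed.

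Second, for each $\omega$ I take $\nu=\pi_t^a(\omega)$ in \cref{eq:theory-t2}. Because the constant $\kappa_t$ is uniform in $[\bm c,\bm o]$, this gives the pointwise bound $W_2^2(\pi_t^a,\pi_t^\infty)\le \frac{2}{\kappa_t}\mathrm{KL}(\pi_t^a\|\pi_t^\infty)$ almost surely. The bound holds trivially when the KL divergence is infinite. Since $W_2$ is symmetric, the left side equals $W_2^2(\pi_t^\infty,\pi_t^a)$. Taking expectations and using monotonicity of expectation, I get $B_{c,t}^2=\mathbb{E}W_2^2(\pi_t^\infty,\pi_t^a)\le \frac{2}{\kappa_t}\mathcal{I}_{\mathrm{repr},t}$. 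The identity $\mathbb{E}W_2^2=B_{c,t}^2$ is just the definition of $B_{c,t}$.

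Third, for the final claim: if $\pi_t^\infty=\pi_t^a$ almost surely, then $W_2(\pi_t^\infty,\pi_t^a)=0$ almost surely, so $B_{c,t}=0$. This needs no $T_2$ argument, and it is also consistent with the bound because the KL divergence vanishes.

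The main obstacle is the measurability step, namely making sure that $\pi_t^\infty$ in the statement really is the conditional law to which the uniform $T_2$ hypothesis applies. I would handle this by working with a fixed regular version of the kernel $K_t$ from \Cref{asm:lipschitz-kernel}, evaluated at $\mathcal{C}_t^\infty$. The hypothesis is stated for every record, so it holds along every realization and no exceptional null set causes trouble.
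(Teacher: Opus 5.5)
Your proposal is correct and follows the paper's own argument: the paper's proof likewise substitutes $\nu=\pi_t^a$ into the uniform $T_2(\kappa_t)$ inequality for each realized record and takes expectations. Your measurability and infinite-KL remarks, and your direct argument that $B_{c,t}=0$ when $\pi_t^\infty=\pi_t^a$ a.s., are careful details that the paper leaves implicit.
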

\begin{proof}
Apply $\nu=\pi_t^a$ to \cref{eq:theory-t2} and average over $\mathcal{C}_t^\infty$
and $\pi_t^a$.
\end{proof}

\subsubsection[]{Finite-ensemble features ($\varepsilon_{\mathrm{feat}}$)}
\label{app:theory-error-features}

\begin{lemma}[Conditional stability]
\label{lem:lipschitz}
Under \Cref{asm:ideal-stability}, the endpoint conditional laws follow
\begin{align}
    W_2\big(K_t(\bm c,\bm o),K_t(\bm c',\bm o)\big)
    \le L_{K,t}\|\bm c-\bm c'\|_c, 
    \quad
    L_{K,t}
    := e^{L_{u,z,t}}L_{q,c,t}
    + L_{u,c,t}\frac{e^{L_{u,z,t}}-1}{L_{u,z,t}}.
    \label{eq:theory-lipschitz-const}
\end{align}
Then \Cref{asm:lipschitz-kernel} holds with $\beta_t=1$. 
When $L_{u,z,t}=0$, we interpret $(e^{L_{u,z,t}}-1)/L_{u,z,t}$ 
as its continuous limit $1$.
\end{lemma}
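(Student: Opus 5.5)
The plan is a synchronous coupling of two ideal flows, followed by a Grönwall estimate. Fix $\bm o$ and two static conditions $\bm c,\bm c'$. Write $\Phi_{0,1}^{\bm c,\bm o}$ for the flow map of the ideal velocity $u^{\bm c,\bm o}_\tau$, and similarly for $\bm c'$. By \Cref{asm:ideal-stability}, these maps push $q_{t,0}^{\mathrm{tr};\bm c,\bm o}$ forward to $K_t(\bm c,\bm o)$ and $q_{t,0}^{\mathrm{tr};\bm c',\bm o}$ forward to $K_t(\bm c',\bm o)$. Take an optimal $W_2$ coupling $(\bm Z,\bm Z')$ of the two sources and run both ODEs from these coupled starting points: $\bm z_\tau$ under $u^{\bm c,\bm o}$ from $\bm Z$, and $\bm z'_\tau$ under $u^{\bm c',\bm o}$ from $\bm Z'$. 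Then $(\bm z_1,\bm z'_1)$ is a coupling of the two endpoint laws, so
\begin{align*}
W_2\big(K_t(\bm c,\bm o),K_t(\bm c',\bm o)\big)\le \big(\mathbb E\|\bm z_1-\bm z'_1\|_2^2\big)^{1/2}.
\end{align*}

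Next I bound the pathwise distance. Let $e_\tau:=\|\bm z_\tau-\bm z'_\tau\|_2$. Splitting the velocity difference as
\begin{align*}
u^{\bm c,\bm o}_\tau(\bm z_\tau)-u^{\bm c',\bm o}_\tau(\bm z'_\tau)
=\big[u^{\bm c,\bm o}_\tau(\bm z_\tau)-u^{\bm c,\bm o}_\tau(\bm z'_\tau)\big]
+\big[u^{\bm c,\bm o}_\tau(\bm z'_\tau)-u^{\bm c',\bm o}_\tau(\bm z'_\tau)\big]
\end{align*}
and using the two Lipschitz constants gives, for almost every $\tau$,
\begin{align*}
\tfrac{d}{d\tau}e_\tau\le L_{u,z,t}\,e_\tau+L_{u,c,t}\|\bm c-\bm c'\|_c.
\end{align*}
Grönwall then yields
\begin{align*}
e_1\le e^{L_{u,z,t}}e_0+L_{u,c,t}\frac{e^{L_{u,z,t}}-1}{L_{u,z,t}}\|\bm c-\bm c'\|_c,
\end{align*}
with the quotient read as $1$ when $L_{u,z,t}=0$. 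To avoid differentiating a norm that may vanish, I would work with the integral form of the inequality, i.e. the same bound applied to $\|\bm z_\tau-\bm z'_\tau\|$ directly.

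Taking $L^2(\mathbb P)$ norms and applying Minkowski's inequality, with the second term deterministic, gives
\begin{align*}
\big(\mathbb E e_1^2\big)^{1/2}\le e^{L_{u,z,t}}\big(\mathbb E e_0^2\big)^{1/2}+L_{u,c,t}\frac{e^{L_{u,z,t}}-1}{L_{u,z,t}}\|\bm c-\bm c'\|_c .
\end{align*}
Since the coupling is optimal, $(\mathbb E e_0^2)^{1/2}=W_2(q^{\mathrm{tr};\bm c,\bm o}_{t,0},q^{\mathrm{tr};\bm c',\bm o}_{t,0})$, and \cref{eq:theory-source-condition-stability} bounds this by $L_{q,c,t}\|\bm c-\bm c'\|_c$. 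Combining the two terms gives exactly the constant $L_{K,t}$ in \cref{eq:theory-lipschitz-const}. This is the Hölder bound \cref{eq:theory-lipschitz-kernel} with $\beta_t=1$, so \Cref{asm:lipschitz-kernel} holds, uniformly in $\bm o$ because all constants are.

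The main obstacle is justifying the Lipschitz estimates along both trajectories. \Cref{asm:ideal-stability} gives them only uniformly on $\mathcal N_t$, so I must check that both coupled trajectories stay in $\mathcal N_t$. For this I would invoke \Cref{asm:tube}, which places the source supports and the exact trajectories started from the training source inside $\mathcal N_t$. The Lipschitz property of $u$ also makes the flow well posed, by Cauchy–Lipschitz, and the flow maps measurable.
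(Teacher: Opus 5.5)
Your proposal is correct and follows the paper's proof essentially step for step. Both arguments use an optimal $W_2$ coupling of the two training sources and evolve the coupled points under the ideal flows for $\bm c$ and $\bm c'$. Both then apply Gr\"onwall with the $L_{u,z,t}$/$L_{u,c,t}$ splitting, Minkowski in $L^2(\mathbb P)$, and the source bound \cref{eq:theory-source-condition-stability}, and identify the endpoint laws as $K_t(\bm c,\bm o)$ and $K_t(\bm c',\bm o)$ via \Cref{asm:ideal-stability}. Your extra care (the integral form of Gr\"onwall, and checking that both trajectories stay in $\mathcal N_t$ so the Lipschitz constants apply) only tightens points the paper passes over silently.
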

\begin{proof}
Choose random initial states $(\bm Z_0,\bm Z_0')$ with laws
$q_{t,0}^{\mathrm{tr};\bm c,\bm o}$ and
$q_{t,0}^{\mathrm{tr};\bm c',\bm o}$ respectively such that
\begin{align*}
    \big\{\mathbb E\|\bm Z_0-\bm Z_0'\|_2^2\big\}^{1/2}
    = W_2\big(q_{t,0}^{\mathrm{tr};\bm c,\bm o},
        q_{t,0}^{\mathrm{tr};\bm c',\bm o}\big).
\end{align*}
Let $\bm Z_\tau$ and $\bm Z_\tau'$ solve the ODEs exactly with static
conditions $\bm c$ and $\bm c'$ respectively, and set
$\Delta_\tau:=\bm Z_\tau-\bm Z_\tau'$. The Lipschitz bounds in
\Cref{asm:ideal-stability} give
\begin{align*}
    \frac{d}{d\tau}\|\Delta_\tau\|_2
    \le L_{u,z,t}\|\Delta_\tau\|_2
      +L_{u,c,t}\|\bm c-\bm c'\|_c.
\end{align*}
By Gr\"onwall's inequality, at $\tau=1$,
\begin{align*}
    \|\Delta_1\|_2
    \le e^{L_{u,z,t}}\|\Delta_0\|_2
      +L_{u,c,t}\frac{e^{L_{u,z,t}}-1}{L_{u,z,t}}
        \|\bm c-\bm c'\|_c.
\end{align*}
Taking $L^2$ norms, using the optimality of the initial coupling, 
and the source bound in \Cref{asm:ideal-stability} yields
\begin{align*}
    \big\{\mathbb E\|\Delta_1\|_2^2\big\}^{1/2}
    &\le e^{L_{u,z,t}}
       W_2\big(q_{t,0}^{\mathrm{tr};\bm c,\bm o},
                q_{t,0}^{\mathrm{tr};\bm c',\bm o}\big)
      +L_{u,c,t}\frac{e^{L_{u,z,t}}-1}{L_{u,z,t}}
       \|\bm c-\bm c'\|_c \\
    &\le L_{K,t}\|\bm c-\bm c'\|_c.
\end{align*}
By \Cref{asm:ideal-stability}, the two endpoint laws are
$K_t(\bm c,\bm o)$ and $K_t(\bm c',\bm o)$. Therefore the definition of
$W_2$ gives the stated bound.
\end{proof}

\subsubsection[]{Flow matching and Euler integration ($\varepsilon_{\mathrm{fm}}$, $\varepsilon_{\mathrm{ode}}$)}
\label{app:theory-error-numerical}

\begin{lemma}[Population conditional error]
\label{lem:optimal-rate}
Under \Cref{asm:lipschitz-kernel,asm:feature-concentration} with
$\beta_t=1$,
\begin{align}
    \big\{\mathbb{E}\, W_2^2(\pi_t^N,\pi_t^a)\big\}^{1/2}
    \le
    B_{c,t} + L_{K,t}\,\sigma_{c,t}\sqrt{\frac{D_{c,t}}{N}}
    + L_{K,t}\,b_{c,N,t}.
    \label{eq:theory-optimal-bound}
\end{align}
\end{lemma}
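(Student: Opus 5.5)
The bound follows from the triangle inequality for $W_2$ inserted at the intermediate law $\pi_t^\infty$, followed by Minkowski's inequality in $L^2(\mathbb{P})$. Pointwise on $\Omega$,
\begin{align*}
    W_2(\pi_t^N,\pi_t^a)\le W_2(\pi_t^N,\pi_t^\infty)+W_2(\pi_t^\infty,\pi_t^a).
\end{align*}
Taking $L^2$ norms gives
\begin{align*}
    \big\{\mathbb{E} W_2^2(\pi_t^N,\pi_t^a)\big\}^{1/2}\le\big\{\mathbb{E} W_2^2(\pi_t^N,\pi_t^\infty)\big\}^{1/2}+\big\{\mathbb{E} W_2^2(\pi_t^\infty,\pi_t^a)\big\}^{1/2}.
\end{align*}
The second term is exactly $B_{c,t}$ by its definition in \Cref{lem:bias}, so no further work is needed there.

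For the first term, both conditional laws are evaluated through the same kernel $K_t$ of \Cref{asm:lipschitz-kernel}. We have $\pi_t^N=K_t(\bm c_t^{\mathrm{static}},\bm o_t)$ by the definition \cref{eq:theory-target-laws}, and $\pi_t^\infty=K_t(\bm c_t^{\mathrm{static},\infty},\bm o_t)$ a.s. by assumption. Both share the same observation record $\bm o_t$. The stability bound \cref{eq:theory-lipschitz-kernel} with $\beta_t=1$ and uniformity in $\bm o$ then gives, almost surely,
\begin{align*}
    W_2(\pi_t^N,\pi_t^\infty)\le L_{K,t}\,\|\bm c_t^{\mathrm{static}}-\bm c_t^{\mathrm{static},\infty}\|_c.
\end{align*}
Squaring, taking expectations, and applying \Cref{asm:feature-concentration} yields
\begin{align*}
    \big\{\mathbb{E} W_2^2(\pi_t^N,\pi_t^\infty)\big\}^{1/2}\le L_{K,t}\big(\sigma_{c,t}^2D_{c,t}/N+b_{c,N,t}^2\big)^{1/2}.
\end{align*}
Finally, $\sqrt{a+b}\le\sqrt a+\sqrt b$ splits this into $L_{K,t}\sigma_{c,t}\sqrt{D_{c,t}/N}+L_{K,t}b_{c,N,t}$, which is the claimed form.

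The main obstacle is technical: justifying that $\pi_t^N$ and $\pi_t^\infty$ are evaluations of a \emph{single} kernel at two different arguments. The Lipschitz bound is only meaningful in that case, and it requires both $\bm c_t^{\mathrm{static}}$ and $\bm c_t^{\mathrm{static},\infty}$ to lie in the support where \cref{eq:theory-lipschitz-kernel} holds. Both facts are built into \Cref{asm:lipschitz-kernel}, so I would invoke them explicitly and note that a regular version of the conditional law exists on $\mathbb{R}^{d_x}$.

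Two further points need care. First, measurability of $\omega\mapsto W_2(\pi_t^N,\pi_t^\infty)$ is needed for the expectations to make sense; it follows from measurability of the kernel and lower semicontinuity of $W_2$. Second, the finiteness of the second moments is guaranteed by \Cref{asm:stationary}.
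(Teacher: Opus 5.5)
Your proof is correct and follows the paper's argument essentially step for step: the triangle inequality at $\pi_t^\infty$ plus Minkowski, identification of the second term with $B_{c,t}$, the common-kernel Lipschitz bound with the shared record $\bm o_t$, \Cref{asm:feature-concentration}, and finally $\sqrt{a+b}\le\sqrt a+\sqrt b$. The only superfluous point is your appeal to \Cref{asm:stationary} for finite second moments. That assumption is not among the lemma's hypotheses, and it bounds the baseline ensembles rather than $\bm x_t^\star$. It is also not needed, because Minkowski's inequality holds for $[0,\infty]$-valued quantities.
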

\begin{proof}
The triangle inequality followed by Minkowski's inequality gives
\begin{align}
    \big\{\mathbb E W_2^2(\pi_t^N,\pi_t^a)\big\}^{1/2}
    &\le
    \big\{\mathbb E W_2^2(\pi_t^N,\pi_t^\infty)\big\}^{1/2}
    + \big\{\mathbb E W_2^2(\pi_t^\infty,\pi_t^a)\big\}^{1/2} \\
    &= \big\{\mathbb E W_2^2(\pi_t^N,\pi_t^\infty)\big\}^{1/2}
    + B_{c,t}.
    \label{eq:theory-conditional-error-split}
\end{align}
Because $\mathcal C_t^N$ and $\mathcal C_t^\infty$ have the same
observation record $\bm o_t$, the kernel representation in
\Cref{asm:lipschitz-kernel} gives
\begin{align*}
    \pi_t^N
    = K_t(\bm c_t^{\mathrm{static}},\bm o_t), 
    \quad
    \pi_t^\infty
    = K_t(\bm c_t^{\mathrm{static},\infty},\bm o_t), 
    \quad
    W_2(\pi_t^N,\pi_t^\infty)
    \le L_{K,t}
       \|\bm c_t^{\mathrm{static}}
          -\bm c_t^{\mathrm{static},\infty}\|_c.
\end{align*}
Therefore, by \Cref{asm:feature-concentration},
\begin{align}
    \big\{\mathbb E W_2^2(\pi_t^N,\pi_t^\infty)\big\}^{1/2}
    &\le L_{K,t}
       \big\{\mathbb E\|\bm c_t^{\mathrm{static}}
          -\bm c_t^{\mathrm{static},\infty}\|_c^2\big\}^{1/2} \\
    &\le L_{K,t}
       \sqrt{\sigma_{c,t}^2\frac{D_{c,t}}{N}+b_{c,N,t}^2} \\
    &\le L_{K,t}\left(
       \sigma_{c,t}\sqrt{\frac{D_{c,t}}{N}}+b_{c,N,t}\right).
    \label{eq:theory-conditional-feature-bound}
\end{align}
Combining \cref{eq:theory-conditional-error-split,eq:theory-conditional-feature-bound}
yields \cref{eq:theory-optimal-bound}.
\end{proof}

\begin{lemma}[CFM and Euler error]
\label{lem:numerical}
Let the induced velocity $\widetilde{\bm v}_{\hat\theta,t}$ have population
excess CFM loss as follows
\begin{align}
    \varepsilon_{\mathrm{vel}}
    := \mathcal L_{\mathrm{CFM}}(\widetilde{\bm v}_{\hat\theta,t})
      -\mathcal L_{\mathrm{CFM}}(u) 
    =\int_0^1\mathbb E\big\|
      \widetilde{\bm v}_{\hat\theta,t}
      (\bm Z_\tau,\tau;\bm c_t^{\mathrm{static}},\bm o_t)
      -u_\tau^{\bm c_t^{\mathrm{static}},\bm o_t}(\bm Z_\tau)
      \big\|_2^2\,d\tau .
    \label{eq:theory-numerical-excess-loss}
\end{align}
Under \Cref{asm:tube}:
\begin{enumerate}
\item[(i)] The  CFM error for exact flow satisfies
\begin{align}
    \bigg\{\mathbb E\,W_2^2\big(
        \widehat\Phi^{\mathcal{C}_t^N}_{0,1\#}
        q_{t,0}^{\mathrm{tr};\mathcal{C}_t^N},\,\pi_t^N
    \big)\bigg\}^{1/2}
    &\le e^{\widetilde{L}_{v,z,t}}\sqrt{\varepsilon_{\mathrm{vel}}}
    =: C_{\mathrm{FM}}\sqrt{\varepsilon_{\mathrm{vel}}};
    \label{eq:theory-vel-bound}
\end{align}
\item[(ii)] for every source law $\nu$ supported in $\mathcal N_t$, the
$K$-step Euler error satisfies
\begin{align}
    W_2\big(\widehat\Psi^{\mathcal{C}_t^N}_{K\#}\nu,\,\widehat\Phi^{\mathcal{C}_t^N}_{0,1\#}\nu\big)
    &\le C_{\mathrm{Euler}}K^{-1}.
    \label{eq:theory-euler-bound}
\end{align}
\end{enumerate}
\end{lemma}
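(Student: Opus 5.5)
Both parts will be proved with synchronous couplings and Gr\"onwall's inequality along the pseudo-time flow, using the Lipschitz bound of \Cref{asm:tube} on the tube $\mathcal N_t$.

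For (i), fix a record $\mathcal C_t^N=[\bm c,\bm o]$ and work on the interpolant probability space of \cref{eq:theory-interpolant-pair}. By \Cref{lem:optimal} (in its continuity-equation form), the flow $\Phi^{u}$ of the ideal velocity $u^{\bm c,\bm o}$ pushes $q_{t,0}^{\mathrm{tr};\bm c,\bm o}$ to $p_t^{\bm c,\bm o}=\pi_t^N$, and its marginals coincide with the interpolant marginals $\rho_\tau^{\bm c,\bm o}$. Start both flows from the same $\bm Z_0\sim q_{t,0}^{\mathrm{tr};\bm c,\bm o}$. Set $\bm Y_\tau:=\Phi^u_{0,\tau}(\bm Z_0)$, so that $\bm Y_\tau\sim\rho_\tau^{\bm c,\bm o}$, and $\widehat{\bm Y}_\tau:=\widehat\Phi^{\bm c,\bm o}_{0,\tau}(\bm Z_0)$. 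Let $e_\tau:=\widehat{\bm Y}_\tau-\bm Y_\tau$. Then
\[
\dot e_\tau=\big[\widetilde{\bm v}(\widehat{\bm Y}_\tau)-\widetilde{\bm v}(\bm Y_\tau)\big]+\big[\widetilde{\bm v}(\bm Y_\tau)-u_\tau(\bm Y_\tau)\big].
\]
The first bracket is bounded by $\widetilde L_{v,z,t}\|e_\tau\|$. Gr\"onwall with $e_0=0$ gives
\[
\|e_1\|\le e^{\widetilde L_{v,z,t}}\int_0^1\|\widetilde{\bm v}(\bm Y_\tau,\tau)-u_\tau(\bm Y_\tau)\|\,d\tau .
\]
Squaring, applying Jensen/Cauchy--Schwarz in $\tau$, and taking expectations, we use that $\bm Y_\tau$ has the same law as $\bm Z_\tau$ given the record. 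This turns the right-hand side into exactly the integrand of \cref{eq:theory-numerical-excess-loss}. Averaging over $\mathcal C_t^N$ gives $\mathbb E\|e_1\|^2\le e^{2\widetilde L_{v,z,t}}\varepsilon_{\mathrm{vel}}$. Since $(\widehat{\bm Y}_1,\bm Y_1)$ is a coupling of $\widehat\Phi_{0,1\#}q^{\mathrm{tr}}$ and $\pi_t^N$ conditionally on the record, the conditional $W_2^2$ is at most the conditional $\mathbb E\|e_1\|^2$, which yields \cref{eq:theory-vel-bound}.

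The main obstacle is the identification $\mathrm{Law}(\bm Y_\tau\mid\mathcal C_t^N)=\rho_\tau^{\bm c,\bm o}$. This needs uniqueness for the continuity equation driven by $u$, which the Lipschitz-in-$\bm z$ property of $u$ from \Cref{asm:ideal-stability} supplies. A further subtlety is that $\widetilde{\bm v}$ and $u$ are only compared where the paths lie, so we need all trajectories to stay in $\mathcal N_t$, which \Cref{asm:tube} guarantees.

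For (ii), again couple synchronously: draw $\bm z_0\sim\nu$ and compare the Euler iterates $\bm z^{\mathrm E}_{\tau_k}$ with the exact trajectory $\bm z_{\tau_k}$. This is the standard global error analysis for explicit Euler. The local truncation error on each step is
\[
\Big\|\int_{\tau_k}^{\tau_{k+1}}\big(\widetilde{\bm v}(\bm z_s,s)-\widetilde{\bm v}(\bm z_{\tau_k},\tau_k)\big)ds\Big\|\le \tfrac{1}{2}K^{-2}M_t ,
\]
where $M_t$ bounds $\|\partial_\tau\widetilde{\bm v}\|+\widetilde L_{v,z,t}\|\widetilde{\bm v}\|$ on $\mathcal N_t$. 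This bound follows from the $C^1$-in-$\tau$ and Lipschitz-in-$\bm z$ hypotheses of \Cref{asm:tube}, with finiteness presumed on the tube. Errors propagate with the one-step factor $(1+\widetilde L_{v,z,t}/K)$, so the discrete Gr\"onwall inequality gives
\[
\|\bm z^{\mathrm E}_1-\bm z_1\|\le \frac{M_t}{2\widetilde L_{v,z,t}}\big(e^{\widetilde L_{v,z,t}}-1\big)K^{-1}=:C_{\mathrm{Euler}}K^{-1}
\]
pointwise. The coupling definition of $W_2$ then gives \cref{eq:theory-euler-bound}.
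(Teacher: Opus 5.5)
Your proposal is correct and follows the paper's proof essentially step for step. In (i) you use the same synchronous coupling of the ideal and learned flows from a common training-source draw, the same split of the velocity difference into a Lipschitz term plus the CFM error along the ideal trajectory, then Gr\"onwall, Jensen in $\tau$, and the identification of the ideal-flow marginals with $\rho_\tau^{\bm c,\bm o}$. In (ii) you obtain the same explicit-Euler global bound with the same $C_{\mathrm{Euler}}$, and your two differences only add rigor: you derive the Euler bound directly via local truncation error and discrete Gr\"onwall, needing only $C^1$-in-$\tau$ and Lipschitz-in-$\bm z$ rather than the paper's bound on $\ddot{\bm z}_\tau$, which involves $\nabla_{\bm z}\widetilde{\bm v}_{\hat\theta,t}$. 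You also state explicitly the continuity-equation uniqueness, via Lipschitzness of $u$ from \Cref{asm:ideal-stability}, which the paper uses only implicitly.
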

\begin{proof}
\begin{enumerate}
\item[(i)] Consider fixed $[\bm c,\bm o]$, we define
\begin{align*}
    \bm X_0=\bm Y_0
    &\sim q_{t,0}^{\mathrm{tr};\bm c,\bm o}, 
    \quad
    \dot{\bm X}_\tau
    =u_\tau^{\bm c,\bm o}(\bm X_\tau), 
    \quad
    \dot{\bm Y}_\tau
    =\widetilde{\bm v}_{\hat\theta,t}
      (\bm Y_\tau,\tau;\bm c,\bm o), 
    \\
    e_\tau(\bm z)
    &:=\big\|u_\tau^{\bm c,\bm o}(\bm z)
      -\widetilde{\bm v}_{\hat\theta,t}
      (\bm z,\tau;\bm c,\bm o)\big\|_2.
\end{align*}
The trajectory difference satisfies
\begin{align*}
    \frac{d}{d\tau}(\bm X_\tau-\bm Y_\tau)
    &={\underbrace{u_\tau^{\bm c,\bm o}(\bm X_\tau)
      -\widetilde{\bm v}_{\hat\theta,t}
      (\bm X_\tau,\tau;\bm c,\bm o)}_{\text{CFM velocity error}}} \\
    &\quad+{\underbrace{\widetilde{\bm v}_{\hat\theta,t}
      (\bm X_\tau,\tau;\bm c,\bm o)
      -\widetilde{\bm v}_{\hat\theta,t}
      (\bm Y_\tau,\tau;\bm c,\bm o)}_{\text{state stability term}}}, \\
    \frac{d}{d\tau}\|\bm X_\tau-\bm Y_\tau\|_2
    &\le e_\tau(\bm X_\tau)
      +\widetilde{L}_{v,z,t}\|\bm X_\tau-\bm Y_\tau\|_2,
\end{align*}
where the second line uses \Cref{asm:tube}.
By Gr\"onwall's inequality,
\begin{align}
    \|\bm X_1-\bm Y_1\|_2
    &\le e^{\widetilde{L}_{v,z,t}}
       \int_0^1e_s(\bm X_s)\,ds, \\
    \mathbb{E}\|\bm X_1-\bm Y_1\|_2^2
    &\le e^{2\widetilde{L}_{v,z,t}}
       \mathbb{E}\!\left[\left(\int_0^1e_s(\bm X_s)\,ds\right)^2\right] \\
    &\le e^{2\widetilde{L}_{v,z,t}}
       \int_0^1\mathbb{E}\big[e_s(\bm X_s)^2\big]\,ds
    = e^{2\widetilde{L}_{v,z,t}}\,\varepsilon_{\mathrm{vel}},
    \label{eq:theory-gronwall-vel}
\end{align}
where we apply Jensen's inequality, 
and \cref{eq:theory-gronwall-vel} so that
\begin{align*}
    \mathcal L(\bm X_s\mid\mathcal C_t^N=[\bm c,\bm o])
    =\rho_s^{\bm c,\bm o}, 
    \quad
    \int_0^1\mathbb E\big[e_s(\bm X_s)^2\big]\,ds
    =\varepsilon_{\mathrm{vel}}.
\end{align*}
The endpoint laws are
\begin{align*}
    \mathcal L(\bm X_1\mid\mathcal C_t^N)
    =\pi_t^N, 
    \quad
    \mathcal L(\bm Y_1\mid\mathcal C_t^N)
    =\widehat\Phi_{0,1\#}^{\mathcal C_t^N}
      q_{t,0}^{\mathrm{tr};\mathcal C_t^N}.
\end{align*}
Therefore,
\begin{align*}
    \mathbb E\,W_2^2\big(
      \widehat\Phi_{0,1\#}^{\mathcal C_t^N}
      q_{t,0}^{\mathrm{tr};\mathcal C_t^N},\pi_t^N
    \big)
    \le \mathbb E\|\bm X_1-\bm Y_1\|_2^2 
    \le e^{2\widetilde L_{v,z,t}}\varepsilon_{\mathrm{vel}}.
\end{align*}

\item[(ii)] Along every exact trajectory $\bm z_\tau$ of the learned ODE,
\Cref{asm:tube} gives a uniform constant $M_{2,t}<\infty$ such that
\begin{align*}
    \sup_{0\le\tau\le1}\|\ddot{\bm z}_\tau\|_2
    \le M_{2,t},
    \quad
    \ddot{\bm z}_\tau
    =\partial_\tau\widetilde{\bm v}_{\hat\theta,t}
    (\bm z_\tau,\tau;\bm c,\bm o)
    +\nabla_{\bm z}\widetilde{\bm v}_{\hat\theta,t}
    (\bm z_\tau,\tau;\bm c,\bm o)
    \,\widetilde{\bm v}_{\hat\theta,t}
    (\bm z_\tau,\tau;\bm c,\bm o).
\end{align*}
The standard global error bound for explicit Euler with step size
$h=K^{-1}$ \citep[Ch.~II.3]{hairer1993solving} gives
\begin{align}
    \big\|\widehat\Psi^{\bm c,\bm o}_{K}(\bm x)-\widehat\Phi^{\bm c,\bm o}_{0,1}(\bm x)\big\|_2
    \le \frac{M_{2,t}}{2\widetilde{L}_{v,z,t}}
        \big(e^{\widetilde{L}_{v,z,t}}-1\big)\,K^{-1}
    =: C_{\mathrm{Euler}}K^{-1},
\end{align}
For every $\nu$ supported in $\mathcal N_t$, we have
\begin{align*}
    W_2^2\big(
      \widehat\Psi^{\mathcal C_t^N}_{K\#}\nu,
      \widehat\Phi^{\mathcal C_t^N}_{0,1\#}\nu
    \big)
    \le \int\big\|
      \widehat\Psi_K^{\mathcal C_t^N}(\bm x)
      -\widehat\Phi_{0,1}^{\mathcal C_t^N}(\bm x)
      \big\|_2^2\,\nu(d\bm x) 
    \le C_{\mathrm{Euler}}^2K^{-2}.
\end{align*}
\end{enumerate}
\end{proof}

\subsubsection[]{Source mismatch ($\varepsilon_{\mathrm{src}}$)}
\label{app:theory-error-source}

\begin{lemma}[Source mismatch]
\label{lem:source}
Define
\begin{align}
    \varepsilon_{\mathrm{src}}
    :=\mathbb{E}\,W_2\big(
      \widehat\Phi^{\mathcal{C}_t^N}_{0,1\#}q_{t,0}^{\mathrm{inf};N},\,
      \widehat\Phi^{\mathcal{C}_t^N}_{0,1\#}q_{t,0}^{\mathrm{tr};\mathcal{C}_t^N}\big).
    \label{eq:theory-source-mismatch}
\end{align}
Under \Cref{asm:tube,asm:baseline},
\begin{align}
    \varepsilon_{\mathrm{src}}
    \le e^{\widetilde{L}_{v,z,t}}
    \big(\underbrace{O(r_{f,N})}_{\text{baseline sampling}}
    +\underbrace{\delta_{\mathrm{gauss},t}}_{\text{source model gap}}\big),
    \qquad
    \delta_{\mathrm{gauss},t}
    :=\mathbb{E}\,W_2\big(\pi_t^f,q_{t,0}^{\mathrm{tr};\mathcal{C}_t^N}\big).
    \label{eq:theory-source-split}
\end{align}
\end{lemma}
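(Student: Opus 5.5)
The plan is a synchronous-coupling stability argument for the exact learned flow map, followed by a triangle inequality on the source laws. Fix the record $\mathcal{C}_t^N=[\bm c,\bm o]$. Both sources are supported in $\mathcal N_t$ by \Cref{asm:tube}, so for any two initial points $\bm z_0,\bm z_0'$ the exact trajectories of $\widetilde{\bm v}_{\hat\theta,t}(\cdot,\tau;\bm c,\bm o)$ satisfy
\[
\frac{d}{d\tau}\|\bm z_\tau-\bm z_\tau'\|_2\le \widetilde L_{v,z,t}\|\bm z_\tau-\bm z_\tau'\|_2 .
\]
Gr\"onwall's inequality then gives that $\widehat\Phi^{\bm c,\bm o}_{0,1}$ is $e^{\widetilde L_{v,z,t}}$-Lipschitz on $\mathcal N_t$; the static condition is the same for both trajectories, so the $\widetilde L_{v,c,t}$ term does not enter.

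The first key step is the standard pushforward contraction: if $T$ is $L$-Lipschitz then $W_2(T_\#\mu,T_\#\nu)\le L\,W_2(\mu,\nu)$. To see this, take an optimal coupling $(\bm Z,\bm Z')$ of $(\mu,\nu)$. Then $(T\bm Z,T\bm Z')$ is a coupling of the pushforwards, and $\mathbb E\|T\bm Z-T\bm Z'\|^2\le L^2W_2^2(\mu,\nu)$. Applying this with $T=\widehat\Phi^{\mathcal C_t^N}_{0,1}$, $\mu=q_{t,0}^{\mathrm{inf};N}$ and $\nu=q_{t,0}^{\mathrm{tr};\mathcal C_t^N}$ gives, pointwise in the record,
\[
W_2\big(\widehat\Phi_{0,1\#}q^{\mathrm{inf};N}_{t,0},\widehat\Phi_{0,1\#}q^{\mathrm{tr};\mathcal C_t^N}_{t,0}\big)\le e^{\widetilde L_{v,z,t}}\,W_2\big(q^{\mathrm{inf};N}_{t,0},q^{\mathrm{tr};\mathcal C_t^N}_{t,0}\big).
\]

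The second step splits the source discrepancy through the exact forecast law $\pi_t^f$. Using \eqref{eq:theory-inference-source}, $q^{\mathrm{inf};N}_{t,0}=\widehat\pi_t^{B,f}$, so the triangle inequality for $W_2$ gives
\[
W_2\big(\widehat\pi_t^{B,f},q^{\mathrm{tr};\mathcal C_t^N}_{t,0}\big)\le W_2\big(\widehat\pi_t^{B,f},\pi_t^f\big)+W_2\big(\pi_t^f,q^{\mathrm{tr};\mathcal C_t^N}_{t,0}\big).
\]
Taking expectations, the first term is $O(r_{f,N})$ by \Cref{asm:baseline}, and the second is $\delta_{\mathrm{gauss},t}$ by definition. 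Because the Lipschitz constant is deterministic (uniform in the record), it factors out of the expectation, which yields \eqref{eq:theory-source-split}.

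The main obstacle is measurability and integrability bookkeeping rather than the estimate itself. The flow map depends on the random record, and the inference source is an empirical measure correlated with $\bm c_t^{\mathrm{static}}$. I would handle this by doing everything conditionally on $\mathcal G_t$, where $\widehat\Phi^{\mathcal C_t^N}_{0,1}$ is a fixed Lipschitz map and both sources are fixed measures. The uniform constant then makes the pointwise bound pass through $\mathbb E$ directly. I would also note explicitly that \Cref{asm:tube} asserts that trajectories from both sources remain in $\mathcal N_t$, so the Lipschitz estimate applies along the entire coupled paths.
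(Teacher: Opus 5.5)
Your proposal is correct and follows the paper's proof: Gr\"onwall makes the exact flow map $e^{\widetilde L_{v,z,t}}$-Lipschitz, the pushforward contraction of $W_2$ carries this to the source laws, and the triangle inequality through $\pi_t^f$ then yields the $O(r_{f,N})$ term (from \Cref{asm:baseline}) and $\delta_{\mathrm{gauss},t}$. Your explicit coupling proof of the pushforward bound and your remark about conditioning on $\mathcal G_t$ fill in details that the paper cites to Villani or leaves implicit.
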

\begin{proof}
Fix the record $\mathcal{C}_t^N$. By \Cref{asm:tube} and Gr\"onwall's
inequality, two exact trajectories of \cref{eq:flowda-analysis-ode} started at
$\bm x,\bm x'$ satisfy
\begin{align*}
    \big\|\widehat\Phi^{\mathcal{C}_t^N}_{0,1}(\bm x)
    -\widehat\Phi^{\mathcal{C}_t^N}_{0,1}(\bm x')\big\|_2
    \le e^{\widetilde{L}_{v,z,t}}\|\bm x-\bm x'\|_2 ,
\end{align*}
so the flow map is $e^{\widetilde{L}_{v,z,t}}$-Lipschitz. 
According to \citep[Ch.~6]{villani2008optimal}, 
pushing forward through a Lipschitz map inflates $W_2$ 
by at most its Lipschitz constant, 
hence for all $\mu,\nu$ supported in
$\mathcal{N}_t$,
\begin{align*}
    W_2\big(\widehat\Phi^{\mathcal{C}_t^N}_{0,1\#}\mu,
    \widehat\Phi^{\mathcal{C}_t^N}_{0,1\#}\nu\big)
    \le e^{\widetilde{L}_{v,z,t}}\,W_2(\mu,\nu).
\end{align*}
We take $\mu=q_{t,0}^{\mathrm{inf};N}=\widehat{\pi}_t^{B,f}$ of
\cref{eq:theory-inference-source} and
$\nu=q_{t,0}^{\mathrm{tr};\mathcal{C}_t^N}$, 
then apply triangle inequality through $\pi_t^f$ 
and take expectations to obtain
\begin{align*}
    \varepsilon_{\mathrm{src}}
    &\le e^{\widetilde{L}_{v,z,t}}\,
       \mathbb{E}\,W_2\big(\widehat{\pi}_t^{B,f},
       q_{t,0}^{\mathrm{tr};\mathcal{C}_t^N}\big) \\
    &\le e^{\widetilde{L}_{v,z,t}}\Big(
       \underbrace{\mathbb{E}\,W_2\big(\widehat{\pi}_t^{B,f},\pi_t^f\big)}
         _{=O(r_{f,N})\ \text{by \Cref{asm:baseline}}}
       +\underbrace{\mathbb{E}\,W_2\big(\pi_t^f,
         q_{t,0}^{\mathrm{tr};\mathcal{C}_t^N}\big)}
         _{=\delta_{\mathrm{gauss},t}}\Big).
\end{align*}
\end{proof}

\subsubsection[]{Output calibration ($\varepsilon_{\mathrm{cal},N}$)}
\label{app:theory-error-calibration}

\begin{lemma}[Inflation]
\label{lem:ensemble}
Let $\widetilde\pi_t^a,\widehat\pi_t^a$ be as defined in
\cref{eq:theory-flowef-laws}, and let
\begin{align}
    S_{t,N}^2
    := \frac1N\sum_{i=1}^N\big\|\bm z_{t,1}^{(i)}
       -\hat{\bar{\bm x}}_t^a\big\|_2^2,
    \qquad
    \hat{\bar{\bm x}}_t^a
    = \frac1N\sum_{i=1}^N\bm z_{t,1}^{(i)},
    \label{eq:theory-raw-spread}
\end{align}
be the raw endpoint ensemble variance and mean. Then
\begin{align}
    \varepsilon_{\mathrm{cal},N}
    :=\mathbb{E}\,W_2\big(\widehat{\pi}_t^a,\widetilde{\pi}_t^a\big)
    \le |\alpha_{\mathrm{flow}}-1|\;\mathbb{E}\,S_{t,N},
    \label{eq:theory-inflation-bound}
\end{align}
where equality holds when pairing each $\bm z_{t,1}^{(i)}$ with
$\hat{\bm x}_t^{a,(i)}$ is $W_2$-optimal, i.e., 
\begin{align*}
    W_2^2\big(\widehat\pi_t^a,\widetilde\pi_t^a\big)
    = \frac1N\sum_{i=1}^N
      \big\|\hat{\bm x}_t^{a,(i)}-\bm z_{t,1}^{(i)}\big\|_2^2.
\end{align*}
$\varepsilon_{\mathrm{cal},N}=0$ iff $\alpha_{\mathrm{flow}}=1$, and
$|\alpha_{\mathrm{flow}}-1|\le1$ over the search grid $[0.80,2.00]$ of
\Cref{tab:implementation-settings}.
\end{lemma}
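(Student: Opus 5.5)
The plan is to bound $W_2(\widehat\pi_t^a,\widetilde\pi_t^a)$ pathwise, for each realization of the raw endpoints, using the identity coupling that pairs $\bm z_{t,1}^{(i)}$ with $\hat{\bm x}_t^{a,(i)}$, and then take expectations. Both $\widehat\pi_t^a$ and $\widetilde\pi_t^a$ are uniform empirical measures on $N$ atoms. The map $i\mapsto i$ therefore gives an admissible coupling $\frac1N\sum_i\delta_{(\hat{\bm x}_t^{a,(i)},\bm z_{t,1}^{(i)})}$, and by the definition of $W_2$ as an infimum over couplings
\begin{align*}
    W_2^2\big(\widehat\pi_t^a,\widetilde\pi_t^a\big)
    \le \frac1N\sum_{i=1}^N\big\|\hat{\bm x}_t^{a,(i)}-\bm z_{t,1}^{(i)}\big\|_2^2 ,
\end{align*}
with equality exactly when this pairing is $W_2$-optimal.

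Next I substitute the inflation rule \cref{eq:flowda-inflation}. It gives $\hat{\bm x}_t^{a,(i)}-\bm z_{t,1}^{(i)}=(\alpha_{\mathrm{flow}}-1)(\bm z_{t,1}^{(i)}-\hat{\bar{\bm x}}_t^a)$, so the right-hand side equals $(\alpha_{\mathrm{flow}}-1)^2S_{t,N}^2$ with $S_{t,N}$ as in \cref{eq:theory-raw-spread}. Taking square roots yields the pathwise bound $W_2(\widehat\pi_t^a,\widetilde\pi_t^a)\le|\alpha_{\mathrm{flow}}-1|\,S_{t,N}$. Here $\alpha_{\mathrm{flow}}$ is treated as a fixed constant, since it is selected on validation data independent of the test cycle. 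Taking expectations then gives \cref{eq:theory-inflation-bound}, and the equality case carries over from the pathwise equality.

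For the final claims: if $\alpha_{\mathrm{flow}}=1$, the two ensembles coincide and $\varepsilon_{\mathrm{cal},N}=0$. For the converse, suppose $\alpha_{\mathrm{flow}}\neq1$ and the raw ensemble is nondegenerate ($S_{t,N}>0$ with positive probability). Inflation then changes the second moment about the common mean by the factor $\alpha_{\mathrm{flow}}^2$, so the two empirical laws differ and $W_2>0$ on that event. I expect this converse to be the one delicate point, because it needs the nondegeneracy caveat; I would state it under that proviso. The bound $|\alpha_{\mathrm{flow}}-1|\le1$ follows directly from $\alpha_{\mathrm{flow}}\in[0.80,2.00]$. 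The main obstacle is only bookkeeping: keeping the coupling argument pathwise, conditional on the endpoints, before averaging.
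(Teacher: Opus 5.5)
Your proposal is correct and matches the paper's proof. Both use the index-by-index coupling of the two uniform empirical measures together with $\hat{\bm x}_t^{a,(i)}-\bm z_{t,1}^{(i)}=(\alpha_{\mathrm{flow}}-1)(\bm z_{t,1}^{(i)}-\hat{\bar{\bm x}}_t^a)$ to get the pathwise bound $|\alpha_{\mathrm{flow}}-1|\,S_{t,N}$, and then average.

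Your nondegeneracy proviso for the converse of the ``iff'' is a genuine correction that the paper's proof omits: if $S_{t,N}=0$ a.s., then $\varepsilon_{\mathrm{cal},N}=0$ for every $\alpha_{\mathrm{flow}}$. You could settle it more directly. For $\alpha_{\mathrm{flow}}>0$ the inflation map is the gradient of the convex function $\bm z\mapsto\tfrac{\alpha_{\mathrm{flow}}}{2}\|\bm z\|_2^2+(1-\alpha_{\mathrm{flow}})(\hat{\bar{\bm x}}_t^a)^\top\bm z$. So the identity pairing is always $W_2$-optimal, the bound holds with equality, and $\varepsilon_{\mathrm{cal},N}=|\alpha_{\mathrm{flow}}-1|\,\mathbb{E}S_{t,N}$ exactly.
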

\begin{proof}
Inflation \cref{eq:flowda-inflation} leaves the ensemble mean unchanged and
rescales the anomalies about it, so
\begin{align*}
    \hat{\bm x}_t^{a,(i)}-\bm z_{t,1}^{(i)}
    = (\alpha_{\mathrm{flow}}-1)
      \big(\bm z_{t,1}^{(i)}-\hat{\bar{\bm x}}_t^a\big),
    \qquad i=1,\dots,N.
\end{align*}
By \cref{eq:theory-flowef-laws}, $\widetilde\pi_t^a$ and $\widehat\pi_t^a$ both
put mass $1/N$ on each member, so the joint measure pairing them index by
index,
\begin{align*}
    \gamma := \frac1N\sum_{i=1}^N
    \delta_{(\bm z_{t,1}^{(i)},\,\hat{\bm x}_t^{a,(i)})},
\end{align*}
belongs to the set $\Pi(\widehat\pi_t^a,\widetilde\pi_t^a)$ of couplings, i.e.,
of joint measures with marginals $\widehat\pi_t^a$ and $\widetilde\pi_t^a$.
Since
\begin{align*}
    W_2^2(\mu,\nu)
    = \inf_{\gamma'\in\Pi(\mu,\nu)}
      \int\|\bm x-\bm x'\|_2^2\,\gamma'(d\bm x,d\bm x'),
\end{align*}
we evaluate at $\gamma$ and use \cref{eq:theory-raw-spread}, and then obtain
\begin{align*}
    W_2^2\big(\widehat\pi_t^a,\widetilde\pi_t^a\big)
    &\le \frac1N\sum_{i=1}^N
       \big\|\hat{\bm x}_t^{a,(i)}-\bm z_{t,1}^{(i)}\big\|_2^2 \\
    &= (\alpha_{\mathrm{flow}}-1)^2\,\frac1N\sum_{i=1}^N
       \big\|\bm z_{t,1}^{(i)}-\hat{\bar{\bm x}}_t^a\big\|_2^2
     = (\alpha_{\mathrm{flow}}-1)^2 S_{t,N}^2 .
\end{align*}
\end{proof}

\subsection{Main results}
\label{app:theory-main}

\Cref{thm:flowef} below is the formal version of
\Cref{thm:convergence-informal} in the main text, and its bound
\cref{eq:theory-full-bound} restates \cref{eq:convergence-overview}.
\Cref{tab:theory-dictionary} collects the error terms of that bound: each
$\varepsilon$ term is the contribution underbraced in
\cref{eq:theory-full-bound}, together with the step of \Cref{alg:flowda} that
produces it and the condition under which it vanishes.

\begin{theorem}[FlowEF convergence bound]
\label{thm:flowef}
Let \Cref{asm:stationary,asm:linear-obs,asm:baseline,asm:lipschitz-kernel,%
asm:feature-concentration,asm:tube} hold with $\beta_t=1$, and let the induced
velocity field $\widetilde{\bm v}_{\hat\theta,t}$ of
\cref{eq:theory-induced-velocity} have population excess CFM loss
$\varepsilon_{\mathrm{vel}}$ of \cref{eq:theory-vel-loss}. Run
\Cref{alg:flowda} with
\begin{enumerate}
\item[(i)] the forecast ensemble members as inference source 
as in \cref{eq:ordered-deploy-source}, i.e., 
$q_{t,0}^{\mathrm{inf};N}=\widehat{\pi}_t^{B,f}$ of
\cref{eq:theory-inference-source};
\item[(ii)] $K$ explicit Euler steps as in \cref{eq:ode-euler}; 
\item[(iii)] calibration inflation with factor $\alpha_{\mathrm{flow}}$
\cref{eq:flowda-inflation},
\end{enumerate}
and let $\widetilde{\pi}_t^a$ and $\widehat{\pi}_t^a$ of
\cref{eq:theory-flowef-laws} be the raw transport law and reported
FlowEF analysis law, and $\pi_t^a$ of
\cref{eq:theory-forecast-analysis-laws} the exact analysis law. Then
\begin{align}
    \mathbb{E}\, W_2\big(\widehat{\pi}_t^a,\pi_t^a\big)
    \;\le\;
    &\underbrace{B_{c,t}}_{\varepsilon_{\mathrm{bias}}}
    + \underbrace{L_{K,t}\big(\sigma_{c,t}\sqrt{D_{c,t}/N}+b_{c,N,t}\big)}_{\varepsilon_{\mathrm{feat}}}
    + \underbrace{C_{\mathrm{FM}}\sqrt{\varepsilon_{\mathrm{vel}}}}_{\varepsilon_{\mathrm{fm}}} \notag\\
    &+ \underbrace{C_{\mathrm{Euler}}K^{-1}}_{\varepsilon_{\mathrm{ode}}}
    + \underbrace{e^{\widetilde{L}_{v,z,t}}\big(O(r_{f,N})+\delta_{\mathrm{gauss},t}\big)}_{\varepsilon_{\mathrm{src}}}
    + \underbrace{|\alpha_{\mathrm{flow}}-1|\,\mathbb{E}S_{t,N}}_{\varepsilon_{\mathrm{cal},N}} .
    \label{eq:theory-full-bound}
\end{align}
The six terms are, in order: the conditioning bias (\Cref{lem:bias}), the
static-feature estimation error (\Cref{lem:optimal-rate}), the
flow-matching error (\Cref{lem:numerical}), the Euler discretization error (\Cref{lem:numerical}), 
the source distribution mismatch between the forecast
ensemble and the localized Gaussian training source (\Cref{lem:source}),
and the inflation discrepancy (\Cref{lem:ensemble}).
\end{theorem}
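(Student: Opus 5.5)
The plan is a chain of triangle inequalities in $W_2$ through intermediate laws, each controlled by one supporting lemma. With the inference source $q_{t,0}^{\mathrm{inf};N}=\widehat\pi_t^{B,f}$, the raw endpoint law is $\widetilde\pi_t^a=\widehat\Psi^{\mathcal C_t^N}_{K\#}q_{t,0}^{\mathrm{inf};N}$, by \Cref{lem:induced}(iii). Indeed, the Euler iterates of \Cref{alg:flowda} are exactly Euler steps of the induced field, and each forecast member is pushed once, so the empirical measure of the endpoints is the pushforward of the empirical source. I would then write
\begin{align*}
W_2(\widehat\pi_t^a,\pi_t^a)
\le{}& W_2(\widehat\pi_t^a,\widetilde\pi_t^a)
+W_2\big(\widehat\Psi^{\mathcal C_t^N}_{K\#}q_{t,0}^{\mathrm{inf};N},\widehat\Phi^{\mathcal C_t^N}_{0,1\#}q_{t,0}^{\mathrm{inf};N}\big)\\
&+W_2\big(\widehat\Phi^{\mathcal C_t^N}_{0,1\#}q_{t,0}^{\mathrm{inf};N},\widehat\Phi^{\mathcal C_t^N}_{0,1\#}q_{t,0}^{\mathrm{tr};\mathcal C_t^N}\big)
+W_2\big(\widehat\Phi^{\mathcal C_t^N}_{0,1\#}q_{t,0}^{\mathrm{tr};\mathcal C_t^N},\pi_t^N\big)
+W_2(\pi_t^N,\pi_t^a),
\end{align*}
and take expectations term by term.

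\begin{itemize}
\item \textbf{Inflation term.} The first term is bounded in expectation by $|\alpha_{\mathrm{flow}}-1|\,\mathbb E S_{t,N}$, using \Cref{lem:ensemble}.
\item \textbf{Euler term.} The second term is at most $C_{\mathrm{Euler}}K^{-1}$ pointwise in the record, using \Cref{lem:numerical}(ii) with $\nu=q_{t,0}^{\mathrm{inf};N}$. This is admissible because \Cref{asm:tube} places the inference source and the Euler iterates inside $\mathcal N_t$.
\item \textbf{Source term.} The third term is precisely $\varepsilon_{\mathrm{src}}$, so \Cref{lem:source} (under \Cref{asm:tube,asm:baseline}) gives $e^{\widetilde L_{v,z,t}}(O(r_{f,N})+\delta_{\mathrm{gauss},t})$.
\item \textbf{Flow-matching and conditioning terms.} For the last two terms I would use Jensen, $\mathbb E W_2\le(\mathbb E W_2^2)^{1/2}$. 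Then \Cref{lem:numerical}(i) gives $C_{\mathrm{FM}}\sqrt{\varepsilon_{\mathrm{vel}}}$, and \Cref{lem:optimal-rate} (with $\beta_t=1$, under \Cref{asm:lipschitz-kernel,asm:feature-concentration}) gives $B_{c,t}+L_{K,t}(\sigma_{c,t}\sqrt{D_{c,t}/N}+b_{c,N,t})$.
\end{itemize}

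Summing these bounds yields \cref{eq:theory-full-bound}.

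Two bookkeeping points need care. First, \Cref{asm:stationary} is what allows a single $\hat\theta$ and the population quantity $\varepsilon_{\mathrm{vel}}$, defined under the common law $\mathcal P$, to be used at the test cycle $t$. Second, \Cref{lem:optimal-rate} requires the kernel representation $\pi_t^N=K_t(\bm c_t^{\mathrm{static}},\bm o_t)$. This holds by the definition of $K_t$ in \cref{eq:theory-conditional-law-kernel} and the definition of $\pi_t^N$ in \cref{eq:theory-target-laws}.

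The main obstacle is the interaction between the random record and the pushforwards. The record $\mathcal C_t^N$ is itself a function of the forecast ensemble, which is also the inference source. So in the third and second terms the flow map and the source are not independent, and $\widetilde\pi_t^a$ is an empirical measure rather than the conditional law appearing in \Cref{lem:optimal}. My approach is to keep every estimate pathwise, for a fixed realization of $(\bm X_t^f,\bm X_t^a,\bm y_t)$. The Lipschitz-pushforward bound and the Euler bound are deterministic given the record, so they apply to the empirical measure directly, and expectations are taken only at the end. The remaining delicate point is the expectation inside $\delta_{\mathrm{gauss},t}$: \Cref{lem:source} already absorbs this, so I would simply inherit that lemma's conventions rather than re-derive it.
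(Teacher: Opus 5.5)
Your proposal is correct and follows the paper's proof essentially line for line. It uses the same five-term triangle chain through $\widehat\Psi^{\mathcal C_t^N}_{K\#}q_{t,0}^{\mathrm{inf};N}$, $\widehat\Phi^{\mathcal C_t^N}_{0,1\#}q_{t,0}^{\mathrm{inf};N}$, $\widehat\Phi^{\mathcal C_t^N}_{0,1\#}q_{t,0}^{\mathrm{tr};\mathcal C_t^N}$ and $\pi_t^N$, bounds the terms with \Cref{lem:ensemble}, \Cref{lem:numerical}(ii), \Cref{lem:source}, \Cref{lem:numerical}(i) and \Cref{lem:optimal-rate} respectively, and applies $\mathbb E W_2\le(\mathbb E W_2^2)^{1/2}$ to the last two terms, exactly as the paper does.
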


\begin{table}[H]
\centering
\caption{Error terms in \Cref{thm:flowef}.}
\label{tab:theory-dictionary}
\small
\setlength{\tabcolsep}{5pt}
\renewcommand{\arraystretch}{1.15}
\resizebox{\linewidth}{!}{%
\begin{tabular}{lllll}
\toprule
\textbf{Term} & \textbf{Contribution in the bound} & \textbf{Meaning / source} & \textbf{Vanishes when} & \textbf{Reference} \\
\midrule
$\varepsilon_{\mathrm{bias}}$ & $B_{c,t}$ & static conditioning in \cref{eq:static-condition} & $\mathcal{C}_t^\infty$ sufficient & \Cref{lem:bias} \\
$\varepsilon_{\mathrm{feat}}$ & $L_{K,t}(\sigma_{c,t}\sqrt{D_{c,t}/N}+b_{c,N,t})$ & finite-ensemble conditioning channels & $N\to\infty$, baseline exact & \Cref{lem:optimal-rate} \\
$\varepsilon_{\mathrm{fm}}$ & $C_{\mathrm{FM}}\sqrt{\varepsilon_{\mathrm{vel}}}$ & learned velocity / CFM loss & $\varepsilon_{\mathrm{vel}}\to0$ & \Cref{lem:numerical} \\
$\varepsilon_{\mathrm{ode}}$ & $C_{\mathrm{Euler}}K^{-1}$ & $K$-step Euler integration in \cref{eq:ode-euler} & $K\to\infty$ & \Cref{lem:numerical} \\
$\varepsilon_{\mathrm{src}}$ & $e^{\widetilde L_{v,z,t}}(O(r_{f,N})+\delta_{\mathrm{gauss},t})$ & training source vs. inference source & sources matched & \Cref{lem:source} \\
$\varepsilon_{\mathrm{cal},N}$ & $|\alpha_{\mathrm{flow}}-1|\,\mathbb{E}S_{t,N}$ & output inflation in \cref{eq:flowda-inflation} & $\alpha_{\mathrm{flow}}=1$ & \Cref{lem:ensemble} \\
\midrule
\multicolumn{5}{l}{\textbf{Subterms for the error terms}} \\
\midrule
$B_{c,t}$ & $\{\mathbb{E}W_2^2(\pi_t^\infty,\pi_t^a)\}^{1/2}$ & \multicolumn{3}{l}{conditioning bias} \\
$L_{K,t}$ & $W_2(K_t(\bm c,\bm o),K_t(\bm c',\bm o))\le L_{K,t}\|\bm c-\bm c'\|_c$ & \multicolumn{3}{l}{endpoint law sensitivity} \\
$\sigma_{c,t}$ & $\|\bm c_t^{\mathrm{static}}-\mathbb{E}\bm c_t^{\mathrm{static}}\|_{L^2}\le\sigma_{c,t}\sqrt{D_{c,t}/N}$ & \multicolumn{3}{l}{feature fluctuation scale} \\
$b_{c,N,t}$ & $\|\mathbb{E}\bm c_t^{\mathrm{static}}-\bm c_t^{\mathrm{static},\infty}\|_c$ & \multicolumn{3}{l}{systematic feature bias} \\
$D_{c,t}$ & $\text{effective feature dimension}$ & \multicolumn{3}{l}{complexity of the estimated conditioning summary} \\
$C_{\mathrm{FM}}$ & $e^{\widetilde L_{v,z,t}}$ & \multicolumn{3}{l}{flow-matching stability constant} \\
$C_{\mathrm{Euler}}$ & $\frac{M_{2,t}}{2\widetilde L_{v,z,t}}(e^{\widetilde L_{v,z,t}}-1)$ & \multicolumn{3}{l}{Euler discretization constant} \\
$r_{f,N}$ & $\mathbb{E}W_2(\widehat\pi_t^{B,f},\pi_t^f)=O(r_{f,N})$ & \multicolumn{3}{l}{baseline forecast-law rate} \\
$\delta_{\mathrm{gauss},t}$ & $\mathbb{E}W_2(\pi_t^f,q_{t,0}^{\mathrm{tr};\mathcal{C}_t^N})$ & \multicolumn{3}{l}{training--inference source mismatch} \\
$S_{t,N}$ & $\{N^{-1}\sum_i\|\bm z_{t,1}^{(i)}-\hat{\bar{\bm x}}_t^a\|_2^2\}^{1/2}$ & \multicolumn{3}{l}{raw endpoint spread} \\
\bottomrule
\end{tabular}%
}
\end{table}

\begin{proof}
By triangle inequality, we have 
\begin{align}
    W_2\big(\widehat{\pi}_t^a,\pi_t^a\big)
    &\le \underbrace{W_2\big(\widehat{\pi}_t^a,\widetilde{\pi}_t^a\big)}_{\text{(i) inflation}}
    + \underbrace{W_2\big(\widetilde{\pi}_t^a,
      \widehat\Phi^{\mathcal{C}_t^N}_{0,1\#}q_{t,0}^{\mathrm{inf};N}\big)}_{\text{(ii) Euler}} \notag\\
    &\quad+ \underbrace{W_2\big(\widehat\Phi^{\mathcal{C}_t^N}_{0,1\#}q_{t,0}^{\mathrm{inf};N},
      \widehat\Phi^{\mathcal{C}_t^N}_{0,1\#}q_{t,0}^{\mathrm{tr};\mathcal{C}_t^N}\big)}_{\text{(iii) source}}
    + \underbrace{W_2\big(\widehat\Phi^{\mathcal{C}_t^N}_{0,1\#}q_{t,0}^{\mathrm{tr};\mathcal{C}_t^N},
      \pi_t^N\big)}_{\text{(iv) flow matching}}
    + \underbrace{W_2\big(\pi_t^N,\pi_t^a\big)}_{\text{(v) conditioning}}.
    \label{eq:theory-chain}
\end{align}
We then bound the five error terms as follows.
\begin{enumerate}
\item[(i)] \Cref{lem:ensemble} implies the pathwise bound
$W_2(\widehat{\pi}_t^a,\widetilde{\pi}_t^a)
\le|\alpha_{\mathrm{flow}}-1|S_{t,N}$, whose expectation is
$\varepsilon_{\mathrm{cal},N}$.
\item[(ii)] By \cref{eq:ordered-deploy-source} the $N$ forecast members are
transported exactly once each, so
$\widetilde{\pi}_t^a=\widehat\Psi^{\mathcal{C}_t^N}_{K\#}
q_{t,0}^{\mathrm{inf};N}$ with no Monte Carlo error between the source and
the reported members. Then, with \cref{eq:theory-euler-bound} with
$\nu=q_{t,0}^{\mathrm{inf};N}$, we have
\begin{align*}
    W_2\big(\widetilde{\pi}_t^a,
    \widehat\Phi^{\mathcal{C}_t^N}_{0,1\#}q_{t,0}^{\mathrm{inf};N}\big)
    \le C_{\mathrm{Euler}}K^{-1}
    = \varepsilon_{\mathrm{ode}} .
\end{align*}
\item[(iii)] \Cref{lem:source} implies, in expectation,
$\varepsilon_{\mathrm{src}}
\le e^{\widetilde{L}_{v,z,t}}(O(r_{f,N})+\delta_{\mathrm{gauss},t})$.
\item[(iv)] \cref{eq:theory-vel-bound} implies 
\begin{align*}
    \Big\{\mathbb{E}\,W_2^2\big(
    \widehat\Phi^{\mathcal{C}_t^N}_{0,1\#}q_{t,0}^{\mathrm{tr};\mathcal{C}_t^N},
    \pi_t^N\big)\Big\}^{1/2}
    \le C_{\mathrm{FM}}\sqrt{\varepsilon_{\mathrm{vel}}}
    = \varepsilon_{\mathrm{fm}}.
\end{align*}
\item[(v)] \cref{eq:theory-optimal-bound} implies 
\begin{align*}
    \big\{\mathbb{E}\,W_2^2\big(\pi_t^N,\pi_t^a\big)\big\}^{1/2}
    \le B_{c,t}+L_{K,t}\Big(\sigma_{c,t}\sqrt{D_{c,t}/N}+b_{c,N,t}\Big)
    = \varepsilon_{\mathrm{bias}}+\varepsilon_{\mathrm{feat}} .
\end{align*}
\end{enumerate}
Then we take expectations in \cref{eq:theory-chain}, 
bound each term by the above, 
and apply $\mathbb{E}\,W_2\le\{\mathbb{E}\,W_2^2\}^{1/2}$ to (iv) and (v), 
to obtain the results in \cref{eq:theory-full-bound}.
\end{proof}

\begin{corollary}[Conditioning-limited bound]
\label{cor:matched}
Under the assumptions in \Cref{thm:flowef} and, in addition,
\begin{enumerate}
\item[(i)] zero excess CFM loss, $\varepsilon_{\mathrm{vel}}=0$;
\item[(ii)] exact integration of \cref{eq:flowda-analysis-ode};
\item[(iii)] no inflation, i.e., $\alpha_{\mathrm{flow}}=1$; 
\item[(iv)] same sources, i.e.\ inference draws from
\cref{eq:locgauss-dist} rather than \cref{eq:ordered-deploy-source}, so that
$q_{t,0}^{\mathrm{inf};N}=q_{t,0}^{\mathrm{tr};\mathcal{C}_t^N}$.
\end{enumerate}
Define 
$q_{\theta^\star,t}^N(d\bm{x}):=\mathbb{P}(\bm{z}_{t,1}\in d\bm{x}\mid
\mathcal{C}_t^N)$ for the resulting endpoint law, then we have
\begin{align}
    \mathbb{E}\, W_2\big(q_{\theta^\star,t}^N,\pi_t^a\big)
    \le \underbrace{B_{c,t}}_{\varepsilon_{\mathrm{bias}}}
    + \underbrace{L_{K,t}\Big(\sigma_{c,t}\sqrt{D_{c,t}/N}
      +b_{c,N,t}\Big)}_{\varepsilon_{\mathrm{feat}}} .
    \label{eq:theory-special-bound}
\end{align}
If moreover $\pi_t^\infty=\pi_t^a$ a.s.\ and the baseline is consistent, so
that $B_{c,t}=0$ and $b_{c,N,t}\to0$, and we have 
\begin{align*}
    \mathbb{E}\,W_2\big(q_{\theta^\star,t}^N,\pi_t^a\big)
    = O\Big(L_{K,t}\sigma_{c,t}\sqrt{D_{c,t}/N}\Big).
\end{align*}
\end{corollary}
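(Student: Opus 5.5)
We will specialize the five-term chain \cref{eq:theory-chain} used in the proof of \Cref{thm:flowef} and show that, under (i)--(iv), the first four links collapse to zero, leaving only the conditioning link (v). Concretely, write $\widetilde\pi_t^a$ for the raw endpoint law. By (iii), \Cref{lem:ensemble} gives $\varepsilon_{\mathrm{cal},N}=|\alpha_{\mathrm{flow}}-1|\,\mathbb{E}S_{t,N}=0$, so the reported law coincides with the raw transport law. By (ii) the Euler map $\widehat\Psi_K^{\mathcal C_t^N}$ is replaced by the exact flow $\widehat\Phi_{0,1}^{\mathcal C_t^N}$, so link (ii) of the chain vanishes identically. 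By (iv) the two sources in link (iii) coincide, $q_{t,0}^{\mathrm{inf};N}=q_{t,0}^{\mathrm{tr};\mathcal C_t^N}$, hence $\varepsilon_{\mathrm{src}}=0$ directly from its definition \cref{eq:theory-source-mismatch}; we do not pass through the bound of \Cref{lem:source}.

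For link (iv) we invoke \Cref{lem:optimal}. Its hypotheses (i)--(iii) are exactly the present (i), (iv) and (ii), so the conditional endpoint law is $q_{\theta^\star,t}^N=\mathbb{P}(\bm z_{t,1}\in\cdot\mid\mathcal C_t^N)=\pi_t^N$. Equivalently, \cref{eq:theory-vel-bound} with $\varepsilon_{\mathrm{vel}}=0$ gives $\mathbb{E}W_2^2(\widehat\Phi^{\mathcal C_t^N}_{0,1\#}q_{t,0}^{\mathrm{tr};\mathcal C_t^N},\pi_t^N)=0$.

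The remaining link is then bounded by \Cref{lem:optimal-rate}. Combined with $\mathbb{E}W_2\le\{\mathbb{E}W_2^2\}^{1/2}$, this yields \cref{eq:theory-special-bound}.

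One point needs care: $q_{\theta^\star,t}^N$ is a conditional law, not an $N$-point empirical measure, so no $O(r_{f,N})$ Monte Carlo term should reappear. We therefore work at the level of laws. The target object is the pushforward $\widehat\Phi_{0,1\#}^{\mathcal C_t^N}q_{t,0}^{\mathrm{tr};\mathcal C_t^N}$, which equals $q_{\theta^\star,t}^N$ because under (iv) the inference particle is a draw from the training source given $\mathcal C_t^N$.

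For the final statement, set $B_{c,t}=0$ by \Cref{lem:bias}, since $\pi_t^\infty=\pi_t^a$ a.s. Baseline consistency makes $b_{c,N,t}\to0$, as discussed after \Cref{asm:feature-concentration}. What survives is the $L_{K,t}\sigma_{c,t}\sqrt{D_{c,t}/N}$ term. To state this as a clean $O(\cdot)$, one should either assume $b_{c,N,t}=O(\sigma_{c,t}\sqrt{D_{c,t}/N})$ or read the $O$ as holding up to the vanishing bias.

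The main obstacle is the measure-theoretic identification in the second step. \Cref{lem:optimal} gives equality of laws only $\rho_\tau$-a.e. along the interpolant, so the learned and ideal flows must be shown to coincide on the support of the training source. This requires uniqueness of ODE solutions, which follows from the Lipschitz bounds of \Cref{asm:tube} and \Cref{asm:ideal-stability}. Once that is in place, the rest is bookkeeping on \cref{eq:theory-chain}.
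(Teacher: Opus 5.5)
Your proposal is correct and follows the paper's proof essentially step for step. The calibration, Euler, and source links of \cref{eq:theory-chain} vanish by (iii), (ii), and (iv); \Cref{lem:optimal} identifies $q_{\theta^\star,t}^N$ with $\pi_t^N$; and \Cref{lem:optimal-rate} together with $\mathbb{E}\,W_2\le\{\mathbb{E}\,W_2^2\}^{1/2}$ bounds the remaining conditioning link.

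Your additional remarks are valid refinements of the paper's terse argument. These are reading $\varepsilon_{\mathrm{src}}=0$ off its definition rather than from \Cref{lem:source}, noting that the final $O(\cdot)$ claim really needs $b_{c,N,t}=O(\sigma_{c,t}\sqrt{D_{c,t}/N})$, and flagging the a.e.\ identification of the learned and ideal flows. That last step, however, needs only the Lipschitz bound of \Cref{asm:tube} on the learned field, via uniqueness for its continuity equation. It does not need \Cref{asm:ideal-stability}, which is not among the corollary's hypotheses.
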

\begin{proof}
Assumptions (i)--(iv) remove the first four error terms of
\cref{eq:theory-chain}:
\begin{align*}
    \varepsilon_{\mathrm{cal},N}
    =|\alpha_{\mathrm{flow}}-1|\,\mathbb{E}S_{t,N}=0,
    \qquad
    \varepsilon_{\mathrm{ode}}=0,
    \qquad
    \varepsilon_{\mathrm{src}}=0,
    \qquad
    \varepsilon_{\mathrm{fm}}
    =C_{\mathrm{FM}}\sqrt{\varepsilon_{\mathrm{vel}}}=0.
\end{align*}
By (i), (ii) and (iv),
\Cref{lem:optimal} identifies the endpoint law with $\pi_t^N$, i.e.\
$q_{\theta^\star,t}^N=\pi_t^N$, so only term (v) of \cref{eq:theory-chain}
remains nonzero and \cref{eq:theory-optimal-bound} gives
\cref{eq:theory-special-bound}. 
\end{proof}

\end{document}